\documentclass[letterpaper]{article} 
\usepackage{aaai23}  
\usepackage{times}  
\usepackage{helvet}  
\usepackage{courier}  
\usepackage[hyphens]{url}  
\usepackage{graphicx} 
\urlstyle{rm} 
\usepackage{natbib}  
\usepackage{caption} 
\frenchspacing  
\setlength{\pdfpagewidth}{8.5in} 
\setlength{\pdfpageheight}{11in} 
%
\usepackage{algorithm}

%
\usepackage{newfloat}
\usepackage{listings}
\DeclareCaptionStyle{ruled}{labelfont=normalfont,labelsep=colon,strut=off} 
\lstset{%
	basicstyle={\footnotesize\ttfamily},
	numbers=left,numberstyle=\footnotesize,xleftmargin=2em,
	aboveskip=0pt,belowskip=0pt,%
	showstringspaces=false,tabsize=2,breaklines=true}
\floatstyle{ruled}
\newfloat{listing}{tb}{lst}{}
\floatname{listing}{Listing}
%
\pdfinfo{
/TemplateVersion (2023.1)
}

\setcounter{secnumdepth}{2} 

%


\title{Optimizing Learning Rate Schedules \\ for Iterative Pruning of Deep Neural Networks}

\author {
	Shiyu Liu,
	Rohan Ghosh, 
	John Tan Chong Min,
	Mehul Motani
}
\affiliations {
	Department of Electrical and Computer Engineering \\
	School of Engineering, National University of Singapore \\
	shiyu\_liu@u.nus.edu, rghosh92@gmail.com, e0441892@u.nus.edu, motani@nus.edu.sg
}

\usepackage{graphicx}

\usepackage{tikz}
\usepackage{xcolor}
\usepackage{comment}
\usepackage{amsthm, amssymb} 
\usepackage{color}

\theoremstyle{definition}
\newtheorem{definition}{Definition}
\newtheorem{theorem}{Theorem}
\newtheorem{corollary}{Corollary}
\newtheorem{remark}{Remark}

\usepackage[accsupp]{axessibility}

\usepackage{graphicx}
\usepackage{amsmath}
\usepackage{amssymb}
\usepackage{booktabs}
\usepackage{multirow}
\DeclareMathOperator\erfi{erf^{-1}}

\usepackage{algorithm}
\usepackage[noend]{algpseudocode}
\usepackage{amsmath}
\usepackage{amsfonts}
\usepackage{amssymb}

\makeatletter
\@namedef{ver@everyshi.sty}{}
\makeatother
\usepackage{tikz}

\usepackage{scalerel}
\usepackage{enumitem}
\usepackage{graphicx}
\graphicspath{{img/}}

\usepackage[export]{adjustbox}

\usepackage{lscape}

\usepackage{xcolor}
\usepackage{amsmath}
\usepackage{booktabs}

\usepackage{tikz}
\usetikzlibrary{backgrounds}
\usetikzlibrary[
positioning,
fit,
arrows.meta,
calc,
patterns,
shapes.arrows,
]
\usetikzlibrary{math, patterns}

\usepackage{newfloat}
\usepackage{listings}
\lstset{%
	basicstyle={\footnotesize\ttfamily},
	numbers=left,numberstyle=\footnotesize,xleftmargin=2em,
	aboveskip=0pt,belowskip=0pt,%
	showstringspaces=false,tabsize=2,breaklines=true}

\makeatletter
\newcommand*{\rom}[1]{\expandafter\@slowromancap\romannumeral #1@}
\makeatother

\definecolor{blue}{rgb}{0.4, 0.4, 1}
\definecolor{red}{rgb}{1, 0.2, 0.2}
\usepackage{balance}
\usepackage{breqn}

\begin{document}

\maketitle

\begin{abstract}
The importance of learning rate (LR) schedules on network pruning has been observed in a few recent works. As an example, {\it Frankle and Carbin (2019)} highlighted that winning tickets (i.e., accuracy preserving subnetworks) can not be found without applying a LR warmup schedule and {\it Renda, Frankle and Carbin (2020)} demonstrated that rewinding the LR to its initial state at the end of each pruning cycle improves performance. In this paper, we go one step further by first providing a theoretical justification for the surprising effect of LR schedules. Next, we propose a LR schedule for network pruning called SILO, which stands for S-shaped Improved Learning rate Optimization. The advantages of SILO over existing state-of-the-art (SOTA) LR schedules are two-fold: (i) SILO has a strong theoretical motivation and dynamically adjusts the LR during pruning to improve generalization. Specifically, SILO increases the LR upper bound ($\texttt{max\_lr}$) in an S-shape. This leads to an improvement of 2\% - 4\% in extensive experiments with various types of networks (e.g., Vision Transformers, ResNet) on popular datasets such as ImageNet, CIFAR-10/100. (ii) In addition to the strong theoretical motivation, SILO is empirically optimal in the sense of matching an Oracle, which exhaustively searches for the optimal value of $\texttt{max\_lr}$ via grid search. We find that SILO is able to precisely adjust the value of $\texttt{max\_lr}$ to be within the Oracle optimized interval, resulting in performance competitive with the Oracle with significantly lower complexity.
\end{abstract}

\section{Introduction}
\vspace{-1mm}
Network pruning is the process of simplifying neural networks by pruning weights, filters or neurons. \cite{lecun1990optimal,han2015learning}. Several state-of-the-art pruning methods \cite{renda2020comparing,frankle2018lottery} have demonstrated that a significant quantity of parameters can be removed without sacrificing accuracy. This greatly reduces the resource demand of neural networks, such as storage requirements and energy consumption \cite{He_2020_CVPR,Wang_2021_CVPR}. 

The inspiring performance of pruning methods hinges on a key factor - Learning Rate (LR). Specifically, \cite{frankle2018lottery} proposed the Lottery Ticket Hypothesis and demonstrated that the winning tickets (i.e., the pruned network that can train in isolation to full accuracy) cannot be found without applying a LR warmup schedule. In a follow-up work, \cite{renda2020comparing} proposed LR rewinding which rewinds the LR schedule to its initial state during iterative pruning and demonstrated that it can outperform standard fine-tuning. 
In summary, the results in both works suggest that, besides the pruning metric, LR also plays an important role in network pruning and could be another key to improving the pruning performance.

In this paper, we take existing studies one step further and aim to optimize the choice of LR for iterative network pruning. We explore a new perspective on adapting the LR schedule to improve the iterative pruning performance. Our contributions to network pruning are as follows.

\begin{enumerate}[noitemsep , leftmargin=5mm, topsep=0pt]
	\item {\bf Motivation and Theoretical Study. }We explore the optimal choice of LR during pruning and find that the distribution of weight gradients tends to become narrower during pruning, suggesting that a larger value of LR should be used to retrain the pruned network. This finding is further verified by our theoretical development. More importantly, our theoretical results suggest that the optimal increasing trajectory of LR should follow an S-shape.
	
	\item {\bf Proposed SILO. }We propose a novel LR schedule for network pruning called SILO, which stands for S-Shaped Improved Learning rate Optimization. Motivated by our theoretical development, SILO precisely adjusts the LR by increasing the LR upper bound ($\texttt{max\_lr}$) in an S-shape. We highlight that SILO is method agnostic and works well with numerous pruning methods.
	
	\item {\bf Experiments. }We compare SILO to four LR schedule benchmarks via both classical and state-of-the-art (SOTA) pruning methods. We observe that SILO outperforms LR schedule benchmarks, leading to an improvement of 2\% - 4\% in extensive experiments with SOTA networks (e.g., Vision Transformer \cite{dosovitskiy2020image}, ResNet \cite{resnet18} \& VGG \cite{vgg16}) on large-scale datasets such as ImageNet \cite{deng2009imagenet} and popular datasets such as CIFAR-10/100 \cite{cifar10}.
	
	\item {\bf Comparison to Oracle. }We examine the optimality of SILO by comparing it to an Oracle which exhaustively searches for the optimal value of $\texttt{max\_lr}$ via grid search. We find that SILO is able to precisely adjust $\texttt{max\_lr}$ to be within the Oracle's optimized $\texttt{max\_lr}$ interval at each pruning cycle, resulting in performance competitive with the Oracle, but with significantly lower complexity.
\end{enumerate}

\section{Background}

\subsection{Prior Works on Network Pruning}
\label{sec2.1}

Network pruning is an established idea dating back to 1990 \cite{lecun1990optimal}. The motivation is that networks tend to be overparameterized and redundant weights can be removed with a negligible loss in accuracy
\cite{arora2018optimization,allenzhu2018convergence,denil2013predicting}. Given a trained network, one {\bf pruning cycle} consists of three steps as follows.
\begin{enumerate}[noitemsep,leftmargin=5mm, topsep=1pt]
	\item Prune the network according to certain heuristics.
	\item Freeze pruned parameters as zero.
	\item Retrain the pruned network to recover the accuracy.
\end{enumerate}
Repeating the pruning cycle multiple times until the target sparsity or accuracy is met is known as {\bf iterative pruning}. Doing so often results in better performance than {\bf one-shot pruning} (i.e., perform only one pruning cycle) \cite{han2015learning,frankle2018lottery,li2016pruning}. There are two types of network pruning - unstructured pruning and structured pruning - which will be discussed in detail below.

{\bf Unstructured Pruning} removes individual weights according to certain heuristics such as magnitude \cite{han2015learning} or  gradient \cite{hassibi1993second,lee2018snip,xiao2019autoprune,theis2018faster}. Examples are \cite{lecun1990optimal}, which performed pruning based on the Hessian Matrix, and \cite{theis2018faster}, which used Fisher information to approximate the Hessian Matrix.
Similarly, \cite{han2015learning} removed weights with the smallest magnitude and this approach was further incorporated with the three-stage iterative pruning pipeline in \cite{han2015deep}.

{\bf Structured Pruning} involves pruning weights in groups, neurons, channels or filters \cite{He2019median,luo2017thinet,tan2020dropnet,wang2020dynamic,lin2020hrank}. Examples are \cite{hu2016network}, which removed neurons with high average zero output ratio, and \cite{li2016pruning}, which pruned neurons with the lowest absolute summation values of incoming weights. More recently, \cite{yu2018nisp} proposed the neuron importance score propagation algorithm to evaluate the importance of network structures. \cite{molchanov2019importance} used Taylor expansions to approximate a filter's contribution to the final loss and \cite{Wang_2020_CVPR} optimized the neural network architecture, pruning policy, and quantization policy together in a joint manner.

{\bf Other Works. } In addition to works mentioned above, several other works also share some deeper insights in network pruning \cite{liu2018rethinking,zhu2017prune,liu2018darts,wang2020pruning}. For example, \cite{liu2018darts} demonstrated that training-from-scratch on the right sparse architecture yields better results than pruning from pre-trained models. Similarly, \cite{wang2020pruning} suggested that the fully-trained network could reduce the search space for the pruned structure. More recently, \cite{luo2020neural} addressed the issue of pruning residual connections with limited data and \cite{ye2020good} theoretically proved the existence of small subnetworks with lower loss than the unpruned network. One milestone paper \cite{frankle2018lottery} pointed out that re-initializing with the original parameters (known as weight rewinding) plays an important role in pruning and helps to further prune the network with negligible loss in accuracy. Some follow-on works \cite{zhou2019deconstructing,renda2020comparing,malach2020proving} investigated this phenomenon more precisely and applied this method in other fields (e.g., transfer learning \cite{mehta2019sparse} and natural language processing \cite{yu2019playing}).

\subsection{The Important Role of Learning Rate}
\label{sec2.2}
Several recent works \cite{renda2020comparing,frankle2018lottery} have noticed the important role of LR in network pruning. For example, {\it Frankle and Carbin}  \cite{frankle2018lottery} demonstrated that training VGG-19 \cite{vgg16} with a LR warmup schedule (i.e., increase LR to 1$\texttt{e}$-1 and decrease it to 1$\texttt{e}$-3) and a constant LR of 1$\texttt{e}$-2 results in comparable accuracy for the unpruned network. However, as the network is iteratively pruned, the LR warmup schedule leads to a higher accuracy (see Fig.7 in \cite{frankle2018lottery}).   
In a follow-up work,  {\it Renda, Frankle and Carbin} \cite{renda2020comparing} further investigated this phenomenon and proposed a retraining technique called LR rewinding which can always outperform the standard retraining technique called fine-tuning \cite{han2015learning}. The difference is that fine-tuning trains the unpruned network with a LR warmup schedule, and retrains the pruned network with a constant LR (i.e., the final LR of the schedule) in subsequent pruning cycles \cite{liu2018rethinking}. LR rewinding retrains the pruned network by rewinding the LR warmup schedule to its initial state, namely that LR rewinding uses the same schedule for every pruning cycle. As an example, they demonstrated that retraining the pruned ResNet-50 using LR rewinding yields higher accuracy than fine-tuning (see Figs.1 \& 2 in \cite{renda2020comparing}). In \cite{shiwei2021}, the authors also suggest that when pruning happens during the training phase with a large LR, models can easily recover from pruning than using a smaller LR.
Overall, The results in these works suggest that, besides the pruning metric, LR also plays an important role in network pruning and could be another key to improving network pruning.

\begin{figure*}[!t]
	\hspace{0mm}\begin{minipage}{0.5\textwidth}
		\begin{center}
		\includegraphics[width=0.85\linewidth]{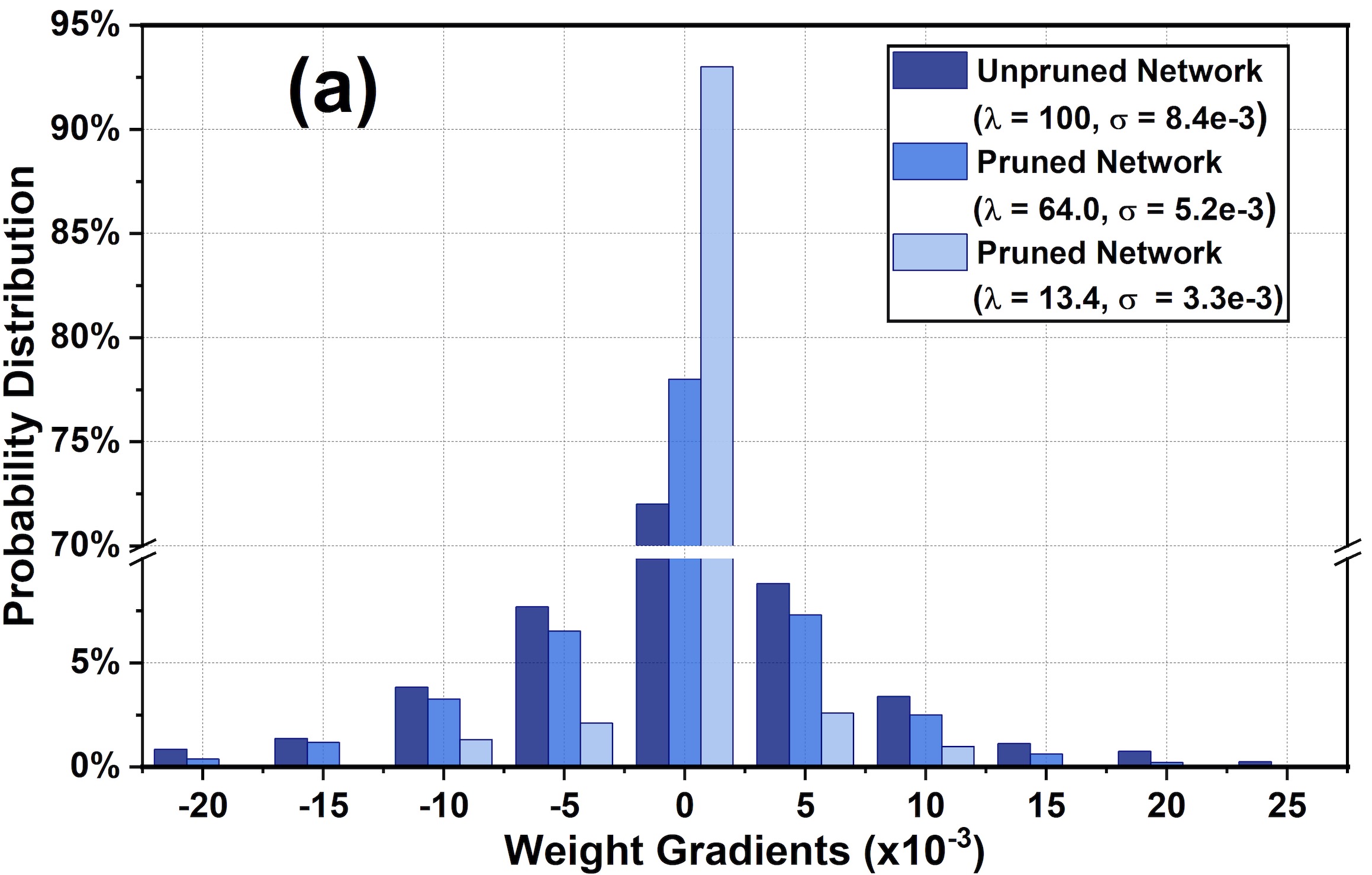}
		\end{center}
	\end{minipage}%
	\hspace{0mm}\begin{minipage}{0.5\textwidth}
		\begin{center}
		\includegraphics[width=0.85\linewidth]{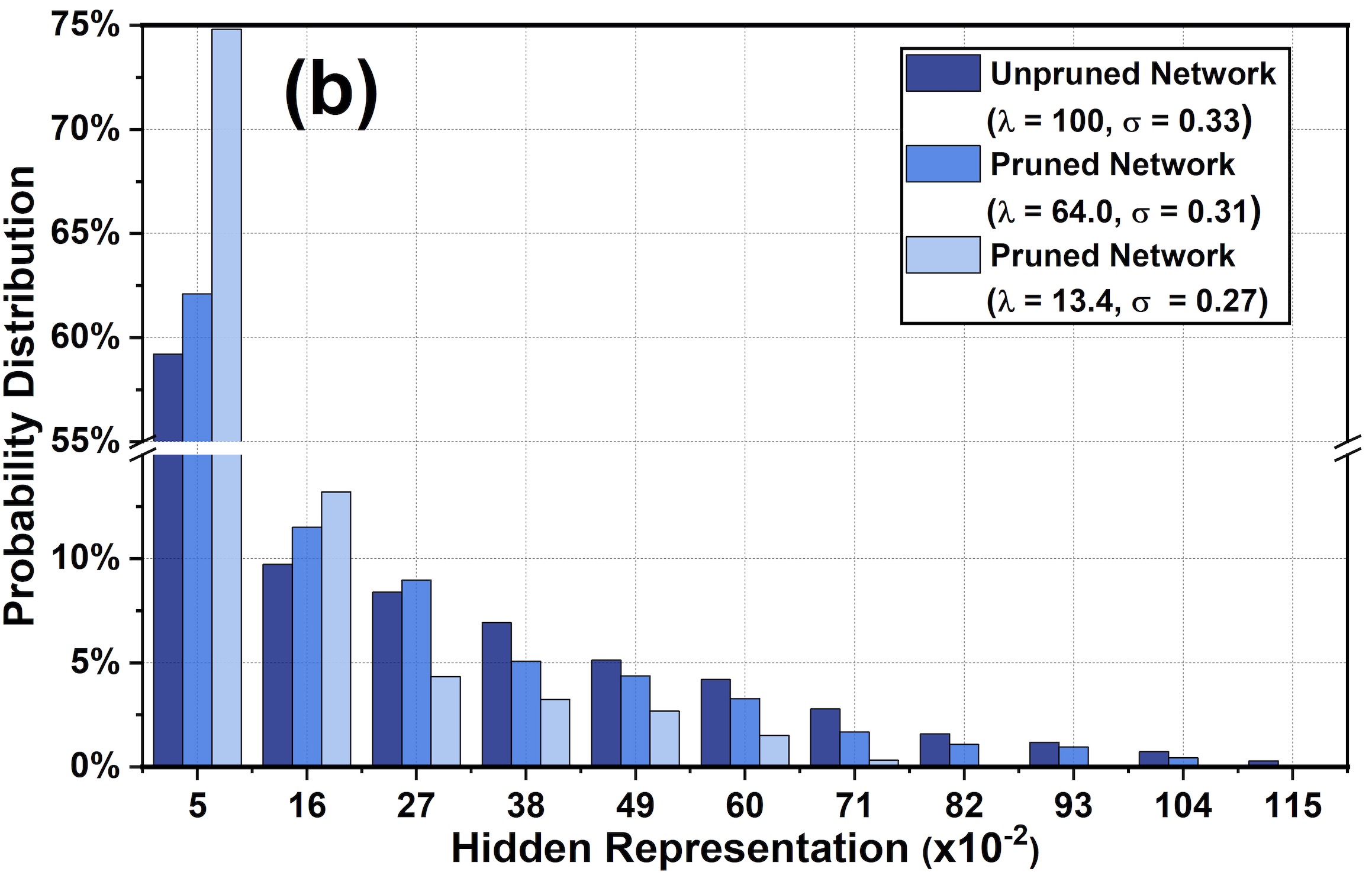}
		\end{center}
	\end{minipage}
	\vspace{-3mm}
	\caption{(a) The distribution of all weight gradients when iteratively pruning a fully connected ReLU network using global magnitude (i.e., prune weights with the lowest magnitude), where $\lambda$ is the percent of weights remaining and $\sigma$ is the standard deviation of the distribution. (b) The corresponding distribution of hidden representations. Please note that there is a line breaker in the vertical axis. Similar findings can be observed using popular networks and pruning methods (see {\bf Appendix}).} 
	\label{mlp_no_norm}
	\vspace{-3mm}
\end{figure*}

{\bf Our work:} In this paper, we explore a new perspective on adapting the LR schedule to improve the iterative pruning performance of ReLU-based networks. The proposed LR schedule is method agnostic and can work well with numerous pruning methods. We mainly focus on iterative pruning of ReLU-based networks for two reasons: (i) Iterative pruning tends to provide better pruning performance than one-shot pruning as reported in the literature \cite{frankle2018lottery,renda2020comparing}. (ii) ReLU has been widely used in many classical neural networks (e.g., ResNet, VGG, DenseNet) which have achieved outstanding performance in various tasks (e.g., image classification, object detection) \cite{resnet18,vgg16}.

\section{A New Insight on Network Pruning}
\label{sec3}
In Section \ref{3.1}, we first provide a new insight in network pruning using experiments. Next, in Section \ref{sec3.2}, we provide a theoretical justification for our observed new insight and present some relevant theoretical results.

\subsection{Weight Gradients during Iterative Pruning}
\label{3.1}

{\bf (1) Experiment Setup.} To exclude the influence of other factors, we start from a simple fully connected ReLU-based network with three hidden layers of 256 neurons each (results of other popular networks are summarized later). We train the network using the training dataset of CIFAR-10 via SGD \cite{ruder2016overview} (momentum = 0.9 and a weight decay of 1\texttt{e}-4) with a batch size of 128 for 500 epochs. All hyperparameters are tuned for performance via grid search (e.g., LR from 1\texttt{e}-4 to 1\texttt{e}-2). We apply the global magnitude \cite{han2015learning} (i.e., remove weights with the smallest magnitude anywhere in the network) with a pruning rate of 0.2 (i.e., prune 20\% of the remaining parameters) to iteratively prune the network for 10 pruning cycles and plot the distribution of all weight gradients when the network converges in Fig. \ref{mlp_no_norm}(a), where $\lambda$ is the percent of weights remaining.
In Fig. \ref{mlp_no_norm}(a), there are 10 visible bins estimated by the Sturges' Rule \cite{scott2009sturges} and each bin consists of three values (i.e., the probability distribution of networks). The edge values range from -0.022 to 0.027 with a bin width of 0.004.

{\bf (2) Experiment Results.} In Fig. \ref{mlp_no_norm}(a), we observe that the distribution of weight gradients tends to become narrower, i.e., the standard deviation of weight gradients $\sigma$ reduces from 8.4\texttt{e}-3 to 3.3\texttt{e}-3 when the network is iteratively pruned to $\lambda = 13.4$. As an example, the unpruned network ($\lambda$ = 100) has more than 7\% of weight gradients with values greater than 0.008 (rightmost 4 bars) or less than -0.012 (leftmost 2 bars), while the pruned network ($\lambda = 13.4$) has less than 1\% of weight gradients falling into those regions. It suggests that the magnitude of weight gradients tends to decrease as the network is iteratively pruned.

{\bf (3) New Insight. }During the backpropagation, the weight update of $w_i$ is $ w_i \leftarrow w_i +  \alpha \frac{\partial \mathcal{L}}{\partial w_i}$, where $\alpha$ is the LR and $\mathcal{L}$ is the loss function. Assume that $\alpha$ is well-tuned to ensure the weight update (i.e.,  $\alpha \frac{\partial \mathcal{L}}{\partial w_i}$)  is sufficiently large to prevent the network from getting stuck in local optimal points \cite{bengio2012practical,goodfellow2016deep}. As shown in Fig. \ref{mlp_no_norm}(a), the magnitude of the weight gradient (i.e., $\frac{\partial \mathcal{L}}{\partial w_i}$) tends to decrease as the network is iteratively pruned. To preserve the same weight updating size and effect as before, a gradually larger value of LR ($\alpha$) should be used to retrain the pruned network during iterative pruning.

{\bf (4) Result Analysis. }We now provide an explanation for the change in the distribution of weight gradients. We assume each $x_i w_i$ (i.e., $x_i \in \mathbb{R}$ is the neuron input and $w_i \in \mathbb{R}$ is the associated weight) is an i.i.d. random variable. Then, the variance of the neuron's pre-activation output ($ \sum_{i=1}^{n} x_i w_i$, $n$ is the number of inputs) will be $\sum_{i=1}^{n} \text{Var} (x_i w_i)$. Pruning the network is equivalent to reducing the number of inputs from $n$ to $n - k$. This results in a smaller variance of $\sum_{i=1}^{n-k} \text{Var} (x_i w_i)$, leading to a smaller standard deviation. Hence, the distribution of the pre-activation output after pruning is narrower. Since ReLU returns its raw input if the input is non-negative, the distribution of hidden representations (output of hidden layers) becomes narrower as well. This can be verified from Fig. \ref{mlp_no_norm}(b), where we plot the distribution of hidden representations from the previous experiment. The key is that the weight gradient $\frac{\partial \mathcal{L}}{\partial w_i} $ is proportional to the hidden representation $x_i$ that associates with $w_i$ (i.e., $\frac{\partial \mathcal{L}}{\partial w_i} \propto x_i$). As the network is iteratively pruned, the distribution of hidden representations becomes narrower, leading to a narrower distribution of weight gradients. As a result, a larger LR should be used to retrain the pruned network.

{\bf (5) More Generalized Results. } (i) Effect of Batch Normalization (BN) \cite{ioffe2015batch}: BN is a popular technique to reformat the distribution of hidden representations, so as to address the issue of internal covariate shift. We note that similar performance trends can be observed after applying BN as well  (see Fig. \ref{mlp_distribution_norm} in the {\bf Appendix}). (ii) Popular CNN Networks and Pruning Methods: In addition to the global magnitude used before, two unstructured pruning methods (i.e., layer magnitude, global gradient) suggested by \cite{Blalock2020} and one structured pruning method (\texttt{L1} norm pruning) \cite{li2016pruning} are examined as well. Those methods are used to iteratively prune AlexNet \cite{krizhevsky2010convolutional}, ResNet-20 and VGG-19 using CIFAR-10. The results using these popular neural networks largely mirror those in Figs. \ref{mlp_no_norm}(a) \& (b) as well. We refer the interested reader to Figs. \ref{alex} - \ref{vgg} in the {\bf Appendix}.

\subsection{Theoretical Study and Motivation}
\label{sec3.2}
In this subsection, we theoretically investigate how network pruning can influence the value of the desired LR. The proofs of the results given here are provided in the {\bf Appendix}. First, we present some definitions. 

\begin{definition}
	\sloppy \textbf{Average Activation Energy ($E_{AA}$):} Given a network with fixed weights, input $X$ from a distribution $P$, and a layer $H=\{h_1(X),...,h_N(X)\}$ with $N$ nodes where $h_i(X)$ represents the function at the $i^{th}$ node. Then $E_{AA}(H)=\mathop{\mathbb{E}}_X[\frac{1}{N}\sum_i h_i(X)^2]$. This quantity reflects the average strength of the layer's activations.
\end{definition}
\begin{definition}
	\textbf{Weight-Gradient Energy ($E_{WG}$):} Let $W=[w_1,...,w_k]$ and $W'=[w'_1,...,w'_k]$ represent the flattened weight vector before and after one epoch of training. Then $E_{WG}$ is the average change in weight magnitude before and after a single epoch of training (for the active unpruned weights), i.e., $E_{WG}(W,W')=\mathbb{E}_{i}[(w_i-w'_i)^2]$. This measure quantifies how much the weights change after a single epoch of training.
\end{definition}
We now demonstrate the impact of network pruning on the \textit{average activation energy} of hidden layers. 

\begin{theorem}\label{thm:1}
	Consider a ReLU activated neural network represented as $X\xrightarrow[]{W_1}H\xrightarrow[]{W_2}Y$, where $X\in \mathbb{R}^d$ is the input, $H=\{H_1(X),H_2(X),...,H_N(X)\}$ is of infinite width ($N=\infty$), and $Y$ is the network output. $W_1$ and $W_2$ represent network parameters (weights, biases). Furthermore, let $X\sim \mathcal{N}(0,\sigma_{\scaleto{X}{3pt}}^2 I)$ and $W_1\sim\mathcal{N}(0,\sigma_{\scaleto{W}{3pt}}^2 I)$, where $I$ is the identity matrix and $\sigma_{\scaleto{X}{3pt}},\sigma_{\scaleto{W}{3pt}}$ are scalars. Now, let us consider an iterative pruning method, where in each iteration a fraction $0\leq p\leq1$  of the smallest magnitude weights are pruned (layer-wise pruning). Then, after $k$ iterations of pruning, it holds that
	\begin{dmath} \label{eq:thm1}
            \begin{split}
			&  \hspace*{-5mm} 4E_{AA}(H) \geq \sigma_W^2 + d\sigma_X^2\sigma_W^2 \Big((1-p)^k + \\ & \hspace*{2mm}\sqrt{\frac{4}{\pi}}\erfi\Big(1-(1-p)^k\Big)e^{-\left(\erfi\left(1-(1-p)^k\right)\right)^2}\Big)
            \end{split}
	\end{dmath}
	where $\erfi(.)$ is the inverse error function. 
\end{theorem}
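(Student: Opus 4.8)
The plan is to use the infinite width to collapse $E_{AA}(H)$ into a one-node expectation and then reduce everything to a truncated-Gaussian second moment. For a generic hidden node the pre-activation is $Z=\sum_{j=1}^d w_j X_j + b$, with weights $w_j\sim\mathcal N(0,\sigma_W^2)$ and bias $b$ from the same Gaussian family as $W_1$. Since the width is infinite and each node draws its weights i.i.d., the law of large numbers lets me replace the node-average $\frac1N\sum_i h_i(X)^2$ in the definition of $E_{AA}$ by the single expectation $\mathbb{E}_w[\text{ReLU}(Z)^2]$, so that $E_{AA}(H)=\mathbb{E}_X\mathbb{E}_w[\text{ReLU}(Z)^2]$.

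Next I would translate ``prune the smallest-magnitude fraction'' into a hard threshold. After $k$ layer-wise iterations, each removing a fraction $p$, the surviving fraction is $q=(1-p)^k$, so the retained weights are exactly those with $|w|>t$, where $t$ satisfies $\Pr(|w|\le t)=1-q$. For $w\sim\mathcal N(0,\sigma_W^2)$ this gives $\text{erf}\!\left(t/(\sigma_W\sqrt2)\right)=1-q$, i.e. $t=\sigma_W\sqrt2\,\erfi(1-q)$, which is already the source of the $\erfi\!\left(1-(1-p)^k\right)$ appearing in \eqref{eq:thm1}.

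The key simplification is that the surviving weights form a symmetric, zero-mean truncated Gaussian, so $Z$ is symmetric about $0$ and $\mathbb{E}[\text{ReLU}(Z)^2]=\tfrac12\mathbb{E}[Z^2]=\tfrac12\mathrm{Var}(Z)$. Conditioning on $X$ and using independence of the weights from the input, $\mathrm{Var}(Z)=\mathbb{E}[\lVert X\rVert^2]\,\mathbb{E}[w^2\mathbf 1\{|w|>t\}]+\sigma_W^2$, with $\mathbb{E}[\lVert X\rVert^2]=d\sigma_X^2$ and the bias left unpruned. The remaining work is the truncated moment $\mathbb{E}[w^2\mathbf 1\{|w|>t\}]$, which I would evaluate by integration by parts: writing $w\,\phi_{\sigma_W}(w)=-\sigma_W^2\phi_{\sigma_W}'(w)$ produces a tail-probability term equal to $\sigma_W^2 q$ and a boundary term proportional to $t\,\phi_{\sigma_W}(t)$; substituting $t=\sigma_W\sqrt2\,\erfi(1-q)$ collapses the boundary term to $\sigma_W^2\sqrt{4/\pi}\,\erfi(1-q)\,e^{-\erfi(1-q)^2}$, reproducing the bracket in the statement.

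Assembling the pieces gives $4E_{AA}(H)=2\big(\sigma_W^2+d\sigma_X^2\sigma_W^2(\,q+\sqrt{4/\pi}\,\erfi(1-q)e^{-\erfi(1-q)^2}\,)\big)$ with $q=(1-p)^k$, and discarding the harmless factor $2$ (both summands being nonnegative) yields the claimed inequality. I expect the main obstacle to be not the integral but the modeling step: to treat $k$ rounds of prune-and-retrain as a single hard truncation of the original $\mathcal N(0,\sigma_W^2)$ law, one must argue that retraining keeps the active weights approximately Gaussian so the pruning fractions simply compose to $(1-p)^k$. Making this Gaussian-preservation assumption, together with the i.i.d./infinite-width reduction, rigorous is where the real care lies, while the rest is a deterministic computation.
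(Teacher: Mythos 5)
Your proposal is correct, and it in fact proves a slightly stronger statement than the paper does. The skeleton is the same as the paper's: both arguments use infinite width to replace the node average in $E_{AA}$ by a single-node expectation, both model $k$ rounds of layer-wise magnitude pruning as hard thresholding of the initial Gaussian at $t=\sqrt{2}\,\sigma_W\erfi\left(1-(1-p)^k\right)$, and both reduce the bracket in \eqref{eq:thm1} to the surviving second moment $\mathbb{E}\left[w^2\mathbf{1}\{|w|>t\}\right]$ (the paper evaluates it via the truncated-normal variance formula, you via integration by parts --- identical results). The genuine difference is how the ReLU is handled. The paper conditions on the sign of the bias: for $b_j\geq 0$ it shows $\mathbb{E}_X[H_j(X)^2]\geq c\,\mathbb{E}_X[Z_j(X)^2]$ with $c\geq\tfrac{1}{2}$ through truncated-normal mean/variance computations, and for $b_j<0$ it discards the node entirely, bounding its energy by $0$; averaging over the two equally likely bias signs is what produces the factor $4$. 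You instead note that the joint law of (surviving weights, bias) is invariant under negation, so the pre-activation $Z$ is symmetric about zero over the combined randomness of $(w,b,X)$, giving the exact identity $\mathbb{E}[\mathrm{ReLU}(Z)^2]=\tfrac{1}{2}\mathbb{E}[Z^2]$. This buys you an equality, $4E_{AA}(H)=2\times(\text{R.H.S. of \eqref{eq:thm1}})$, i.e., the theorem with a factor-$2$ margin, and it is shorter because it bypasses the bias case analysis entirely. Your closing caveat --- that collapsing $k$ prune/retrain cycles into one hard truncation of the original $\mathcal{N}(0,\sigma_W^2)$ law requires a Gaussianity-preservation assumption --- is a fair comment on the modeling, but it is not a gap relative to the theorem as stated: the paper's setting likewise applies the pruning operator directly to the Gaussian initialization with no retraining step in between, so the surviving fractions compose exactly to $(1-p)^k$ there as well.
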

Next, using the above result, the following theorem establishes how the \textit{weight-gradient energy} depends on the LR of the network, and the pruning iteration.
\begin{theorem} \label{thm:2}
	In Theorem \ref{thm:1}'s setting, we consider a single epoch of weight update for the network across a training dataset $S=\{(X_1,Y_1),..,(X_n,Y_n)\}$ using the cross-entropy loss, where $Y_i\in\{0,1\}$. Let $\alpha$ denote the learning rate. Let us denote the R.H.S of \eqref{eq:thm1} by $C(\sigma_X,\sigma_W,p,k)$. Let the final layer weights before and after one training epoch be $W_2$ and $W_2'$ respectively. We have, 
	\begin{equation}\label{eq:lr_energy}
		\resizebox{0.90\hsize}{!}{
		$\mathop{{}\mathbb{E}}_{W_2\sim\mathcal{N}_k(0,\sigma_{\scaleto{W}{3pt}}^2 I)}\left[E_{WG}\left(W_2,W_2'\right)\right] \geq \alpha^2\gamma C(\sigma_X,\sigma_W,p,k)$,}
	\end{equation}
	where $\mathcal{N}_k(0,\sigma_{\scaleto{W}{3pt}}^2 I)$ represents a normal distribution $\mathcal{N}(0,\sigma_{\scaleto{W}{3pt}}^2 I)$ initialization of $W_2$, followed by $k$ iterations of pruning, and $\gamma$ is a constant.
\end{theorem}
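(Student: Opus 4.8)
The plan is to trace a single epoch of the gradient-descent update through the final layer, reduce the weight-gradient energy to (a constant multiple of) the average activation energy, and then invoke Theorem~\ref{thm:1} to supply the pruning-dependent lower bound $C(\sigma_X,\sigma_W,p,k)$.

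First I would use the update rule stated in the paper, $w_i' = w_i + \alpha\,\partial\mathcal{L}/\partial w_i$, so that for each active final-layer weight the change is $w_i - w_i' = -\alpha\,\partial\mathcal{L}/\partial w_i$. Squaring and averaging over the unpruned weights gives
\[
E_{WG}(W_2,W_2') = \alpha^2\,\mathbb{E}_i\!\left[\left(\frac{\partial\mathcal{L}}{\partial W_{2,i}}\right)^2\right],
\]
which already isolates the factor $\alpha^2$ appearing on the right-hand side of \eqref{eq:lr_energy}. The remaining task is to lower-bound the averaged squared gradient by $\gamma\,C(\sigma_X,\sigma_W,p,k)$.

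Next I would exploit the backpropagation structure of the architecture $X\to H\to Y$. For the final layer the chain rule gives $\partial\mathcal{L}/\partial W_{2,i} = \delta\,H_i(X)$, where $\delta=\partial\mathcal{L}/\partial Y$ is the error signal at the output, common to every weight in the layer. Squaring and averaging over the nodes therefore factors as
\[
\mathbb{E}_i\!\left[\left(\frac{\partial\mathcal{L}}{\partial W_{2,i}}\right)^2\right] = \delta^2\cdot\frac{1}{N}\sum_i H_i(X)^2,
\]
and taking expectation over the training sample $S$ and over the initialization $W_2\sim\mathcal{N}_k(0,\sigma_W^2 I)$ turns the second factor into exactly the average activation energy $E_{AA}(H)$. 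Appealing to Theorem~\ref{thm:1}, which states $4E_{AA}(H)\geq C(\sigma_X,\sigma_W,p,k)$, I would then replace this factor by its pruning-dependent lower bound; absorbing the resulting $1/4$ together with a uniform lower bound on the error signal into a single constant $\gamma$ closes the chain and yields $E_{WG}\geq\alpha^2\gamma\,C(\sigma_X,\sigma_W,p,k)$.

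The main obstacle is the treatment of the error-signal factor $\delta^2$. For the cross-entropy loss with labels $Y_i\in\{0,1\}$ one has $\delta=\hat{Y}-Y$ with $\delta^2\in[0,1]$, so it cannot be bounded below by a positive constant pointwise. The key is that the expectation is taken over the random initialization $\mathcal{N}_k(0,\sigma_W^2 I)$: at initialization the output is not yet fitted to the labels and the predictions concentrate near the uninformed value, so I expect $\mathbb{E}_{W_2}[\delta^2]$ to be bounded below by an absolute constant (morally $\tfrac14$), independent of $p$ and $k$. Making this decoupling rigorous — pulling $\delta^2$ out of the product with $\frac{1}{N}\sum_i H_i(X)^2$ and controlling the residual correlation between the output weights and the hidden activations — is the delicate part; once a uniform bound $\mathbb{E}[\delta^2]\geq\gamma_0$ is established, the choice $\gamma=\gamma_0/4$ completes the argument.
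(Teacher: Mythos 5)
Your overall architecture matches the paper's: write the final-layer update via backpropagation, factor the squared gradient into an activation-energy term times an error-signal term, invoke Theorem~\ref{thm:1} to lower-bound the activation energy, and absorb the rest into $\gamma$. However, there is a genuine gap in how you handle the epoch. You treat the weight change over one epoch as a single gradient step, $w_i - w_i' = -\alpha\,\partial\mathcal{L}/\partial w_i$, with a single error signal $\delta$ and a single input $X$. In the theorem's setting an epoch iterates over all $n$ samples, so the accumulated change of $W_2^{ij}$ is $-\alpha\sum_{k=1}^{n} h_i^k\left(a_j^k - y_j^k\right)$, and its square contains cross-sample terms $h_i^{k_1}h_i^{k_2}\left(a_l^{k_1}-y_l^{k_1}\right)\left(a_l^{k_2}-y_l^{k_2}\right)$ for $k_1\neq k_2$. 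These can be negative, so without controlling them the diagonal (single-sample) bound you derive does not survive. The paper's proof spends most of its effort exactly here: it shows the cross terms vanish in expectation, using independence across samples together with the two-class identity $\mathbb{E}_l\left[a_l^k-y_l^k\right] = \left(a_0^k+a_1^k\right)-\left(y_0^k+y_1^k\right) = 1-1 = 0$. Your proposal never confronts this, and it is the step on which the theorem actually turns. A second, smaller omission: $E_{WG}$ averages only over \emph{active} weights while only a $(1-p)^k$ fraction of $W_2$ is updated; the paper reconciles the two by arguing the pruning indicator $\delta_{ij}$ is independent of the update magnitude, so the correction factors cancel.

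Conversely, the obstacle you flag as "the delicate part" — lower-bounding $\mathbb{E}[\delta^2]$ by an absolute positive constant — is not actually required. The theorem only asserts that $\gamma$ is \emph{a constant}, and the paper simply defines $\gamma' = \mathbb{E}_{W_2,l,k}\left[\left(a_l^k-y_l^k\right)^2\right]/2$ and sets $\gamma = n\gamma'$; no quantitative lower bound on the prediction error is ever proved. Likewise, the decoupling of the error signal from the activations, which you also call delicate, is immediate in this setting: $h_i^k$ is a function of $X_k$ and $W_1$ only and carries no information about $W_2$, so under the expectation over $W_2\sim\mathcal{N}_k(0,\sigma_W^2 I)$ the two factors are independent (the paper states $P(a_l^k\,|\,h_i^k)=P(a_l^k)$, leaning on $N=\infty$). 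So your difficulty assessment is inverted relative to the actual proof: the factorization and the constant $\gamma$ are easy, while the cross-sample cancellation you skipped is the essential idea.
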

\begin{remark} {\bf (Pruning and LR)}
	Theorems \ref{thm:1} and \ref{thm:2} together establish how the choice of LR influences the lower bound of {\em weight-gradient energy}. Theorem \ref{thm:1} shows that the lower bound of {\em activation energy} of the hidden layer decreases as the network is pruned, and as Theorem \ref{thm:2} shows, this also reduces the lower bound of \textit{weight-gradient energy} per epoch. Thus, to counter this reduction, it is necessary to increase the learning rate $\alpha$ as the number of pruning cycles grows, in order to ensure that the R.H.S of \eqref{eq:lr_energy} remains fixed. 
\end{remark}
\begin{remark} {\bf (S-shape of LR During Iterative Pruning)}
	\sloppy Theorem \ref{thm:2} implies that to maintain a fixed \textit{weight-gradient energy} of $\mathop{{}\mathbb{E}}_{W_2\sim\mathcal{N}_k(0,\sigma_{\scaleto{W}{3pt}}^2 I)}[E_{WG}(W_2,W_2')] = K$, we must have the learning rate $\alpha \leq (K/\gamma C(\sigma_X,\sigma_W,p,k))^{1/2}$. We plot the adapted $\alpha$ assuming equality and find that this resembles an S-shape trajectory during iterative pruning.
\end{remark}

\section{A New Learning Rate Schedule}
\label{sec4}

In Section \ref{sec4.1}, we first review existing works on LR schedules and shortlist four benchmarks for comparison. Next, in Section \ref{sec4.2}, we introduce SILO and highlight the difference with existing works. Lastly, in Section \ref{sec4.3}, we detail the algorithm of the proposed SILO.

\begin{figure*}[!t]
	\centering
	\resizebox{0.88\linewidth}{!}{
		\begin{tikzpicture}[yscale=2.6, xscale=1.0]
			\draw[line width = 0.5mm](-5,-0.05)--(0.2,-0.05);
			\draw[line width = 0.3mm](0.1,-0.12)--(0.3,0.03);
			\draw[line width = 0.3mm](0.4,-0.12)--(0.6,0.03);
			
			\draw[line width = 0.5mm](0.5,-0.05)--(5.5,-0.05);
			\draw[line width = 0.3mm](5.4,-0.12)--(5.6,0.03);
			\draw[line width = 0.3mm](5.7,-0.12)--(5.9,0.03);
			
			\draw[line width = 0.5mm](5.8,-0.05)--(10.7,-0.05);
			\draw[line width = 0.3mm](10.6,-0.12)--(10.8,0.03);
			\draw[line width = 0.3mm](10.9,-0.12)--(11.1,0.03);
			
			\draw[line width = 0.5mm](11,-0.05)--(16.1,-0.05);
			
			\draw[line width = 0.5mm,->](16.1,-0.05)--(18,-0.05);
			\draw[line width = 0.5mm,->](-5,-0.05)--(-5,2.2)node[right=0.4cm,below=0.1cm]{};
			\draw[line width = 0.5mm,->](-5,2.15)--(18,2.15)node[right=-3cm, below=0.1cm]{};
			\node[rotate=90,above=0.1cm]at (-5,0.8){\large \textbf{\normalsize Learning Rate}};
			\draw[domain=1.5:15, samples=100,thick, red,line width=2pt] plot (\x,{(1/(1+exp((-\x+8)))+0.7});
			\node[above=0.1cm]at (6.5,2.1){\normalsize \textbf{$\lambda$ = 100 $\leftarrow$ Percent of Weights Remaining $\rightarrow$ $\lambda$ = 0}};
			
			\begin{scope}[on background layer]
				\fill[top color=gray!90!green!20!white,bottom color=white] (-4.3, 1.0) rectangle (-0.15, -0.05);
				\fill[top color=green!80!red!20!white,bottom color=white] (1.0, 1.2) rectangle (5.15, -0.05);
				\fill[top color=green!60!blue!40!white,bottom color=white] (6.2, 1.7) rectangle (10.35, -0.05);
				\fill[top color=green!60!black!40!white,bottom color=white] (11.45, 2.0) rectangle (15.65, -0.05);
			\end{scope}

			\draw[red,line width=2pt](-2.9,0.7)--(1.5,0.7);
			\draw[blue, line width = 0.6mm](-4.3,-0.05)--(-2.8,0.71)
			--(-2.3,0.71)
			--(-2.3,0.071)
			--(-1.2,0.071)
			--(-1.2,0.0072)
			--(-0.2,0.0072)
			--(-0.2,0)
			--(-0.2,-0.04)
			;

			\draw[blue, line width = 0.6mm](1.0,-0.05)--(2.5,{(1/(1+exp((6.5)))+0.7})
			--(3,{(1/(1+exp((6.5)))+0.7})
			--(3,{((1/10)*(1/(1+exp((6.5)))+0.7})
			--(4.1,{((1/10)*(1/(1+exp((6.5)))+0.7})
			--(4.1,{((1/100)*(1/(1+exp((6.5)))+0.7})
			--(5.1,{((1/100)*(1/(1+exp((6.5)))+0.7})
			--(5.1,{((1/1000)*(1/(1+exp((6.5)))+0.7})
			--(5.1,-0.04)
			;
			
			\draw[blue, line width = 0.6mm](6.2,-0.05)--(7.7,{(1/(1+exp((1.5)))+0.92})
			--(8.2,{(1/(1+exp((1.5)))+0.92})
			--(8.2,{((1/10)*(1/(1+exp((1.5)))+0.92})
			--(9.3,{((1/10)*(1/(1+exp((1.5)))+0.92})
			--(9.3,{((1/100)*(1/(1+exp((1.5)))+0.92})
			--(10.3,{((1/100)*(1/(1+exp((1.5)))+0.92})
			--(10.3,{((1/1000)*(1/(1+exp((1.5)))+0.92})
			--(10.3,-0.04)
			;
			
			\draw[blue, line width = 0.6mm](11.5,-0.05)--(13,{(1/(1+exp((-5)))+0.7})
			--(13.5,{(1/(1+exp((-5)))+0.7})
			--(13.5,{((1/10)*(1/(1+exp((-5)))+0.7})
			--(14.6,{((1/10)*(1/(1+exp((-5)))+0.7})
			--(14.6,{((1/100)*(1/(1+exp((-5)))+0.7})
			--(15.6,{((1/100)*(1/(1+exp((-5)))+0.7})
			--(15.6,{((1/1000)*(1/(1+exp((-5)))+0.7})
			--(15.6,-0.04)
			;
			
			%
			\draw[line width = 0.5mm,<->](-4.3,-0.13)--(-0.2,-0.13);
			\draw[line width = 0.5mm,<->](1.0,-0.13)--(5.1,-0.13);
			
			\draw[line width = 0.5mm,<->](6.2,-0.13)--(16.6,-0.13);
			
			\foreach \P/\N in {(-2.2,-0.15)/\textbf{{\normalsize First~Pruning~Cycle}},(2.9,-0.15)/\textbf{\hspace*{3mm}{\normalsize Pruning~Cycle}}~$\texttt{q}$,(10.7,-0.15)/\textbf{\hspace*{5mm}{\normalsize Subsequent~Pruning~Cycles}}}{
				\node[below] at \P {\Large \N};
			}
			
			\foreach \M in {(-2.8, 1.75), (2.5,1.75),(7.7,2.78),(13,4.2)}{
				\filldraw[yscale=0.4] \M circle (2pt);
			}
			
			\foreach \M in {(0.65,1.0),(0.4,1.0),(0.15,1.0), (5.4,1.0),(5.65,1.0),(5.9,1.0), (11.1,1.0),(10.85,1.0),(10.6,1.0) , (16.1,1.0),(16.35,1.0),(16.6,1.0)}{
				\filldraw[yscale=0.35] \M circle (1.5pt);
			}
			
			\draw[line width = 0.5mm, dashed, opacity=0.4](-2.9,0.69)--(16,0.69);
			\draw[line width = 0.5mm, dashed, opacity=0.4](14.3,1.7)--(16.0,1.7);
			
			\node[above=0.1cm]at (16.8, 1.55){\Large  $\delta$ + $\epsilon$};
			\node[above=0.1cm]at (16.3, 0.57){\Large  $\epsilon$};

			\node[above=0.1cm]at (-1.7, 0.45){\large $\texttt{max\_lr}$};
			\node[above=0.1cm]at (3.6, 0.45){\large $\texttt{max\_lr}$};
			\node[above=0.1cm]at (8.7, 0.85){\large $\texttt{max\_lr}$};
			\node[above=0.1cm]at (14, 1.4){\large $\texttt{max\_lr}$};
			
			\draw[->,>=latex,thick] (-2.8,0.8) -- (-1.0,0.8); \node[above,rotate=0] at (-1.6,0.83){\textbf{No Growth}};
			
			\draw[->,>=latex,thick] (2,0.8) -- (3.5,0.85); \node[above,rotate=7] at (2.9,0.85){\textbf{Slow Growth \rom{1}}};
			\draw[->,>=latex,thick] (7,1.1) -- (8.5,1.5); \node[above,rotate=38]at (7.7,1.3){\textbf{Fast Growth}};
			\draw[->,>=latex,thick] (12,1.76) -- (14.5,1.8); \node[above,rotate=3]at (13.6,1.82){\textbf{Slow Growth \rom{2}}};
			
	\end{tikzpicture}}
	\vspace{-3mm}
	\caption{Illustration of SILO during pruning. The S-shape red line is motivated from Theorem \ref{thm:1}.}
	\label{SC}
	\vspace{-5mm}
\end{figure*}
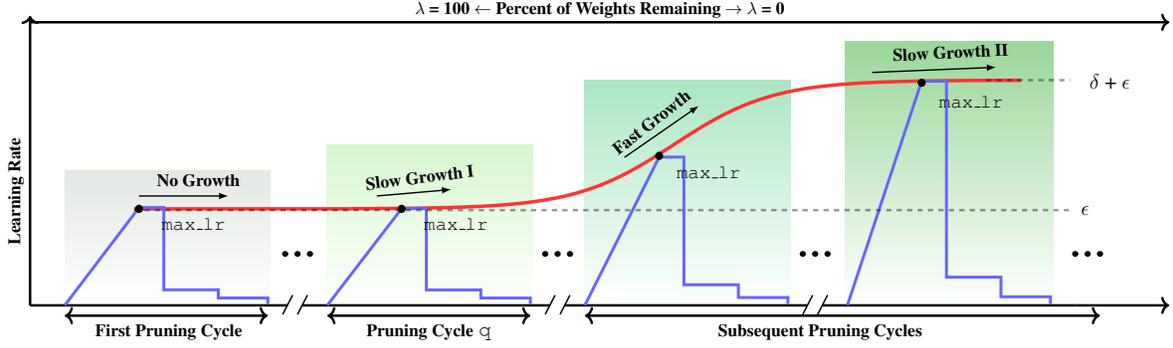

\subsection{LR Schedule Benchmarks}
\label{sec4.1}

Learning rate is the most important hyperparameter in training neural networks \cite{goodfellow2016deep}. The LR schedule is to adjust the value of LR during training by a pre-defined schedule. Three common LR schedules are summarized as follows.
\begin{enumerate}[itemsep = 2pt,leftmargin=5mm, topsep=1pt]
	
	\item {\bf LR Decay} starts with a large initial LR and linearly decays it by a certain factor after a pre-defined number of epochs. Several recent works \cite{you2019does,ge2019step,an2017exponential} have demonstrated that decaying LR helps the neural network to converge better and avoids undesired oscillations in optimization.
	
	\item {\bf LR Warmup} is to increase the LR to a large value over certain epochs and then decreases the LR by a certain factor. It is a popular schedule used by many practitioners for transfer learning \cite{he2019bag} and network pruning \cite{frankle2018lottery,frankle2020linear}.
	
	\item {\bf Cyclical LR} \cite{smith2017cyclical} varies the LR cyclically between a pre-defined lower and upper bound. It has been widely used in many tasks \cite{you2019does}.
	
\end{enumerate}

{\bf All the three LR schedules and constant LR} will be used as {\bf benchmarks for comparison}. 
We note that, in addition to LR schedules which vary LR by a pre-defined schedule, adaptive LR optimizers such as AdaDelta \cite{zeiler2012adadelta} and Adam \cite{kingma2014adam} provide heuristic based approaches to adaptively vary the step size of weight update based on observed statistics of the past gradients. All of them are sophisticated optimization algorithms and much work \cite{gandikota2021vqsgd,jentzen2021strong} has been done to investigate their mechanisms. In this paper, the performance of all benchmarks and SILO will be evaluated using SGD with momentum = 0.9 and a weight decay of 1$\texttt{e}$-4 (same as \cite{renda2020comparing,frankle2018lottery}). 
The effect of those adaptive LR optimizers on SILO will be discussed in Section \ref{A2} of the {\bf Appendix}.

\subsection{SILO Learning Rate Schedule}
\label{sec4.2}

To ensure the pruned network is properly trained during iterative pruning, we propose an S-shaped Improved Learning rate Schedule, called SILO, for iterative pruning of networks. As illustrated in Fig. \ref{SC}, the main idea of the proposed SILO is to apply the LR warmup schedule at every pruning cycle, with a gradual increase of the LR upper bound (i.e., \texttt{max\_lr}) in an S-shape as the network is iteratively pruned.  This LR warmup schedule is meant to be flexible and can change depending on different networks and datasets.

The S-shape in SILO is inspired by Theorem \ref{thm:2} (see Remark 2) and will be further verified by comparing to an Oracle. We divide the S-shape into four phases and provide the intuition behind each phase as follows.

\begin{enumerate}[noitemsep,leftmargin=5mm, topsep=0pt]
	
	\item Phase-1-{\bf No Growth}, SILO does not increase \texttt{max\_{lr}} until the pruning cycle \texttt{q} (see Fig. \ref{SC}). It is because the unpruned network often contains a certain amount of weights with zero magnitude. Those parameters are likely to be pruned at the first few pruning cycles, and removing such weights has negligible effect on the distribution of weight gradients. 
	
	\item Phase-2-{\bf Slow Growth \rom{1}}, the pruning algorithm has removed most zero magnitude weights and started pruning weights with small magnitude. Pruning such weights has a small effect on distribution of weight gradients. Hence, we slightly increase \texttt{max\_{lr}} after pruning cycle \texttt{q}.  
	
	\item Phase-3-{\bf Fast Growth}, SILO greatly increases \texttt{max\_{lr}}. It is because the pruning algorithm now starts removing weights with large magnitude and the distribution of weight gradients becomes much narrower. This requires a much larger LR for meaningful weight updates. 
	
	\item Phase-4-{\bf Slow Growth \rom{2}}, the network is now heavily pruned and very few parameters left in the network. By using the same pruning rate, a very small portion of the weights will be pruned. This could cause a marginal effect on the distribution of weight gradients. Hence, SILO slightly increases \texttt{max\_{lr}}.
	
\end{enumerate}

\noindent We note that SILO is designed based on the assumption that existing pruning methods tend to prune weights with small magnitude. 
{\bf The key difference with existing LR schedules} (e.g., Cyclical LR, LR warmup) is that SILO is adaptive and able to {\bf precisely increase} the value of \texttt{max\_{lr}} as the network is iteratively pruned, while existing LR schedules do not factor in the need to change \texttt{max\_{lr}} during different pruning cycles.

\begin{algorithm}[H] 
	\caption{Algorithm of the Proposed SILO}
	\label{algorithm1}
	\begin{algorithmic}[1]
		\Require{lower bound $\epsilon$, upper bound $\delta + \epsilon$, pruning rate \texttt{p}, number of pruning cycles $\texttt{L}$, number of training epochs $\texttt{t}$, S-shape control term $\beta$, delay term $\texttt{q}$.} 
		\For{$ \texttt{m} = 0$ to $\texttt{L}$}                    
		\If{$\texttt{m} \leq \texttt{q}$}
		\State $\texttt{max\_lr} = \epsilon $
		\Else
		\State  $\texttt{max\_lr} = \frac{\delta}{1 + (\frac{\gamma}{1-\gamma})^{-\beta}} + \epsilon$, $\gamma = 1-(1-\texttt{p})^{m - \texttt{q}}$
		
		\EndIf
		\EndFor
		
		\For {$ \texttt{i} = 0$ to $\texttt{t}$} 
		\State (1) linearly warmup the LR to $\texttt{max\_lr}$
		\State (2) drop the value of LR by 10 at certain epochs
		\EndFor
	\end{algorithmic}
\end{algorithm}

\subsection{Implementation of SILO}
\label{sec4.3}

As for the implementation of SILO, we designed a function to estimate the value of $\texttt{max\_lr}$ as shown below.
\vspace{-1mm}
\begin{equation}
	\label{SILO-eq}
	\texttt{max\_lr} = \frac{\delta}{1 + (\frac{\gamma}{1-\gamma})^{-\beta}} + \epsilon, 
\end{equation}
\vspace{-1mm}
where $\gamma = 1 - (1 - \texttt{p})^{m - \texttt{q}}$ is the input of the function and $\texttt{max\_lr}$ is the output of the function. The parameter \texttt{p} is the pruning rate and $\texttt{m}$ is the number of completed pruning cycles. The parameters $\beta$ and $\texttt{q}$ are used to control the shape of the S curve. The larger the $\beta$, the later the curve enters the Fast Growth phase. The parameter \texttt{q} determines at which pruning cycle SILO enters the Slow Growth \rom{1} phase. When $q = 0$, the No Growth phase will be skipped and $\gamma$ will be the proportion of pruned weights at the current pruning cycle. The parameters $\epsilon$ and $\delta$ determine the value range of $\texttt{max\_lr}$.
As the network is iteratively pruned, $\gamma$ increases and $\texttt{max\_lr}$ increases from $\epsilon$ to $\epsilon + \delta$ accordingly. The details of the SILO algorithm are summarized in Algorithm \ref{algorithm1}.

{\bf Parameter Selection for SILO.} Algorithm \ref{algorithm1} requires several inputs for implementation. The value of $\epsilon$ can be tuned using the validation accuracy of the unpruned network while the value of $\delta$ can be tuned using the validation accuracy of the pruned network with targeted sparsity. The pruning rate \texttt{p} and pruning cycles \texttt{L} are chosen to meet the target sparsity. The number of training epochs \texttt{t} should be large enough to guarantee the network convergence. Let $\texttt{q}$ = 1 and $\beta$ = 5 could be a good choice and 
yield promising results as we demonstrate in the Section of Performance Evaluation. Furthermore, based on our experience, the value of $\texttt{q}$ and $\beta$ could be tuned in the range of [0, 3] and [3, 6], respectively.

{\renewcommand{\arraystretch}{0.85}
	\begin{table}[!t]
		\centering
		\setlength\tabcolsep{5.5pt}
		\begin{tabular}{lcccc}
			\toprule
			\toprule
			\multicolumn{5}{c}{Original Top-1 Test Accuracy = 91.7\% ($\lambda$ = 100)}\\ \toprule
			\multicolumn{1}{c}{$\lambda$} & 32.8 & 26.2 & 8.59 & 5.72 \\ \toprule
			{\small constant LR } & 88.1$\pm${\scriptsize 0.9} & 87.5$\pm${\scriptsize 0.7}  & 82.8$\pm${\scriptsize 0.9} & 79.1$\pm${\scriptsize 0.8}  \\
			
			{\small LR decay} & 89.8$\pm${\scriptsize 0.4} & 89.0$\pm${\scriptsize 0.7} & 83.9$\pm${\scriptsize 0.6} & 79.8$\pm${\scriptsize 0.7}  \\
			
			{\small cyclical LR} & 89.7$\pm${\scriptsize 0.6} & 88.2$\pm${\scriptsize 0.7} & 84.1$\pm${\scriptsize 0.8} & 80.3$\pm${\scriptsize 0.7} \\
			
			{\small LR-warmup} & 90.3$\pm${\scriptsize 0.4} & 89.8$\pm${\scriptsize 0.6} & 85.9$\pm${\scriptsize 0.9} & 81.2$\pm${\scriptsize 1.1} \\
			
			{\small SILO (Ours)} & {\bf 90.8$\pm${\scriptsize 0.5}} & {\bf 90.3$\pm${\scriptsize 0.4}} & {\bf 87.5$\pm${\scriptsize 0.8}} & {\bf 82.7$\pm${\scriptsize 1.2}} \\
			\bottomrule
			\bottomrule
		\end{tabular}
		\vspace{-3mm}
		\caption{Top-1 test accuracy ($\pm$ std) of pruning ResNet-20 on CIFAR-10 using global magnitude.}
		\label{per1}
		\vspace{-1mm}
\end{table}}

{\renewcommand{\arraystretch}{0.85}
	\begin{table}[!t]
		\centering
		\setlength\tabcolsep{5.5pt}
		\begin{tabular}{lcccc}
			\toprule
			\toprule
			\multicolumn{5}{c}{Original Top-1 Test Accuracy = 92.2\% ($\lambda$ = 100)} \\ \toprule
			\multicolumn{1}{c}{$\lambda$} & 32.8 & 26.2 & 8.59 & 5.72 \\ \toprule
			{\small constant LR} & 88.8$\pm${\scriptsize 0.6}& 87.4$\pm${\scriptsize 0.7} & 82.2$\pm${\scriptsize 1.4} & 73.7$\pm${\scriptsize 1.3}  \\
			
			{\small LR decay} & 89.4$\pm${\scriptsize 0.4} & 88.6$\pm${\scriptsize 0.5}  & 83.3$\pm${\scriptsize 0.8} & 75.4$\pm${\scriptsize 0.9}  \\
			
			{\small cyclical LR} & 89.8$\pm${\scriptsize 0.5} & 89.1$\pm${\scriptsize 0.6} & 83.7$\pm${\scriptsize 1.0} & 75.7$\pm${\scriptsize 1.2} \\
			
			{\small LR-warmup} & 90.2$\pm${\scriptsize 0.5} & 89.8$\pm${\scriptsize 0.8} & 84.5$\pm${\scriptsize 0.9} & 76.5$\pm${\scriptsize 1.0} \\
			
			{\small SILO (Ours)} & {\bf 90.6$\pm${\scriptsize 0.6}} & {\bf 90.3$\pm${\scriptsize 0.6}} &{\bf 86.1$\pm${\scriptsize 0.8}} & {\bf 78.5$\pm${\scriptsize 1.0}} \\
			\bottomrule
			\bottomrule
		\end{tabular}
		\vspace{-3mm}
		\caption{Top-1 test accuracy ($\pm$ std) of pruning VGG-19 on CIFAR-10 using global gradient.}
		\label{per2}
		\vspace{-2mm}
\end{table}}

\section{Performance Evaluation}
\label{PE}

\subsection{Experimental Setup}
\label{ES}
We demonstrate that SILO can work well with different pruning methods across a wide range of networks and datasets. The details for each experiment are as follows.

\begin{enumerate}[itemsep = 1pt,leftmargin=5mm, topsep=0pt]
	\item Pruning ResNet-20 \cite{resnet18} on CIFAR-10 via global magnitude (i.e., prune weights with the lowest magnitude anywhere in the networks).
	
	\item Pruning VGG-19 \cite{vgg16} on CIFAR-10 via global gradient (i.e., prune weights with the lowest magnitude of (weight $\times$ gradient) anywhere in the network).
	
	\item Pruning DenseNet-40 \cite{huang2017densely} on the CIFAR-100 dataset using Layer-adaptive Magnitude-based Pruning (LAMP) \cite{lee2020layer}.
	
	\item Pruning MobileNetV2 \cite{sandler2018mobilenetv2} on the CIFAR-100 dataset using Lookahead Pruning (LAP) \cite{park2020lookahead}.
	
	\item Pruning ResNet-50 on ImageNet (i.e., ImageNet-1000) \cite{deng2009imagenet} using Iterative Magnitude Pruning (IMP) \cite{frankle2018lottery}. 
	\item Pruning Vision Transformer (ViT-B-16) \cite{dosovitskiy2020image} on CIFAR-10 using IMP.
\end{enumerate}
In each experiment, we compare SILO ($\texttt{q}$ = 1, $\beta$ =5) to constant LR and the three shortlisted LR schedules: (i) LR decay, (ii) cyclical LR and (iii) LR warmup. The details of each LR schedule are  summarized in Table \ref{schedules} in the {\bf Appendix}.

\textbf{(1) Methodology.} We train the network using the training dataset via SGD with momentum = 0.9 and a weight decay of 1$\texttt{e}$-4 (same as \cite{renda2020comparing,frankle2018lottery}). Next, we prune the trained network with a pruning rate of 0.2 (i.e., 20\% of remaining weights are pruned) in 1 pruning cycle. We repeat 25 pruning cycles in 1 run and use early-stop top-1 test accuracy (i.e., the corresponding test accuracy when early stopping criteria for validation error is met) to evaluate the performance. The results are averaged over 5 runs and the corresponding standard deviation are summarized in Tables \ref{per1} - \ref{per5}, where the results of pruning ResNet-20, VGG-19, DenseNet-40, MobileNetV2, ResNet-50 and Vision Transformer (ViT-B-16) are shown, respectively. Some additional details (e.g., training epochs) and  results are given in Tables \ref{per1_extra} - \ref{per5_extra} in the {\bf Appendix}.



\textbf{(2) Parameters for SOTA LR Schedules.} To ensure fair comparison against prior SOTA LR schedules, we utilize implementations reported in the literature. Specifically, the LR schedules (i.e., LR-warmup) from Table \ref{per1} - \ref{per4} are from \cite{frankle2018lottery}, \cite{frankle2020linear}, \cite{zhao2019variational}, \cite{chin2020towards} and \cite{renda2020comparing}, respectively.
The LR schedule (i.e., cosine decay) in Table \ref{per5} is from \cite{dosovitskiy2020image}.



\textbf{(3) Parameters for other LR schedules.} For the other schedules without a single "best" LR in the literature, we tune the value of LR for each of them via a grid search with a range from 1\texttt{e}-4 to 1\texttt{e}-1 using the validation accuracy. Other related parameters (e.g., step size) are also tuned in the same manner. Lastly, we highlight that all LR schedules used, including SILO, are rewound to the initial state at the beginning of each pruning cycle, which is the same as the LR rewinding in \cite{renda2020comparing}. 

\textbf{(4) Source Code \& Devices:} We use Tesla V100 devices for our experiments, and the source code (including random seeds) will be released at the camera-ready stage.

{\renewcommand{\arraystretch}{0.85}
	\begin{table}[!t]
		\centering
		\setlength\tabcolsep{5.5pt}
		\begin{tabular}{lcccc}
			\toprule
			\toprule
			\multicolumn{5}{c}{Original Top-1 Test Accuracy = 74.6\% ($\lambda$ = 100)} \\ \toprule
			\multicolumn{1}{c}{$\lambda$} & 32.8 & 26.2 & 8.59 & 5.72 \\ \toprule
			
			{\small constant LR} & 70.3$\pm${\scriptsize 0.8}& 68.1$\pm${\scriptsize 0.7} & 60.8$\pm${\scriptsize 1.1} & 59.1$\pm${\scriptsize 1.2}  \\
			
			{\small LR decay} & 71.2$\pm${\scriptsize 0.8} & 69.0$\pm${\scriptsize 0.6} & 62.6$\pm${\scriptsize 1.2} & 60.3$\pm${\scriptsize 1.4}  \\
			
			{\small cyclical LR} & 70.9$\pm${\scriptsize 0.6} & 69.4$\pm${\scriptsize 0.6} & 63.0$\pm${\scriptsize 1.1} & 60.8$\pm${\scriptsize 1.3} \\
			
			{\small LR-warmup} & 71.5$\pm${\scriptsize 0.7} & 69.6$\pm${\scriptsize 0.8} & 63.9$\pm${\scriptsize 1.0} & 61.2$\pm${\scriptsize 0.9} \\
			
			{\small SILO (Ours)} & {\bf 72.4$\pm${\scriptsize 0.7}} & {\bf 70.8$\pm${\scriptsize 0.8}} &{\bf 65.7$\pm${\scriptsize 1.2}} & {\bf 63.7$\pm${\scriptsize 1.0}} \\
			
			\bottomrule
			\bottomrule
		\end{tabular}
		\vspace{-3mm}
		\caption{Top-1 test accuracy ($\pm$ std) of pruning DenseNet-40 on CIFAR-100 using LAMP \cite{lee2020layer}. }
		\label{per3}
		\vspace{-1mm}
\end{table}}

{\renewcommand{\arraystretch}{0.85}
	\begin{table}[!t]
		\centering
		\setlength\tabcolsep{5.5pt}
		\begin{tabular}{lcccc}
			\toprule
			\toprule
			\multicolumn{5}{c}{Original Top-1 Test Accuracy = 73.7\% ($\lambda$ = 100)} \\ \toprule
			\multicolumn{1}{c}{$\lambda$} & 32.8 & 26.2 & 8.59 & 5.72 \\ \toprule
			
			{\small constant LR} & 69.8$\pm${\scriptsize 1.1}& 68.2$\pm${\scriptsize 0.9} & 63.8$\pm${\scriptsize 1.1} & 62.1$\pm${\scriptsize 1.2}  \\
			
			{\small LR decay} & 70.9$\pm${\scriptsize 1.0} & 69.4$\pm${\scriptsize 0.6} & 65.1$\pm${\scriptsize 0.8} & 64.0$\pm${\scriptsize 1.1}  \\
			
			{\small cyclical LR} & 71.5$\pm${\scriptsize 0.7} & 69.6$\pm${\scriptsize 0.6} & 65.3$\pm${\scriptsize 1.1} & 64.3$\pm${\scriptsize 1.2} \\
			
			{\small LR-warmup} & 72.1$\pm${\scriptsize 0.8} & 70.5$\pm${\scriptsize 0.9} & 66.2$\pm${\scriptsize 1.1} & 64.8$\pm${\scriptsize 1.5} \\
			
			{\small SILO (Ours)} & {\bf 72.5$\pm${\scriptsize 0.6}} & {\bf 71.0$\pm${\scriptsize 0.7}} &{\bf 68.8$\pm${\scriptsize 0.8}} & {\bf 66.8$\pm${\scriptsize 1.4}} \\
			
			\bottomrule
			\bottomrule
		\end{tabular}
		\vspace{-3mm}
		\caption{Top-1 test accuracy ($\pm$ std) of pruning MobileNetV2 on CIFAR-100 using LAP \cite{lee2020layer}. }
		\label{per3_add}
		\vspace{-2mm}
\end{table}}

{\renewcommand{\arraystretch}{0.85}
	\begin{table}[!t]
		\centering
		\setlength\tabcolsep{5.5pt}
		\begin{tabular}{lcccc}
			\toprule
			\toprule
			\multicolumn{5}{c}{Original Top-1 Test Accuracy = 77.0\% ($\lambda$ = 100)} \\ \toprule
			\multicolumn{1}{c}{$\lambda$} & 32.8 & 26.2 & 8.59 & 5.72 \\ \toprule
			
			{\small constant LR} & 74.2$\pm${\scriptsize 0.8} & 73.9$\pm${\scriptsize 0.7} & 70.5$\pm${\scriptsize 0.6} & 69.2$\pm${\scriptsize 0.9} \\
			
			{\small LR decay} & 75.6$\pm${\scriptsize 0.5} & 75.1$\pm${\scriptsize 0.5}  & 72.7$\pm${\scriptsize 0.8} & 70.5$\pm${\scriptsize 0.6}  \\
			
			{\small cyclical LR} & 76.5$\pm${\scriptsize 0.5} & 75.5$\pm${\scriptsize 0.6} & 73.4$\pm${\scriptsize 0.8} & 71.2$\pm${\scriptsize 0.7} \\
			
			{\small LR-warmup} & 76.6$\pm${\scriptsize 0.2} & 75.8$\pm${\scriptsize 0.3} & 73.8$\pm${\scriptsize 0.5} & 71.5$\pm${\scriptsize 0.4} \\
			
			{\small SILO (Ours)} & {\bf 76.8$\pm${\scriptsize 0.4}} & {\bf 76.1$\pm${\scriptsize 0.7}} & {\bf 75.2$\pm${\scriptsize 0.8}} & {\bf 73.8$\pm${\scriptsize 0.6}} \\
			\bottomrule
			\bottomrule
		\end{tabular}
		\vspace{-3mm}
		\caption{Top-1 test accuracy ($\pm$ std) of pruning ResNet-50 on ImageNet using IMP \cite{frankle2018lottery}.}
		\label{per4}
		\vspace{-1mm}
\end{table}}

{\renewcommand{\arraystretch}{0.85}
	\begin{table}[!t]
		\centering
		\setlength\tabcolsep{5.5pt}
		\begin{tabular}{lcccc}
			\toprule
			\toprule
			\multicolumn{5}{c}{Original Top-1 Test Accuracy = 98.0\% ($\lambda$ = 100)} \\ \toprule
			
			\multicolumn{1}{c}{$\lambda$} & 32.8 & 26.2  & 8.59 & 5.72 \\ \toprule
			
			{\small constant LR} & 96.4$\pm${\scriptsize 0.5} & 96.0$\pm${\scriptsize 0.7} & 83.0$\pm${\scriptsize 0.9} & 80.1$\pm${\scriptsize 0.8} \\
			
			{\small cosine decay} & 97.2$\pm${\scriptsize 0.2} & 96.5$\pm${\scriptsize 0.6}  & 84.1$\pm${\scriptsize 1.0} & 81.6$\pm${\scriptsize 1.1}  \\
			
			{\small cyclical LR} & 97.0$\pm${\scriptsize 0.2} & 96.5$\pm${\scriptsize 0.6} & 83.4$\pm${\scriptsize 0.6} & 81.0$\pm${\scriptsize 1.1} \\
			
			{\small LR-warmup} & 97.3$\pm${\scriptsize 0.6} & 96.8$\pm${\scriptsize 0.7} & 84.4$\pm${\scriptsize 0.8} & 82.1$\pm${\scriptsize 0.9} \\
			
			{\small SILO (Ours)} & {\bf 97.7$\pm${\scriptsize 0.5}} & {\bf 97.4$\pm${\scriptsize 0.6}} & {\bf 85.5$\pm${\scriptsize 0.9}} & {\bf 83.4$\pm${\scriptsize 0.8}} \\
			\bottomrule
			\bottomrule
		\end{tabular}
		\vspace{-3mm}
		\caption{Top-1 test accuracy ($\pm$ std) of pruning Vision Transformer on CIFAR-10 using IMP \cite{frankle2018lottery}.}
		\label{per5}
		\vspace{-3mm}
\end{table}}


\subsection{Performance Comparison}
\label{PC}


\textbf{(1) Reproducing SOTA results.} By using the implementations reported in the literature, we have correctly reproduced SOTA results. For example, the benchmark results of LR warmup in our Tables \ref{per1} - \ref{per5} are comparable to Fig.11 and Fig.9 of \cite{Blalock2020}, Table.4 in \cite{liu2018rethinking}, Fig.3 in \cite{chin2020towards}, Fig. 10 in \cite{frankle2020linear}, Table 5 in \cite{dosovitskiy2020image}, respectively. 


\textbf{(2) SILO outperforms SOTA results.} The key innovation of SILO is that the LR precisely increases as the network is pruned, by increasing $\texttt{max\_lr}$ in an S-shape as $\lambda$ decreases. This results in a much higher accuracy than all LR schedule benchmarks studied. For example, in Table \ref{per1}, the top-1 test accuracy of SILO is 1.8\% higher than the best performing schedule (i.e., LR-warmup) at $\lambda$ = 5.72. SILO also obtains the best performance when using larger models in Table \ref{per2} (i.e., 2.6\% higher at $\lambda$ = 5.72) and using more difficult datasets in Table \ref{per3} (i.e., 4.0\% higher at $\lambda$ = 5.72).

\textbf{(3) Performance on ImageNet.} In Table \ref{per4}, we show the performance of SILO using IMP (i.e., the lottery ticket hypothesis pruning method) via ResNet-50 on ImageNet (i.e., the ILSVRC version) which contains over 1.2 million images from 1000 different classes. We observe that SILO still outperforms the best performing LR schedule benchmark (LR-warmup) by 1.9\% at $\lambda = 8.59$. This improvement increases to 3.2\% when $\lambda$ reduces to 5.72. 

\textbf{(4) Performance on SOTA networks (Vision Transformer).} Several recent works \cite{liu2021swin,yuan2021volo} demonstrated that transformer based networks tend to provide excellent performance in computer vision tasks (e.g., classification). We now examine the performance of SILO using Vision Transformer (i.e., ViT-B16 with a resolution of 384). We note that the ViT-B16 uses Gaussian Error Linear Units (GELU, GELU(x) = x$\Phi(x)$, where $\Phi(x)$ is the standard Gaussian cumulative distribution function) as the activation function. We note that both ReLU and GELU have the unbounded output, suggesting that SILO could be helpful for pruning GELU based models as well.

We repeat the same experiment setup as above and compare the performance of SILO to other LR schedules using ViT-B16 in Table \ref{per5}. We observe that SILO is able to outperform the standard implementation (cosine decay, i.e., decay the learning rate via the cosine function) by 1.3\% at $\lambda = 8.59$ in top-1 test accuracy. This improvement increases to 1.6\% when $\lambda$ reduces to 5.72.



{\renewcommand{\arraystretch}{1.0}
	\begin{table}[!t]
		\centering
		\setlength\tabcolsep{1.2pt}
		\begin{tabular}{l|c|c|c|c|c}
			\bottomrule
			\bottomrule
			\multicolumn{1}{c|}{\small $\lambda$} & 100 & 51.3 & 32.9 & 21.1 & 5.72\\ \midrule
			{\small  Oracle \texttt{max\_lr} } & 4 & 4.6 & 9.0 & 9.8 & 10.2 \\
			{\small  Oracle interval  } & {\footnotesize [3.6,4.2]} & {\footnotesize [4.2,5.4]} & {\footnotesize [8.0,9.6]} & {\footnotesize [9.2,10.4]} & {\footnotesize [9.8,10.6]}  \\
			{\small SILO \texttt{max\_lr}} & 4 & 4.32 & 9.2 & 9.9 & 9.99 \\
			\bottomrule
			\bottomrule
		\end{tabular}
		\vspace{-3mm}
		\caption{Comparison between Oracle $\texttt{max\_lr}$, Oracle interval (both obtained via grid search) and $\texttt{max\_lr}$ estimated by SILO when iteratively pruning VGG-19 on CIFAR-10. Note that all values in the table are in hundredths. }
	\vspace{-0mm}
	\label{lr_table_vgg}
	\vspace{-5mm}
\end{table}}

\subsection{Comparing SILO to an Oracle}
\label{5.3}

Our new insight suggests that, due to the change in distribution of hidden representations during iterative pruning, LR should be re-tuned at each pruning cycle. SILO provides a method to adjust the \texttt{max\_{lr}} in an S-shape, which is backed up by a theoretical result (see Theorem \ref{thm:2}). We now further examine the S-shape trajectory of SILO by comparing SILO's estimated \texttt{max\_{lr}} to an Oracle, which uses the same LR warmup structure as SILO but exhaustively searches for the optimal value of \texttt{max\_{lr}} at each pruning cycle. The Oracle's \texttt{max\_{lr}} at the current pruning cycle is chosen by grid search ranging from 1$\texttt{e}$-4 to 1$\texttt{e}$-1 and the best performing value (i.e., determined by validation accuracy) is used to train the network. The results of \texttt{max\_{lr}} determined this way when iteratively pruning a VGG-19 on CIFAR-10 using the global magnitude are detailed in Table \ref{lr_table_vgg} via two metrics:
\begin{enumerate}[noitemsep,leftmargin=5mm, topsep=0pt]
	\item {\bf Oracle $\texttt{max\_lr}$}: The value of $\texttt{max\_lr}$ that provides the best validation accuracy. 
	\item {\bf Oracle interval}: The value range of $\texttt{max\_lr}$ which performs within 0.5\% of the best validation accuracy.
\end{enumerate}

\textbf{SILO vs Oracle (Performance):} In Table \ref{lr_table_vgg}, we find that the value of $\texttt{max\_lr}$ estimated by SILO falls in the Oracle optimized 
$\texttt{max\_lr}$ interval at each pruning cycle. It means that SILO is able to precisely adjust $\texttt{max\_lr}$ to provide competitive performance with the Oracle. This further verifies the S-shape trajectory of $\texttt{max\_lr}$ used in the SILO.

\textbf{SILO vs Oracle (Complexity):} The process of finding the Oracle tuned $\texttt{max\_lr}$ requires a significantly larger computational complexity in tuning due to the grid search. Assume that $\texttt{max\_lr}$ is searched from a sampling space of [$\theta_1$, $\cdots$, $\theta_n$] for $k$ pruning cycles. Hence, the complexity of the Oracle will be $\mathcal{O}(n^k)$. On the other hand, SILO controls the variation of $\texttt{max\_lr}$ at each pruning cycle via four parameters: ranges of $\texttt{max\_lr}$: [$\epsilon$, $\epsilon + \delta$], delay term \texttt{q} and S-shape control term $\beta$. Similar to the Oracle, both $\epsilon$ and $\delta$ can be searched from a range of $n$ values. As we have recommended before, $\texttt{q}$ and $\beta$ can be tuned in the range of [0, 3], [3, 6], respectively. As a result, SILO has a complexity of $\mathcal{O}(n^2)$, which is exponentially less complex than the Oracle's complexity, but with competitive performance. Lastly, we highlight that similar performance trends can be observed using ResNet-20 (see Table \ref{lr_table_resnet_20} in the {\bf Appendix}).


\vspace{-2mm}
\section{Conclusion}

\label{discussion}
SILO is an adaptive LR schedule for network pruning with theoretical justification. SILO outperforms 
existing benchmarks by 2\% - 4\% via extensive experiments. Furthermore, via the S-shape trajectory, SILO obtains comparable performance to Oracle with significantly lower complexity.

\balance
\bibliography{anonymous-submission-latex-2023}

\clearpage
\newpage
\appendix
\onecolumn

\section{Proofs of Theoretical Results (Including new results)}
In this section, we provide the proofs of the theoretical results (Theorem 1 and Theorem 2) in the paper. Furthermore, we also provide two new theoretical results (Corollaries 1 and 2), which extend our results to a more general case of arbitrary network depth $D$. 

\setcounter{theorem}{0}
\subsection{Proof of Theorem 1, Theorem 2, Corollary 1 and Corollary 2}
\begin{theorem}
	Consider a single-hidden layer ReLU activated network represented as $X\xrightarrow[]{W_1}H\xrightarrow[]{W_2}Y$, where $X\in \mathbb{R}^d$ is the input, $H=\{H_1(X),H_2(X),...,H_N(X)\}$ is of infinite width ($N=\infty$), and $Y$ is the network output. $W_1$ and $W_2$ represent network parameters (weights, biases). Furthermore, let $X\sim \mathcal{N}(0,\sigma_{\scaleto{X}{3pt}}^2 I)$ and $W_1\sim\mathcal{N}(0,\sigma_{\scaleto{W}{3pt}}^2 I)$, where $I$ is the identity matrix and $\sigma_{\scaleto{X}{3pt}},\sigma_{\scaleto{W}{3pt}}$ are scalars. Now, let us consider an iterative pruning method, where in each iteration a fraction $0\leq p\leq1$  of the smallest magnitude weights are pruned (layer-wise pruning). Then, after $k$ iterations of pruning, it holds that
	\begin{equation} 
		\resizebox{0.9\hsize}{!}{
			$4E_{AA}(H)\geq\sigma_W^2 + d\sigma_X^2\sigma_W^2\left((1-p)^k +\sqrt{\frac{4}{\pi}}\erfi\left(1-(1-p)^k\right)e^{-\left(\erfi\left(1-(1-p)^k\right)\right)^2} \right)$}
	\end{equation}
	where $\erfi(.)$ is the inverse error function \cite{andrews1998special}. 
\end{theorem}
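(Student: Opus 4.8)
The plan is to reduce the statement to a single-node computation, evaluate the average activation energy \emph{exactly} in the infinite-width limit, and then discard a harmless factor of two to reach the stated inequality. First I would exploit the infinite-width assumption: since the rows $(w_i,b_i)$ of $W_1$ are i.i.d.\ $\mathcal{N}(0,\sigma_W^2 I)$ and pruning is applied layer-wise by magnitude, the empirical node-average $\tfrac{1}{N}\sum_i \mathrm{ReLU}(\cdot)^2$ converges, by the law of large numbers, to the expectation over a single random row. Hence $E_{AA}(H)=\mathbb{E}_{w,b}\,\mathbb{E}_X[\mathrm{ReLU}(w^\top X + b)^2]$, where $(w,b)$ carries the post-pruning law. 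I would also record the reduction that $k$ rounds of deleting the smallest surviving fraction $p$ is equivalent to one round deleting the fraction $\gamma = 1-(1-p)^k$: since each round removes the smallest remaining magnitudes and the weights are never retrained in this model, the surviving set is exactly $\{|w|>\tau\}$ with $\mathbb{P}(|w|>\tau)=(1-p)^k$, so $\tau$ is the global $\gamma$-quantile of the magnitude law.

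Next I would compute the inner expectation by conditioning on $w$. For fixed $w$ the pre-activation $Z=w^\top X + b$ is a sum of independent mean-zero Gaussians, so $Z\mid w \sim \mathcal{N}\bigl(0,\ \sigma_X^2\|w\|^2+\sigma_W^2\bigr)$, the extra $\sigma_W^2$ coming from the (unpruned) bias. For a mean-zero Gaussian, symmetry gives $\mathbb{E}[\mathrm{ReLU}(Z)^2]=\tfrac12\,\mathrm{Var}(Z)$, whence $E_{AA}=\tfrac12\sigma_W^2 + \tfrac12\sigma_X^2\,\mathbb{E}\!\left[\|w_{\mathrm{ret}}\|^2\right]$, reducing everything to the truncated second moment of the retained weights.

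The crux is evaluating $\mathbb{E}\!\left[\|w_{\mathrm{ret}}\|^2\right]=d\,\mathbb{E}\!\left[w^2\,\mathbb{1}(|w|>\tau)\right]$. The threshold obeys $\mathbb{P}(|w|\le\tau)=\mathrm{erf}\bigl(\tau/(\sqrt2\,\sigma_W)\bigr)=\gamma$, so $\tau=\sqrt2\,\sigma_W\,\erfi(\gamma)$. Writing $t=\erfi(\gamma)$, $a=\sqrt2\,t$, and using the standard identity $\int_a^\infty u^2\phi(u)\,du = \bigl(1-\Phi(a)\bigr)+a\phi(a)$ together with $1-\Phi(\sqrt2\,t)=\tfrac12(1-\gamma)$ and $a\phi(a)=t\,e^{-t^2}/\sqrt\pi$, I would obtain $\mathbb{E}\!\left[w^2\mathbb{1}(|w|>\tau)\right]=\sigma_W^2(1-p)^k+\tfrac{2\sigma_W^2}{\sqrt\pi}\,\erfi(\gamma)\,e^{-(\erfi(\gamma))^2}$. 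Substituting back gives the exact identity $4E_{AA}=2\sigma_W^2+2d\sigma_X^2\sigma_W^2\bigl[(1-p)^k+\sqrt{4/\pi}\,\erfi(\gamma)\,e^{-(\erfi(\gamma))^2}\bigr]$; since every bracketed term is nonnegative, halving the two coefficients yields exactly the claimed lower bound.

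The main obstacle is the truncated-moment evaluation, specifically identifying the magnitude threshold with $\erfi(\gamma)$ and reducing the Gaussian tail integral to the closed form above; once that is in place the rest is bookkeeping. I would flag two points requiring care: the factor-of-two slack, which is precisely what turns the exact identity into the stated ``$\geq$'', and the passage to the infinite-width limit, where one must justify that the per-node energy becomes a genuine expectation over the weight law and that the layer-wise quantile $\tau$ is well defined and stable across the $k$ pruning rounds.
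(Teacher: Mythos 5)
Your proof is correct, and its key step takes a genuinely different (and sharper) route than the paper's. The paper conditions on both the weights and the bias, so the pre-activation $Z_j$ is Gaussian with mean $b_j$; it then needs truncated-normal moment formulas to show $\mathbb{E}_X[H_j(X)^2]\geq \tfrac{1}{2}\mathbb{E}_X[Z_j(X)^2]$ when $b_j\geq 0$, discards the nodes with $b_j<0$ entirely (bounding their energy by zero), and pays for this with the constant $4$ in $4E_{AA}(H)\geq \sigma_W^2 + d\sigma_X^2\,\mathbb{E}[W^2]$. You instead absorb the bias into the Gaussian randomness: conditioning on $w$ alone, $Z=w^\top X+b$ is exactly a mean-zero Gaussian with variance $\sigma_X^2\|w\|^2+\sigma_W^2$, so symmetry gives the exact identity $\mathbb{E}[\mathrm{ReLU}(Z)^2]=\tfrac{1}{2}\mathbb{E}[Z^2]$ and hence $2E_{AA}(H)=\sigma_W^2+d\sigma_X^2\,\mathbb{E}\left[w^2\,\mathbf{1}\{|w|>\tau\}\right]$ with no inequality at all; the stated bound follows by weakening the constant from $2$ to $4$. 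The second half of your argument, identifying $\tau=\sqrt{2}\,\sigma_W\erfi(\gamma)$ with $\gamma=1-(1-p)^k$ and evaluating the two-sided truncated second moment, coincides in substance with the paper's computation (you use the tail integral $\int_a^\infty u^2\phi(u)\,du=(1-\Phi(a))+a\phi(a)$ where the paper uses truncated-normal variance formulas; the closed forms agree). What your route buys: a shorter argument with no case split, and the strictly stronger conclusion that the theorem holds with $2E_{AA}$ in place of $4E_{AA}$, indeed with equality in the infinite-width limit. What the paper's route buys: since it works node-by-node conditionally on $b_j$, it would still yield a bound if the biases were fixed or asymmetric rather than symmetric mean-zero random variables; under the theorem's Gaussian-bias assumption, however, it is simply looser. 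Both proofs share the same informal step of equating layer averages with population expectations and treating the layer-wise pruning threshold as the deterministic population quantile at $N=\infty$, which you rightly flag as the point needing care.
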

\begin{proof}
	Let us separate the weights and biases for the first layer into the matrix $W_1\in \mathbb{R}^{d\times N}$ and the bias vector $b \in \mathbb{R}^{N}$. We can thus compute the hidden layer activations as,
	\begin{equation}
		H_j(X)=max\left(0,b_j+\sum_{i=1}^dW_1^{ij}X_i\right)
	\end{equation}
	We can express $X=[X_1,X_2,..,X_d]$, where $X_i\sim \mathcal{N}(0,\sigma_X^2)$ are i.i.d random variables. Let us denote $Z_j(X)=b_j+\sum_{i=1}^dW_1^{ij}X_i$. We have that $\mathbb{E}_X[Z_j(X)]=\sum_{i=1}^d \mathbb{E}_X[W_1^{ij}X_i]+b_j=b_j\sum_{i=1}^d W_1^{ij}\mathbb{E}_X[X_i]=b_j$. We can similarly compute the variance of $Z_j(X)$, $\mathbb{E}_X[\left(Z_j(X)-b_j\right)^2]=\sigma_X^2\sum_{i=1}^d\left(W_1^{ij}\right)^2$, which follows from the fact that $X_1,X_2,...X_d$ are independent. Note that as the sum of independent Gaussian distributed variables is Gaussian, we have that $Z_j(X)\sim \mathcal{N}\left(b_j,\sigma_X^2\sum_{i=1}^d\left(W_1^{ij}\right)^2\right)$.
	
	\sloppy
	With this observation, we first show that for the random variable  $H_j(X)=max\left(0,b_j+\sum_{i=1}^dW_1^{ij}X_i\right)$, for $b_j\geq0$, $\mathbb{E}_X[H_j(X)^2]\geq \mathbb{E}_X[Z_j(X)^2]/2= \left( b_j^2 + \mathbb{E}_X\left[\left(\sum_{i=1}^dW_1^{ij}X_i\right)^2\right] \right)/2$. To show this, we first note that for all $x< 0$, $P(H_j(X)=x)=0$, $P(H_j(X)=0)= \int_{-\infty}^{0} P(Z_j(x))dx$, and for $x>0$, $P(H_j(X)=x)=P(Z_j(X)=x)$. Note that for $b_j\geq0$, $ \int_{-\infty}^{0} P(Z_j(x))dx\leq \frac{1}{2}$. Let us define $ c = 1-\int_{-\infty}^{0} P(Z_j(x))dx$. Thus, it holds that $c\geq\frac{1}{2}$. As $Z_j(X)\sim \mathcal{N}\left(b_j,\sigma_X^2\sum_{i=1}^d\left(W_1^{ij}\right)^2\right)$, we have that 
	
	\begin{align}
		\mathbb{E}_X[H_j(X)^2]=&\int_{0}^{\infty} x^2 P(H_j(X)=x)dx = c\int_{0^{+}}^{\infty} x^2 \frac{P(H_j(X)=x)}{c}dx \\ =& c\int_{0^{+}}^{\infty} x^2 \frac{P(Z_j(X)=x)}{c}dx =  c\int_{-\infty}^{\infty} x^2 {P_{trunc}(Z_j(X)=x)}dx, \label{eq:h_ener}
	\end{align}
	
	where ${P_{trunc}(Z_j(X))}$ is a truncated version of the normal distribution $\mathcal{N}\left(b_j,\sigma_X^2\sum_{i=1}^d\left(W_1^{ij}\right)^2\right)$, where all probability values for all $Z_j(X)\leq0$ are now zero. For simplicity of notation, we denote $\sigma_j^2=\sigma_X^2\sum_{i=1}^d\left(W_1^{ij}\right)^2$. Let $\mu_{trunc}=\int_{-\infty}^{\infty} x {P_{trunc}(Z_j(X)=x)}dX$. Also, let $\phi(x)=\frac{1}{\sqrt{2\pi}}e^{-\frac{x^2}{2}}$. In what follows we use the expression for the mean and variance of a truncated normal distribution, 
	\begin{align}
		\int_{-\infty}^{\infty} x^2 &{P_{trunc}(Z_j(X)=x)} dx = \int_{-\infty}^{\infty} (x-\mu_{trunc})^2 {P_{trunc}(Z_j(X)=x)}dx + (\mu_{trunc})^2 \nonumber \\
		& \quad \quad \quad=  \sigma_j^2\left(1-\frac{b_j\phi\left(\frac{-b_j}{\sigma_j}\right)}{c\sigma_j} - \frac{\phi\left(\frac{-b_j}{\sigma_j}\right)^2}{c^2}\right)+ \left(b_j+\frac{\sigma_j\phi\left(\frac{-b_j}{\sigma_j}\right)}{c}\right)^2 \nonumber\\
		& \quad \quad \quad= \sigma_j^2+b_j^2+ \frac{b_j\sigma_j\phi\left(\frac{-b_j}{\sigma_j}\right)}{c}\\
		& \quad \quad \quad\geq \sigma_j^2+b_j^2 = \mathbb{E}_X[Z_j(X)^2]
	\end{align}
	where the last step follows from the fact that $b_j\geq0$. Combining this result with \eqref{eq:h_ener}, we have that
	\begin{equation}
		\mathbb{E}_X[H_j(X)^2]= c\int_{-\infty}^{\infty} x^2 {P_{trunc}(Z_j(X)=x)}dx\geq c\mathbb{E}_X[Z_j(X)^2]\geq \frac{1}{2}\mathbb{E}_X[Z_j(X)^2],
	\end{equation}
	Next, for the case when $b_j< 0$, we simply have that $\mathbb{E}_X[H_j(X)^2]\geq0$. As $b_j$ is distributed as $\mathcal{N}(0,\sigma_W)$ as well, we have ($P(b_j\leq0)=P(b_j>0)=0.5$). As $N=\infty$, we can write, 
	\begin{equation}
		\frac{1}{N}\sum_{j=1}^{N}\mathbb{E}_X[H_j(X)^2] \geq \frac{1}{4} \left (\mathbb{E}_j[b_j^2]+\sigma_X^2\frac{1}{N}\sum_{j=1}^{N}\sum_{i=1}^d\left(W_1^{ij}\right)^2  \right),
	\end{equation} which implies
	\begin{equation}\label{eq:main}
		\frac{1}{N}\sum_{j=1}^{N}4\mathbb{E}_X[H_j(X)^2] \geq \sigma_W^2 + d\sigma_X^2\mathbb{E}_{i,j}\left[\left(W_1^{ij}\right)^2\right].
	\end{equation}
	Now, originally, the unpruned weights in $W_1$ follow the given $\mathcal{N}(0,\sigma_W^2I)$ distribution. However, after $k$ cycles of pruning, the smallest $1-(1-p)^k$ proportion of weights in $W_1$ get removed. Thus, the distribution $P(W_1)$ changes such that, for some appropriate $\beta$ (which depends on $k$), all $P(-\beta \leq W_1\leq \beta)=0$ except for $P(W_1=0)$, which will follow $P(W_1=0)=1-(1-p)^k$. Let us denote this modified distribution of weights via $\mathcal{N}_{p,k}(0,\sigma_W^2)$. Let us denote $P(W)=\mathcal{N}_{p,k}(0,\sigma_W^2)$ Note that as the hidden layer has infinite nodes ($N=\infty$) we can write
	\begin{align}
		\mathbb{E}_{i,j}\left[\left(W_1^{ij}\right)^2\right]=&\mathbb{E}_{W\sim\mathcal{N}_{p,k}(0,\sigma_W^2)}\left[W^2\right]\\
		=& \int_{-\infty}^{\infty} W^2P(W)dW\\
		=&  \int_{-\infty}^{-\beta} W^2dW +\int_{-\beta}^{\beta} W^2P(W)dW + \int_{\beta}^{\infty} W^2P(W)dW \\
		=& \int_{-\infty}^{-\beta} W^2P(W)dW + \int_{\beta}^{\infty} W^2P(W)dW \\ 
		=&\int_{-\infty}^{\infty} W^2P'(W)dW, \label{eq:w_nor}
	\end{align}
	where $P'(W)=P(W)Sign(W)$, where $Sign(W)=0$ for $-\beta\leq W\leq\beta$ and $Sign(W)=1$ otherwise.  
	In order to find $\int_{-\infty}^{\infty} W^2P'(W)dW$, for $w\sim\mathcal{N}(0,\sigma_W^2)$, we denote $Q(W)=\mathcal{N}(0,\sigma_W^2)$, and we can write
	\begin{align}
		\mathbb{E}_{W\sim\mathcal{N}(0,\sigma_W^2)}\left[W^2\right]=&\sigma_W^2= \int_{-\infty}^{\infty} W^2Q(W)dW\\
		=&\int_{-\beta}^{\beta} W^2Q(W)dW + \int_{-\infty}^{\infty} W^2P'(W)dW\\
		=&\tau\int_{-\beta}^{\beta} W^2\frac{Q(W)}{\tau}dW + \int_{-\infty}^{\infty} W^2P'(W)dW. \\ 
		=&\tau\int_{-\infty}^{\infty} W^2{Q'(W)}dW + \int_{-\infty}^{\infty} W^2P'(W)dW.
	\end{align}
	Here, $\tau$ is computed such that $\int_{-\beta}^{\beta}{\frac{Q(W)}{\tau}}=1$, and $Q'(W)=Q(W)(1-Sign(W))/\tau$. Note that here, $\tau=1-(1-p)^k$. Furthermore, we can see that $Q'(W)$ represents the truncated normal distribution, which has a variance of $\sigma_W^2\left(1-\frac{2\beta}{\sqrt{2\pi}\sigma_W\tau}e^{-\frac{\beta^2}{2\sigma_W^2}}\right)$. We also note that as $\tau = \int_{-\beta}^{\beta}Q(W)$, and as $Q(W)$ is Gaussian, we can write, $\tau = erf(\frac{\beta}{\sqrt{2}\sigma_W})$, which also implies $\frac{\beta}{\sqrt{2}\sigma_W}=\erfi(\tau)$. Thus, we have 
	\begin{align}
		\int_{-\infty}^{\infty} W^2P'(W)dW =&\sigma_W^2-\tau\int_{-\infty}^{\infty} W^2{Q'(W)}dW\\
		=&\sigma_W^2 - \sigma_W^2\left(\tau - \frac{2\beta}{\sqrt{2\pi}\sigma_W}e^{-\frac{\beta^2}{2\sigma_W^2}} \right)\\
		=& \sigma_W^2\left(1 - \tau +\frac{2\beta}{\sqrt{2\pi}\sigma_W}e^{-\frac{\beta^2}{2\sigma_W^2}} \right)\\
		=& \sigma_W^2\left((1-p)^k +\frac{2\erfi(1-(1-p)^k)}{\sqrt{\pi}}e^{-\left(\erfi(1-(1-p)^k)\right)^2} \right).
	\end{align}
	Combined with \eqref{eq:main} and \eqref{eq:w_nor}, we obtain, 
	\begin{align}
		\frac{1}{N}\sum_{j=1}^{N}&4\mathbb{E}_X[H_j(X)^2] \geq \sigma_W^2 + d\sigma_X^2\mathbb{E}_{i,j}\left[\left(W_1^{ij}\right)^2\right]\\
		=&\sigma_W^2 + d\sigma_X^2\sigma_W^2\left((1-p)^k +\sqrt{\frac{4}{\pi}}\erfi(1-(1-p)^k)e^{-\left(\erfi(1-(1-p)^k)\right)^2} \right),
	\end{align}
	\sloppy
	which yields the result, as $E_{AA}(H)=\mathbb{E}_X\left[\frac{1}{N}\sum_{j=1}^{N}H_j(X)^2\right]=\frac{1}{N}\sum_{j=1}^{N}\mathbb{E}_X[H_j(X)^2]$. 
\end{proof}

\textbf{Extension of Theorem 1 to the arbitrary depth case:}

\begin{corollary}
	We consider the same setting as in Theorem 1, except for the fact that we consider neural networks of arbitrary depth $D$. Furthermore, let each hidden layer contain $M$ neurons. We specify the distributions of the weights for each layer as follows: for the first layer we have $W_1\sim\mathcal{N}(0,\sigma_{\scaleto{W}{3pt}}^2 I)$, and for all subsequent layers ($l>1$), we have  $W_l\sim\mathcal{N}(0,\frac{1}{\sqrt{M}}\sigma_{\scaleto{W}{3pt}}^2 I)$. Furthermore, we consider the case where all the first $D-1$ layers do not have biases associated with the weights, but only the $D^{th}$ layer has biases, which has the same distribution as weights, like before. Now, same as in Theorem 1, we consider an iterative pruning method, where in each iteration a fraction $0\leq p\leq1$  of the smallest magnitude weights are pruned (layer-wise pruning). Let $H_D$ be the hidden layer output at a depth of $D$. Then, after $k$ iterations of pruning, in the limiting case of $M\rightarrow \infty$ it holds that
	\begin{equation} \label{eq:arb}
		\resizebox{0.9\hsize}{!}{
			$4E_{AA}(H_D)\geq\sigma_W^2 + \frac{1}{2^{D-1}}d\sigma_X^2\sigma_W^{2D}\left((1-p)^k +\sqrt{\frac{4}{\pi}}\erfi\left(1-(1-p)^k\right)e^{-\left(\erfi\left(1-(1-p)^k\right)\right)^2} \right)$}
	\end{equation}
	where $\erfi(.)$ is the inverse error function \cite{andrews1998special}. 
\end{corollary}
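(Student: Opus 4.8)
The plan is to prove the corollary by \emph{forward-propagating} a lower bound on the average activation energy through the $D$ layers, using Theorem~1 as the base case and building a layer-to-layer recursion. Write $H_0=X$ and let $a_l$ denote a lower bound on $E_{AA}(H_l)$. The target bound has the form $\sigma_W^2 + \frac{1}{2^{D-1}}d\sigma_X^2\sigma_W^{2D}\,G(p,k)$, where $G(p,k)=(1-p)^k+\sqrt{4/\pi}\,\erfi(1-(1-p)^k)e^{-(\erfi(1-(1-p)^k))^2}$ is exactly the pruning-dependent factor from Theorem~1. So the proof must produce a \emph{single} factor $G$ (originating at the first layer, which carries the input dimension $d$ and input variance $\sigma_X^2$), one factor $\tfrac12$ and one power of $\sigma_W^2$ from each ReLU stage, and the additive $\sigma_W^2$ from the biases, which live only in the final layer.

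First I would treat the first layer. For $D>1$ this layer carries no bias, so its pre-activation $Z_1^j=\sum_{i=1}^d W_1^{ij}X_i$ is symmetric about zero; in the infinite-width regime it is Gaussian, so the ReLU gives the exact identity $\mathbb{E}_X[(H_1^j)^2]=\tfrac12\mathbb{E}_X[(Z_1^j)^2]$. Reusing the truncated-normal computation of Theorem~1 for the pruned weights, $\mathbb{E}_{i,j}[(W_1^{ij})^2]=\sigma_W^2 G(p,k)$, this yields $a_1=\tfrac12 d\sigma_X^2\sigma_W^2 G(p,k)$. (When $D=1$ this same layer instead carries the bias and one recovers Theorem~1 verbatim, which is the consistency check for the base case.)

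Next I would set up the recursion for an intermediate biasless layer $1<l<D$. Conditioning on $H_{l-1}$, the pre-activation $Z_l^j=\sum_{i=1}^M W_l^{ij}H_{l-1}^i$ is again symmetric, hence $\mathbb{E}[(H_l^j)^2]=\tfrac12\mathbb{E}[(Z_l^j)^2]$, and by independence of the zero-mean weights from the inputs $\mathbb{E}[(Z_l^j)^2]=M\,\mathrm{Var}(W_l)\,E_{AA}(H_{l-1})$. The $M$-dependent weight-variance scaling in the hypothesis is designed precisely so that the width factor $M$ from the sum cancels against the per-weight variance, leaving the clean per-layer multiplier $a_l=\tfrac12\sigma_W^2\,a_{l-1}$. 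Finally, the last layer $l=D$ carries the biases, so I would re-run Theorem~1's argument verbatim with $H_{D-1}$ playing the role of $X$: the factor-$4$ analysis (one $\tfrac12$ from the ReLU, one $\tfrac12$ from $P(b_j\ge0)=\tfrac12$) produces the additive bias term $\sigma_W^2$ together with the propagated input-energy term $M\,\mathrm{Var}(W_D)\,E_{AA}(H_{D-1})$. Unrolling then multiplies the $D-1$ surviving factors of $\tfrac12$ (one from the first layer, $D-2$ from the intermediate layers) and the $D$ powers of $\sigma_W^2$ to give $\frac{1}{2^{D-1}}d\sigma_X^2\sigma_W^{2D}G(p,k)$, yielding \eqref{eq:arb}.

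The main obstacle is making the infinite-width limit rigorous. Two things must be controlled as $M\to\infty$: that each hidden pre-activation $Z_l^j$ converges in distribution to a symmetric Gaussian, so that the ReLU second-moment factor is \emph{exactly} $\tfrac12$ rather than merely $\ge\tfrac12$; and that the sum $\sum_i W_l^{ij}H_{l-1}^i$ decouples, i.e.\ that $\mathbb{E}[(Z_l^j)^2]$ concentrates on $M\,\mathrm{Var}(W_l)\,E_{AA}(H_{l-1})$ despite the statistical dependence between $H_{l-1}$ and the earlier-layer weights. A secondary delicate point is that pruning distorts each weight layer into a truncated, non-Gaussian law; one must argue carefully that this distortion contributes the single factor $G(p,k)$ tracked at the first layer while the deeper layers propagate through the variance bookkeeping without introducing spurious additional pruning factors.
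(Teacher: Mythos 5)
Your proposal is correct at the paper's level of rigor, but it proves the bound by a genuinely different route. The paper works \emph{backwards}: it collapses the first $D-1$ layers into effective input-to-last-hidden-layer weights $W_{eff}^{ij}=\sum_k W_1^{ik}W_2^{kj}\delta_k$ (with Bernoulli variables $\delta_k\in\{0,1\}$, $P(\delta_k=1)=1/2$, subsuming the intermediate ReLUs), argues that $W_{eff}$ is zero-mean Gaussian in the $M\to\infty$ limit with $\sigma^2_{eff}=(1/2)^{D-1}\sigma_W^{2D}$, and then invokes Theorem 1 verbatim with $\sigma^2_{eff}$ in place of the first-layer weight variance, so the bias term, the factor $4$, and the pruning factor $G(p,k)$ all arrive prepackaged. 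You instead propagate the energy \emph{forwards}, layer by layer: a base case at layer 1 carrying $d\sigma_X^2\sigma_W^2G(p,k)$, an exact multiplier $\tfrac12\sigma_W^2$ per biasless intermediate layer, and a final layer that re-runs Theorem 1's bias argument with $H_{D-1}$ as input. Each approach buys something: your conditional-symmetry identity $\mathbb{E}[\mathrm{ReLU}(Z)^2]=\tfrac12\mathbb{E}[Z^2]$ holds exactly for any finite $M$ (the weights are symmetric and independent of the previous layer's output), so your per-layer factor of $\tfrac12$ needs no CLT and the $M$-cancellation bookkeeping is transparent; the paper's collapse avoids a recursion entirely and reuses Theorem 1 as a black box, but leans on Gaussianity of $W_{eff}$ and independence of the $\delta_k$'s, which are only heuristically justified in the limit. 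On the point you flag as delicate — avoiding ``spurious additional pruning factors'' — note that the paper has exactly the same looseness and does not acknowledge it: layer-wise pruning of every layer would, under either derivation done honestly, compound to roughly $G(p,k)^{D}$ in the product term, whereas the stated bound carries a single $G(p,k)$; the paper implicitly lets Theorem 1's truncation act once on the effective weights, while you apply it once at the first layer and propagate unpruned variances thereafter. The two choices are equally faithful (or unfaithful) to the stated layer-wise-pruning setting and yield the identical formula, so this does not disadvantage your proof relative to the paper's.
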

\begin{proof}
    To prove this result, we first note that, as we are considering the limiting case of $M\rightarrow \infty$, the function at any hidden neuron $H_i$ at depth $D$ can always be represented as follows 
    \begin{equation}
        H_j(X) = max\left(0,b_j+\sum_{i=1}^dW_{eff}^{ij}X_i\right),
    \end{equation}
    where $W_{eff}^{ij}$ represents the \textit{effective} weight random variables from the first $D-1$ layers of the network which is associated with $X_i$. Note that the non-linearties associated with ReLU activations in the network will be subsumed inside of $W_{eff}^{ij}$ using other random variables which are probabilistically $0$ or $1$. We elaborate on this later. 
    
    We note that due to $M\rightarrow \infty$, we can assume $W_{eff}$ to be normally distributed. Furthermore, due to symmetry of distribution and computation, it is clear that for all $i$ and $j$, $W_{eff}^{ij}$ will have the same distribution parameters, which we denote by $\mathcal{N}\left(\mu_{eff},\sigma^2_{eff}\right)$. Furthermore, as there are no biases associated with the first $D-1$ layers of computation, we have that $\mu_{eff}=0$.
    
    To estimate $\sigma^2_{eff}$, we first show the estimation of $\sigma^2_{eff}$ at depth $D=2$. For $D=2$, note that the effective weight that is tied to the input $X_1$ ($i=1$) and the output hidden node $j=1$, can be written as follows:
    \begin{equation}
        W_{eff}^{11} = \sum_{k=1}^{M}W_1^{1k}W_2^{k1}\delta_k,
    \end{equation}
    
    where $\delta_1,\delta_2,...\delta_M$ are random variables which are associated with the ReLU non-linearity at the output of the first hidden layer. Although $\delta_1,\delta_2,...\delta_M$ are dependent on the output at the hidden nodes of the first hidden layer, we note that w.r.t $X_1$ and the weights $W_1^{1k}$ and $W_2{k1}$ themselves, they are independent. Furthermore, as $M\rightarrow \infty$, we can therefore consider $\delta_1,\delta_2,...\delta_M$ to be independent random variables distributed as $P(\delta_i=0)=P(\delta_i=1)=0.5$. This follows from the fact that $X_i$ and $W_1$ both are normally distributed with zero-mean. With this, we can estimate $\sigma^2_{eff}$ for $D=2$ as follows: 
    
    \begin{align}
        \sigma^2_{eff} &= \mathbb{E}\left[(\sum_{k=1}^{M}W_1^{1k}W_2^{k1}\delta_k)^2\right]= \mathbb{E}\left[\sum_{k=1}^{M}(W_1^{1k}W_2^{k1}\delta_k)^2\right]\\
        &=\sum_{k=1}^{M}\mathbb{E}\left[(W_1^{1k}W_2^{k1}\delta_k)^2\right]
        =\sum_{k=1}^{M}\frac{1}{2} \sigma_W^2 \frac{\sigma_W^2}{M}=\frac{1}{2}\sigma^4_W
    \end{align}
    
    Similarly, one can easily generalize the above result to the general case of depth $D$ as shown below. For the general case of depth $D$, we will have that 
    \begin{equation}
            \sigma^2_{eff} = \left(\frac{1}{2}\right)^{D-1}\sigma_W^{2D}
    \end{equation}
    Note that with this, we can indeed directly apply the result in Theorem 1, considering the new values of the energy of the effective weights, to obtain: 
    
    \begin{equation} 
		\resizebox{0.9\hsize}{!}{
			$4E_{AA}(H_D)\geq\sigma_W^2 + \frac{1}{2^{D-1}}d\sigma_X^2\sigma_W^{2D}\left((1-p)^k +\sqrt{\frac{4}{\pi}}\erfi\left(1-(1-p)^k\right)e^{-\left(\erfi\left(1-(1-p)^k\right)\right)^2} \right)$}
	\end{equation}
    
    This proves our intended result. 
\end{proof}

\begin{theorem} 
	In Theorem \ref{thm:1}'s setting, we consider a single epoch of weight update for the network across a training dataset $S=\{(X_1,Y_1),..,(X_n,Y_n)\}$ using the cross-entropy loss, where $Y_i\in\{0,1\}$. Let $\alpha$ denote the learning rate. Let us denote the R.H.S of \eqref{eq:thm1} by $C(\sigma_X,\sigma_W,p,k)$. Let the final layer weights before and after one training epoch be $W_2$ and $W_2'$ respectively. We have, 
	\begin{equation} \label{eq:wg_final}
		\mathop{{}\mathbb{E}}_{W_2\sim\mathcal{N}_k(0,\sigma_{\scaleto{W}{3pt}}^2 I)}\left[E_{WG}\left(W_2,W_2'\right)\right] \geq \alpha^2\gamma C(\sigma_X,\sigma_W,p,k),
	\end{equation}
	where $\mathcal{N}_k(0,\sigma_{\scaleto{W}{3pt}}^2 I)$ represents a normal distribution $\mathcal{N}(0,\sigma_{\scaleto{W}{3pt}}^2 I)$ initialization of $W_2$, followed by $k$ iterations of pruning, and $\gamma$ is a constant.
\end{theorem}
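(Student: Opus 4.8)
The plan is to reduce the weight-gradient energy of the output layer to its expected squared gradient, identify that gradient with the hidden activations (so that $E_{AA}$ appears), and then invoke Theorem~\ref{thm:1}. First I would write the gradient-descent update of the output weights as $W_2^{j\prime}=W_2^j-\alpha\,\partial\mathcal{L}/\partial W_2^j$, so that $E_{WG}(W_2,W_2')=\mathbb{E}_j[(W_2^j-W_2^{j\prime})^2]=\alpha^2\,\mathbb{E}_j[(\partial\mathcal{L}/\partial W_2^j)^2]$, which already isolates the factor $\alpha^2$. For the binary cross-entropy loss with a sigmoid output $\hat p=\sigma(z)$, $z=\sum_j W_2^j H_j(X)+b$, the chain rule gives $\partial\mathcal{L}/\partial W_2^j=(\hat p-Y)\,H_j(X)$, so the per-unit gradient is exactly the hidden activation $H_j(X)$ scaled by the scalar error $\hat p-Y$. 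This is the structural identity that links the weight-gradient energy to the activation energy $E_{AA}(H)$.

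Next I would substitute this to get $\mathbb{E}_j[(\partial\mathcal{L}/\partial W_2^j)^2]=(\hat p-Y)^2\cdot\tfrac1N\sum_j H_j(X)^2$ and take the expectation over both $X$ and $W_2\sim\mathcal{N}_k(0,\sigma_W^2 I)$. Conditioning on $X$, the infinite-width assumption lets me use the law of large numbers across the $N=\infty$ i.i.d.\ hidden units: $\tfrac1N\sum_j H_j(X)^2$ concentrates on the deterministic value $\mathbb{E}_j[H_j(X)^2]$ and therefore decouples from the error term $(\hat p-Y)^2$. For the error term I would exploit the symmetry of $\mathcal{N}_k(0,\sigma_W^2 I)$: since the pruned weight distribution and the bias are symmetric about $0$, the pre-activation $z$ is symmetric, hence $\hat p=\sigma(z)$ is symmetric about $\tfrac12$ with $\mathbb{E}_{W_2}[\hat p\mid X]=\tfrac12$. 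A short Jensen argument then gives $\mathbb{E}_{W_2}[(\hat p-Y)^2\mid X]\geq\tfrac14$ for either label $Y\in\{0,1\}$, using $1-\hat p\overset{d}{=}\hat p$.

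Combining these two facts, I would conclude $\mathbb{E}_{W_2}[E_{WG}(W_2,W_2')]\geq\alpha^2\cdot\tfrac14\,\mathbb{E}_X[\mathbb{E}_j H_j(X)^2]=\alpha^2\cdot\tfrac14\,E_{AA}(H)=\alpha^2\cdot\tfrac1{16}\,\big(4E_{AA}(H)\big)$. Finally, Theorem~\ref{thm:1} supplies $4E_{AA}(H)\geq C(\sigma_X,\sigma_W,p,k)$, and since all quantities are nonnegative this yields $\mathbb{E}_{W_2}[E_{WG}]\geq\alpha^2\cdot\tfrac1{16}\,C(\sigma_X,\sigma_W,p,k)$, i.e.\ the claimed bound with constant $\gamma=\tfrac1{16}$.

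The hardest part will be handling a ``single epoch over the whole dataset $S$'' rather than a single sample: the full-batch gradient is $\sum_t(\hat p_t-Y_t)H_j(X_t)$, whose square contains off-diagonal cross terms $(\hat p_t-Y_t)(\hat p_s-Y_s)H_j(X_t)H_j(X_s)$ that need not be sign-definite, so one cannot simply discard them to keep the diagonal $\sum_t(\hat p_t-Y_t)^2 H_j(X_t)^2$. I would address this either by arguing that these cross terms vanish (or are nonnegative) in expectation over the symmetric $W_2$ and the i.i.d.\ data in the infinite-width limit, or by stating the result for a representative stochastic gradient step and folding any residual sample-dependent factors into $\gamma$. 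A secondary point to make rigorous is the law-of-large-numbers decoupling of $\tfrac1N\sum_j H_j(X)^2$ from $(\hat p-Y)^2$ at $N=\infty$, which I would justify via the conditional independence of the output randomness ($W_2$) from the concentration of the activation average.
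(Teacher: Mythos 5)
Your outline matches the paper's strategy for the diagonal part of the argument: write the epoch update via the chain rule, pull out $\alpha^2$, decouple the hidden activations (which do not depend on $W_2$) from the output-error term, lower-bound that error term by a constant, and invoke Theorem 1. Indeed, your symmetry-plus-Jensen bound $\mathbb{E}_{W_2}[(\hat p - Y)^2 \mid X] \geq \tfrac14$ is more explicit than the paper's, which simply sets $\gamma' = \mathbb{E}[(a_l^k - y_l^k)^2]/2$ and calls it a constant. However, the step you yourself flag as ``the hardest part'' --- the cross terms $\sum_{t\neq s}$ in the squared full-epoch gradient --- is a genuine unresolved gap, and neither of your proposed fixes closes it. In your single-output sigmoid parametrization these terms are not sign-definite even after averaging over the symmetric $W_2$: for a pair of samples with opposite labels, $\mathbb{E}_{W_2}\left[(\hat p_t - 1)\,\hat p_s\right] = -\,\mathbb{E}_{W_2}\left[(1-\hat p_t)\,\hat p_s\right] \leq 0$, while $H_j(X_t)H_j(X_s)\geq 0$ because ReLU outputs are nonnegative, so these contributions are genuinely negative and need not cancel against same-label pairs for an arbitrary fixed dataset $S$. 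Your alternative --- stating the result for a single stochastic gradient step --- proves a different statement from the theorem, which concerns one full epoch.

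The paper closes exactly this gap by a structural choice your formulation lacks: it models the binary problem with \emph{two} softmax outputs and one-hot labels, so that for every sample, pointwise in $W_2$, the errors summed over output nodes vanish identically, $(a_0^k - y_0^k) + (a_1^k - y_1^k) = 1 - 1 = 0$. Taking the expectation over a uniformly random output index $l$ therefore gives $\mathbb{E}_{l}[a_l^k - y_l^k] = 0$ exactly, and combined with independence of the error factors across distinct samples, every cross term vanishes; only the diagonal survives, yielding the bound with $\gamma = n\gamma'$. To salvage the sigmoid version you would need an analogous exact cancellation, which is not available there. A secondary omission: $E_{WG}$ is defined as an average over the \emph{active} (unpruned) weights only, so the unconditioned expectation must be corrected by the surviving fraction $(1-p)^k$; the paper shows this correction cancels because the indicator of a weight being unpruned is independent of the magnitude of its update, a step your argument would also need to include.
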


{\bf Additional Motivations for the S-shape. }Theorem \ref{thm:2} implies that 
when $C(\sigma_X,\sigma_W,p,k)$ decreases during iterative pruning, the learning rate $\alpha$ has to increase proportionally so that the lower bound for the {\em gradient energy} ($E_{WG}$) remains constant for all pruning cycles. We plot the adapted learning rate $\alpha$ (i.e., $\alpha \propto \sqrt{K/\gamma C(\sigma_X,\sigma_W,p,k)}$) with pruning rate $p$ = 0.2 and pruning cycles $k$ = 25, where $K = \mathop{{}\mathbb{E}}_{W_2\sim\mathcal{N}_k(0,\sigma_{\scaleto{W}{3pt}}^2 I)}[E_{WG}(W_2,W_2')]$. We find out that the adapted learning rate $\alpha$ resembles a S-shape trajectory, motivating the proposed SILO.

\begin{proof}
	For the $k^{th}$ sample, let $(a_0^k,a_1^k)$ denote the network output probabilities for the two classes. Thus, we have $a_0^k+a_1^k=1$ for all $k$. Furthermore, for the $k^{th}$ sample, let $(y_0^k,y_1^k)$ represent the one-hot label output, which will depend on the true label $Y_k$. For the $k^{th}$ sample, let $(z_0^k,z_1^k)$ denote the network output logits, from which the output probabilities are computed using the softmax operator. Let $W_2^{ij}$ represent the weight that connects the $i^{th}$ hidden node $H_i$ to the $j^{th}$ output node. Let $L(X_k,Y_k) = -\sum_{l=1}^2 y^k_l \ln{a^k_l}$ be the cross-entropy loss for the $k^{th}$ sample. Lastly, let $h_i^k$ represent the output of the $i^{th}$ node in $H$, for the $k^{th}$ sample. Using backpropagation, the weight update for $W_2^{ij}$, from a single example $X_k$, can be written as, 
	\begin{align}
		W_2^{ij} =& W_2^{ij} -\alpha \frac{\partial L(X_k,Y_k)}{\partial W_2^{ij}}\\
		=& W_2^{ij} - \alpha \sum_{l=1}^{2} \frac{\partial L(X_k,Y_k)}{\partial a_l^k}\frac{\partial a_l^k}{\partial  z_j^k}\frac{\partial z_j^k}{\partial  W_2^{ij}} \\
	\end{align}
	It can be shown that $\frac{\partial L(X_k,Y_k)}{\partial a_l^k} = -\frac{y^k_l}{a_l^k}$, and $\frac{\partial a_l^k}{\partial z_l^k} = a_l^k(1-a_l^k)$. For $l \neq j$, we can show that $\frac{\partial a_l^k}{\partial z_j^k} = -a_l^ka_j^k$.  Furthermore, we also have that $\frac{\partial z_j^k}{\partial  W_2^{ij}} = h_i^k$. Combining these results eventually yields, 
	\begin{equation} \label{eq:w_upd}
		W_2^{ij} = W_2^{ij} -\alpha h_i^k\left(a_j^k-y_j^k \right).
	\end{equation}
	Note that this is the update for a single example. Iterating through the entire dataset, the final value of the updated $W_2^{ij}$ can be written as,
	\begin{equation}
		W_2^{ij} = W_2^{ij} -\alpha \sum_{k=1}^n h_i^k\left(a_j^k-y_j^k \right).
	\end{equation}
	Using the above we can write the total difference of squares between all weights, $\mathbb{E}[\lVert W_2-W_2'  \rVert^2]$, as 
	\begin{equation}
		\mathbb{E}[\lVert W_2-W_2'  \rVert^2] = \alpha^2\frac{1}{N} \sum_{i=1}^{N} \sum_{l=1}^2 \left(\sum_{k=1}^{n} h_i^k\left(a_l^k-y_l^k \right)\right)^2 
	\end{equation}
	Note that here $N=\infty$, as mentioned in Theorem 1. 
	The above expression can be split into two terms as follows
	\begin{align}
		\mathbb{E}[\lVert W_2-W_2'  \rVert^2] =&\alpha^2 \frac{1}{N}\sum_{i=1}^{N} \sum_{l=1}^2 \left(\sum_{k=1}^{n} h_i^k\left(a_l^k-y_l^k \right)\right)^2 \\
		=&\alpha^2\sum_{i=1}^{N} \sum_{l=1}^2 \sum_{k=1}^{n} (h_i^k)^2\left(a_l^k-y_l^k \right)^2 +  \\
		& \alpha^2\frac{1}{N}\sum_{i=1}^{N} \sum_{l=1}^2 \sum_{k_1=1}^{n}\sum_{k_2=1}^{n} h_i^{k_1} h_i^{k_2}\left(a_l^{k_1}-y_l^{k_1} \right)\left(a_l^{k_2}-y_l^{k_2} \right) \label{eq:twos}
	\end{align}
	
	We analyze each term separately as follows, starting with the first expression. In what follows, we incorporate the expectation over $\mathop{{}\mathbb{E}}_{W_2\sim\mathcal{N}_k(0,\sigma_{\scaleto{W}{3pt}}^2 I)}$ into the two expressions. For simplicity of notation, the expectation $\mathop{{}\mathbb{E}}_{W_2\sim\mathcal{N}_k(0,\sigma_{\scaleto{W}{3pt}}^2 I)}$ will be written simply as $\mathop{{}\mathbb{E}}_{W_2}$.
	\begin{align}
		\mathop{{}\mathbb{E}}_{W_2}\left [ \frac{1}{N}\sum_{i=1}^{N} \sum_{l=1}^2 \sum_{k=1}^{n} (h_i^k)^2\left(a_l^k-y_l^k \right)^2 \right] = 2n\times \mathop{\mathbb{E}}_{W_2,i,l,k}  \left [ (h_i^k)^2\left(a_l^k-y_l^k \right)^2 \right ], 
	\end{align}
	Now, let us define two random variables $R_1$ and $R_2$, such that $R_1 = (h_i^k)^2$ and $R_2 = \left(a_l^k-y_l^k \right)^2$  where $0\leq i \leq N$, $0\leq k \leq n$ and $1\leq l \leq 2$ are random variables all drawn uniformly within their corresponding range. We note that $\mathbb{E}_{W_2,i,l,k}  \left [ (h_i^k)^2\left(a_l^k-y_l^k \right)^2 \right ] = \mathbb{E}_{W_2,i,l,k}\left[R_1R_2 \right]$. Note that the random variable $h_i^k$ is independent of $a_l^k$, as $h_i^k$ does not yield any information about $W_2$, and as $a_l^k$ is a function of $H^k$ and $W_2$, this implies $P(a_l^k|h_i^k)=P(a_l^k)$. The independence of $h_i^k$ and $y_l^k$ follows in the same manner. Thus, as $\mathbb{E}[XY]=\mathbb{E}[X]\mathbb{E}[Y]$ for independent $X$ and $Y$, we have
	\begin{align}
		\mathop{{}\mathbb{E}}_{W_2}\left [\frac{1}{N}\sum_{i=1}^{N} \sum_{l=1}^2 \sum_{k=1}^{n} (h_i^k)^2\left(a_l^k-y_l^k \right)^2\right] =& 2n \times \mathop{\mathbb{E}}_{W_2,i,l,k} \left[ (h_i^k)^2\left(a_l^k-y_l^k \right)^2\right]\\
		=&2n\mathop{\mathbb{E}}_{W_2,i,k}  \left[(h_i^k)^2\right] \mathop{\mathbb{E}}_{W_2,l,k} \left[\left(a_l^k-y_l^k \right)^2 \right]\\
		\geq& n C(\sigma_X,\sigma_W,p,k) \gamma',
	\end{align}
	
	where the last step follows from Theorem \ref{thm:1}, and  $\gamma'=\mathbb{E}_{W_2,l,k} \left[\left(a_l^k-y_l^k \right)^2 \right]/2$ is a constant that only depends on the first layer weights $W_1$, as the expectation is over all $W_2$. Similarly, we can expand the second term of \eqref{eq:twos} as follows. 
	\begin{align}
		\mathop{{}\mathbb{E}}_{W_2}&\left [\frac{1}{N}\sum_{i=1}^{N} \sum_{l=1}^2 \sum_{k_1=1}^{n} \sum_{k_2=1}^{n} h_i^{k_1} h_i^{k_2}\left(a_l^{k_1}-y_l^{k_1} \right)\left(a_l^{k_2}-y_l^{k_2} \right) \right]\\
		&\quad \quad \quad \quad= 2n^2 \times \mathop{\mathbb{E}}_{W_2,i,l,k_1,k_2} \left[h_i^{k_1} h_i^{k_2} \left(a_l^{k_1}-y_l^{k_1} \right)\left(a_l^{k_2}-y_l^{k_2} \right) \right].
	\end{align}
	\sloppy
	As before, we define two random variables in this context, $R_1=h_i^{k_1} h_i^{k_2}$ and $R_2=\left(a_l^{k_1}-y_l^{k_1} \right)\left(a_l^{k_2}-y_l^{k_2} \right)$, where $0\leq i \leq N$, $0\leq k_1,k_2 \leq n$ and $1\leq l \leq 2$ are random variables all drawn uniformly within their corresponding range. We similarly note that $\mathbb{E}_{W_2,i,l,k_1,k_2} \left[h_i^{k_1} h_i^{k_2} \left(a_l^{k_1}-y_l^{k_1} \right)\left(a_l^{k_2}-y_l^{k_2} \right) \right] = \mathbb{E}_{W_2,i,l,k_1,k_2}\left[R_1R_2\right]$. Like before, $h_i^{k_1}$ individually is independent of $\left(a_l^{k_1}-y_l^{k_1} \right)$ and $h_i^{k_2}$ is independent of $\left(a_l^{k_2}-y_l^{k_2} \right)$, implying that $h_i^{k_1} h_i^{k_2}$ is independent of $\left(a_l^{k_1}-y_l^{k_1} \right)\left(a_l^{k_2}-y_l^{k_2} \right)$. Thus, we can write $\mathbb{E}_{W_2,i,l,k_1,k_2}\left[R_1R_2\right] = \mathbb{E}_{W_2,i,k_1,k_2}\left[R_1\right]\mathbb{E}_{W_2,l,k_1,k_2}\left[R_2\right]$. 
	
	Furthermore, we have that $\left(a_l^{k_1}-y_l^{k_1} \right)$ is itself independent of $\left(a_l^{k_2}-y_l^{k_2} \right)$, as $P(k_2|k_1)=P(k_2)$ as they are independently chosen. Thus, it similarly follows that $\mathbb{E}_{W_2,l,k_1,k_2}\left[R_2\right] = \mathbb{E}_{W_2,l,k_1}\left[\left(a_l^{k_1}-y_l^{k_1} \right)\right]\mathbb{E}_{W_2l,k_2}\left[\left(a_l^{k_2}-y_l^{k_2} \right)\right]$. Lastly note $\mathbb{E}_{W_2,l,k_1}\left[\left(a_l^{k_1}-y_l^{k_1} \right)\right]=\mathbb{E}_{W_2,k_1}\left[\left(a_0^{k_1}+a_1^{k_1}-y_0^{k_1}-y_1^{k_1} \right)\right]=\mathbb{E}_{W_2,k_1}\left[0\right]=0$. This results in, 
	\begin{align}
		\mathop{{}\mathbb{E}}_{W_2}&\left[\frac{1}{N}\sum_{i=1}^{N} \sum_{l=1}^2 \sum_{k_1=1}^{n} \sum_{k_1=1}^{n} h_i^{k_1} h_i^{k_2}\left(a_l^{k_1}-y_l^{k_1} \right)\left(a_l^{k_2}-y_l^{k_2} \right)\right] \\
		&= 2n^2 \times \mathop{\mathbb{E}}_{W_2,i,l,k_1,k_2} \left[h_i^{k_1} h_i^{k_2} \left(a_l^{k_1}-y_l^{k_1} \right)\left(a_l^{k_2}-y_l^{k_2} \right) \right]\\
		&=2n^2 \times \mathop{\mathbb{E}}_{W_2,i,k_1,k_2} \left[h_i^{k_1} h_i^{k_2}\right] \mathop{\mathbb{E}}_{W_2,l,k_1,k_2}\left[ \left(a_l^{k_1}-y_l^{k_1} \right)\left(a_l^{k_2}-y_l^{k_2} \right) \right]\\
		&=2n^2 \times \mathop{\mathbb{E}}_{W_2,i,k_1,k_2} \left[h_i^{k_1} h_i^{k_2}\right] \mathop{\mathbb{E}}_{W_2,l,k_1}\left[ \left(a_l^{k_1}-y_l^{k_1} \right)\right] \mathop{\mathbb{E}}_{W_2,l,k_2}\left[\left(a_l^{k_2}-y_l^{k_2} \right) \right]\\
		&= 0,
	\end{align}
	
	where the last step follows as $\mathbb{E}_{W_2,l,k_1}\left[\left(a_l^{k_1}-y_l^{k_1} \right)\right]=0$. Thus, replacing the terms in \eqref{eq:twos}, we have
	\begin{equation}
		\mathop{{}\mathbb{E}}_{W_2}\left [\mathbb{E}[\lVert W_2-W_2'  \rVert^2]\right] \geq n\alpha^2 C(\sigma_X,\sigma_W,p,k)\gamma'
	\end{equation}
	However, note here that the above expression results from \textit{all} the weights in each iteration. In truth, as a $(1-p)^k$ proportion of the weights remain in $W_2$, only a $(1-p)^k$ fraction of the weights in $W_2$ will be updated. This indicates that the pruning corrected value of $\mathbb{E}[\lVert W_2-W_2'  \rVert^2]$, denoted by $\mathbb{E}_{corr}[\lVert W_2-W_2'  \rVert^2]$ must be, 
	\begin{equation} \label{eq:pru_cr}
		\mathbb{E}_{corr}[\lVert W_2-W_2'  \rVert^2] = \mathop{\mathbb{E}}_{i,j}[\lVert W_2^{ij}-(W_2^{ij})'  \rVert^2\delta_{ij}],
	\end{equation}
	where $\delta_{ij}$ is a variable which will be $0$ if $W_2^{ij}=0$ (pruned) or will be $1$, depending on the weight magnitude itself. However, note that as the magnitude of the weight does not affect the magnitude of change for $W_2^{ij}$ in \eqref{eq:w_upd}, $\delta_{ij}$ in \eqref{eq:pru_cr} is independent of $W_2^{ij}-(W_2^{ij})'$, and thus we can write $\mathbb{E}_{corr}[\lVert W_2-W_2'  \rVert^2] =  \mathop{\mathbb{E}}_{i,j}[\lVert W_2^{ij}-(W_2^{ij})'  \rVert^2]\mathbb{E}[\delta_{ij}]=(1-p)^k\mathbb{E}[\lVert W_2-W_2'  \rVert^2]$. This also indicates that the pruning corrected weight-gradient energy $\mathbb{E}[\lVert W_2-W_2'  \rVert^2]$, denoted by $E_{WG}\left(W_2,W_2'\right)$ will be
	
	\begin{align}
		\mathop{{}\mathbb{E}}_{W_2}\left[E_{WG}\left(W_2,W_2'\right)\right] =& \frac{\mathop{{}\mathbb{E}}_{W_2}\left[\mathbb{E}_{corr}[\lVert W_2-W_2'  \rVert^2]\right]}{(1-p)^k} = \mathop{{}\mathbb{E}}_{W_2}\left[\mathbb{E}[\lVert W_2-W_2'  \rVert^2]\right] \\
		&\geq n\alpha^2 C(\sigma_X,\sigma_W,p,k)\gamma'=\alpha^2 C(\sigma_X,\sigma_W,p,k)\gamma,
	\end{align}
	where we substitute $\gamma = n\gamma'$, which yields our result. 
\end{proof}

{\bf S-Shape learning rate for arbitrary depth networks. }
Even for the general case of arbitrary depth $D$, as shown in Corollary 1, we find that the resulting trajectory still resembles an S-shape, as the expression which controls the average activation energy has a similar form (see \eqref{eq:arb}). We also note that Theorem 2's result directly applies to the general case discussed in Corollary 1 as well. We present the extension of Theorem 2 to the more general case of arbitrary depth in the following Corollary. 

\begin{corollary}
     \label{thm:2}
	In Corollary 1's setting, we consider a single epoch of weight update for the network across a training dataset $S=\{(X_1,Y_1),..,(X_n,Y_n)\}$ using the cross-entropy loss, where $Y_i\in\{0,1\}$. Let $\alpha$ denote the learning rate. Let us denote the R.H.S of \eqref{eq:arb} by $C_D(\sigma_X,\sigma_W,p,k)$. Let the final layer weights before and after one training epoch be $W_D$ and $W_D'$ respectively. We have, 
	\begin{equation}\label{eq:lr_energy}
		\resizebox{0.55\hsize}{!}{
		$\mathop{{}\mathbb{E}}_{W_D\sim\mathcal{N}_k(0,\sigma_{\scaleto{W}{3pt}}^2 I)}\left[E_{WG}\left(W_D,W_D'\right)\right] \geq \alpha^2\gamma C_D(\sigma_X,\sigma_W,p,k)$,}
	\end{equation}
	where $\mathcal{N}_k(0,\sigma_{\scaleto{W}{3pt}}^2 I)$ represents a normal distribution $\mathcal{N}(0,\sigma_{\scaleto{W}{3pt}}^2 I)$ initialization of $W_D$, followed by $k$ iterations of pruning, and $\gamma$ is a constant.
\end{corollary}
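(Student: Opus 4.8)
The plan is to reuse the proof of Theorem 2 essentially verbatim, because the backpropagation computation that drives that proof touches only the final layer of the network and is therefore insensitive to how many layers precede it. The sole place where the network depth enters is through the lower bound on the activation energy of the hidden layer feeding the output layer; there I would simply swap Theorem 1's bound $C(\sigma_X,\sigma_W,p,k)$ for Corollary 1's depth-$D$ bound $C_D(\sigma_X,\sigma_W,p,k)$ on $E_{AA}(H_D)$, which I am allowed to assume.

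Concretely, I would first write the per-sample update for a final-layer weight exactly as in \eqref{eq:w_upd}, namely $W_D^{ij}\leftarrow W_D^{ij}-\alpha h_i^k(a_j^k-y_j^k)$, where now $h_i^k$ denotes the output of the depth-$D$ hidden layer $H_D$ on sample $X_k$. Summing over the training set and expanding $\mathbb{E}[\lVert W_D-W_D'\rVert^2]$ produces the same two-term decomposition into a diagonal sum of squares and a cross sum over distinct sample indices. The independence arguments then carry over unchanged: $h_i^k$ is a function of $X$ and of the first $D-1$ layers' weights only, hence is independent of $W_D$, so the relevant expectation $\mathbb{E}[R_1 R_2]$ factorizes, and the cross term vanishes because $\mathbb{E}_{W_D,l,k}[a_l^k-y_l^k]=0$ (using $a_0^k+a_1^k=1$ and $y_0^k+y_1^k=1$).

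The diagonal term then factors as $2n\,\mathbb{E}_{W_D,i,k}[(h_i^k)^2]\,\mathbb{E}_{W_D,l,k}[(a_l^k-y_l^k)^2]$. Here I would identify $\mathbb{E}_{W_D,i,k}[(h_i^k)^2]$ with $E_{AA}(H_D)$ and invoke Corollary 1 in the form $4E_{AA}(H_D)\geq C_D(\sigma_X,\sigma_W,p,k)$, producing a lower bound of $n\gamma' C_D$ with $\gamma'=\mathbb{E}_{W_D,l,k}[(a_l^k-y_l^k)^2]/2$. Finally I would apply the identical pruning-correction step: only a $(1-p)^k$ fraction of the weights in $W_D$ survive and are updated, so dividing $\mathbb{E}_{corr}[\lVert W_D-W_D'\rVert^2]$ by $(1-p)^k$ recovers $E_{WG}(W_D,W_D')$, and setting $\gamma=n\gamma'$ yields \eqref{eq:lr_energy}.

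The only place where the depth generalization could genuinely break is the independence between the penultimate activations $H_D$ and the final weights $W_D$ when $D>2$, so that is what I would check most carefully. This independence holds because $H_D$ is computed strictly before $W_D$ is applied, so no statistical coupling is introduced; the effective-weight construction of Corollary 1 alters the \emph{value} of $E_{AA}(H_D)$ but not its independence from $W_D$. Once this is granted, every factorization in Theorem 2's proof goes through unchanged and no new estimate is required beyond Corollary 1, which is why I expect the argument to be a short and essentially mechanical adaptation rather than a fresh derivation.
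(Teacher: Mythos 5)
Your proposal is correct and matches the paper's own argument: the paper proves this corollary precisely by observing that Theorem 2's proof uses only the backpropagation signals at the final layer and the average activation energy of the hidden layer feeding it, so one substitutes Corollary 1's bound $C_D(\sigma_X,\sigma_W,p,k)$ for $C(\sigma_X,\sigma_W,p,k)$ and the rest goes through unchanged. Your version is simply a more explicit walkthrough of that same adaptation, including the (correct) check that the independence of $h_i^k$ from the final-layer weights is unaffected by depth.
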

\begin{proof}
The proof directly follows from the fact that in the proof of Theorem 2, we only make use of the backpropagatory signals, and therefore the result holds independent of the number of layers before the final layer. Furthermore, as the result in Theorem 2 only depends on the average activation energy of the final hidden layer, we thus have $C_D(\sigma_X,\sigma_W,p,k)$ (from \eqref{eq:arb}) in the R.H.S instead of $C(\sigma_X,\sigma_W,p,k)$ which applies only to the $D=1$ case (single hidden layer). This completes the proof.  
\end{proof}
\section{Supplementary Experimental Results}

In the Appendix, we show some additional experimental results. Specifically,

\begin{enumerate}
	
	\item In Section \ref{A1}, we show the experimental results on the distribution of weight gradients and hidden representations using AlexNet, ResNet-20 \& VGG-19 via both structured and unstructured pruning methods.
	
	\item In Section \ref{A2}, we present the performance comparison between SILO and selected LR schedule benchmarks using Adam \cite{kingma2014adam} and RMSProp \cite{tieleman2012lecture}. 
	
	\item In Section \ref{A3}, we show the performance comparison between SILO's $\texttt{max\_lr}$ to that of an Oracle using ResNet-20 with global gradient on CIFAR-10. 
	
	\item In Section \ref{A4}, we show the experimental results for more values of $\lambda$ for experiments in Tables \ref{per1} - \ref{per5}.
	
\end{enumerate}

\subsection{More Experimental Results on the Distribution of Weight Gradients and Hidden Representations}
\label{A1}

In this subsection, we present more experimental results on the distribution of weight gradients and hidden representations using popular networks and pruning methods in Figs. \ref{mlp_distribution_norm} - \ref{vgg}. The configuration for each network is given in Table \ref{arch_sec2}. 

We observe that the experimental results in Figs. \ref{mlp_distribution_norm} - \ref{vgg} largely mirror those in Fig. \ref{mlp_no_norm}(a). Specifically, in Fig. \ref{mlp_distribution_norm}, we show the distribution of weight gradients and hidden representations when iterative pruning a fully connected ReLU-based network using global magnitude with batch normalization applied to each hidden layer. We observe that the weight gradients of the unpruned network have a standard deviation of 1.2\texttt{e}-2 while the weight gradients of the pruned network ($\lambda$ = 13.4) have a much smaller standard deviation of 8\texttt{e}-3. Similarly, in Fig. \ref{alex}(a), where we show the distribution of weight gradients when iteratively pruning AlexNet, the standard deviation of the unpruned network ($\lambda$ = 100) is reduced from 1.2\texttt{e}-2 to 9\texttt{e}-3 when the network is iteratively pruned to $\lambda$ = 13.4. In Fig. \ref{resnet}(a), the weight gradients of the unpruned ResNet-20 ($\lambda$ = 100) have a standard deviation of 1.3\texttt{e}-2 while the weight gradients of the pruned ResNet-20 ($\lambda$ = 13.4) have a standard deviation of 8\texttt{e}-3. Moreover, in Fig. \ref{vgg}(a), the standard deviation of the unpruned VGG-19's weight gradients also reduces from 1.2\texttt{e}-2 to 9\texttt{e}-3 when the VGG-19 is iteratively pruned to $\lambda$ = 13.4.  Lastly, we note that the corresponding distributions of hidden representations are also shown in Fig. \ref{mlp_distribution_norm} (b) - \ref{vgg} (b), which largely mirror those in Fig. \ref{mlp_no_norm}(b).

{\renewcommand{\arraystretch}{1.2}
	\begin{table}[!ht]
		\small
		\setlength\tabcolsep{10.8pt} 
		\centering
		\begin{tabular}{l|ccccccccc} \hline \hline
			{Network}     & {Train Steps} & {Batch} & {Learning Rate Schedule} & {BatchNorm} \\ \hline
			\multirow{3}{*}{AlexNet}       &  781K Iters         &  64      &  warmup to 1\texttt{e}-2 over 150K, 10x drop at 300K, 400K              &  No     \\ \cmidrule{2-5}
			
			& \multicolumn{4}{c}{Pruning Metric: Unstructured Pruning - Layer Weight}\\ \hline
			\multirow{3}{*}{ResNet-20}  &  63K Iters            &128       &  warmup to 3\texttt{e}-2 over 20K, 10x drop at 20K, 25K   & Yes    \\ \cmidrule{2-5}
			& \multicolumn{4}{c}{Pruning Metric: Unstructured Pruning - Global Gradient}\\ \hline
			\multirow{3}{*}{VGG-19}      &   63K Iters           & 128      &  warmup to 1\texttt{e}-1 over 10K, 10x drop at 32K, 48K & Yes     \\ \cmidrule{2-5}
			& \multicolumn{4}{c}{Pruning Metric: Structured Pruning - L1 Norm}\\ \hline   \hline                         
		\end{tabular}
		\vspace{-3mm}
		\caption{Architectures and training details used in the appendix.}
		\vspace{-0mm}
		\label{arch_sec2}
\end{table}}

\begin{figure}[!h]
	\begin{minipage}{0.5\textwidth}
		\includegraphics[width=0.95\linewidth]{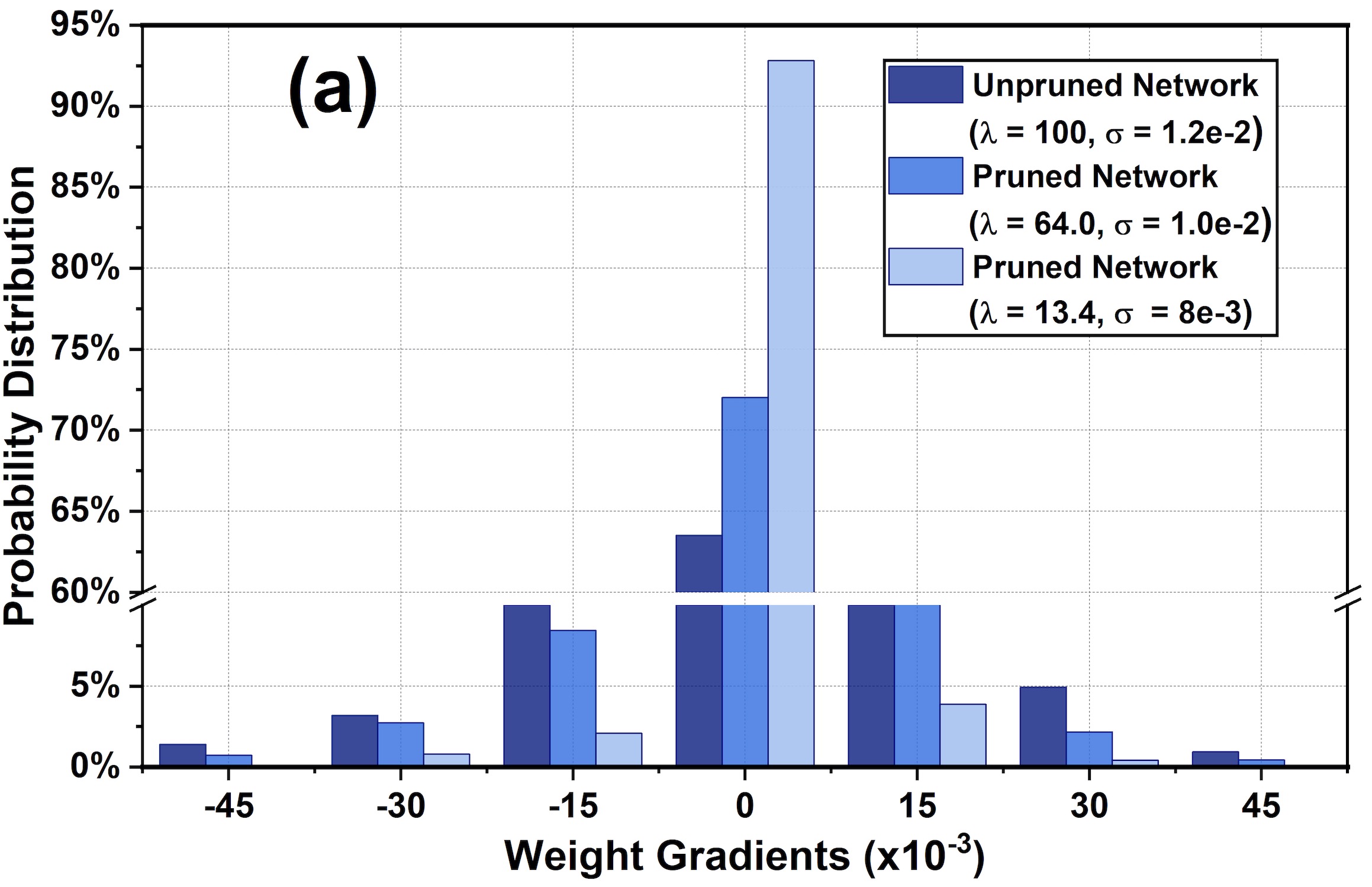}
	\end{minipage}%
	\begin{minipage}{0.5\textwidth}
		\includegraphics[width=0.95\linewidth]{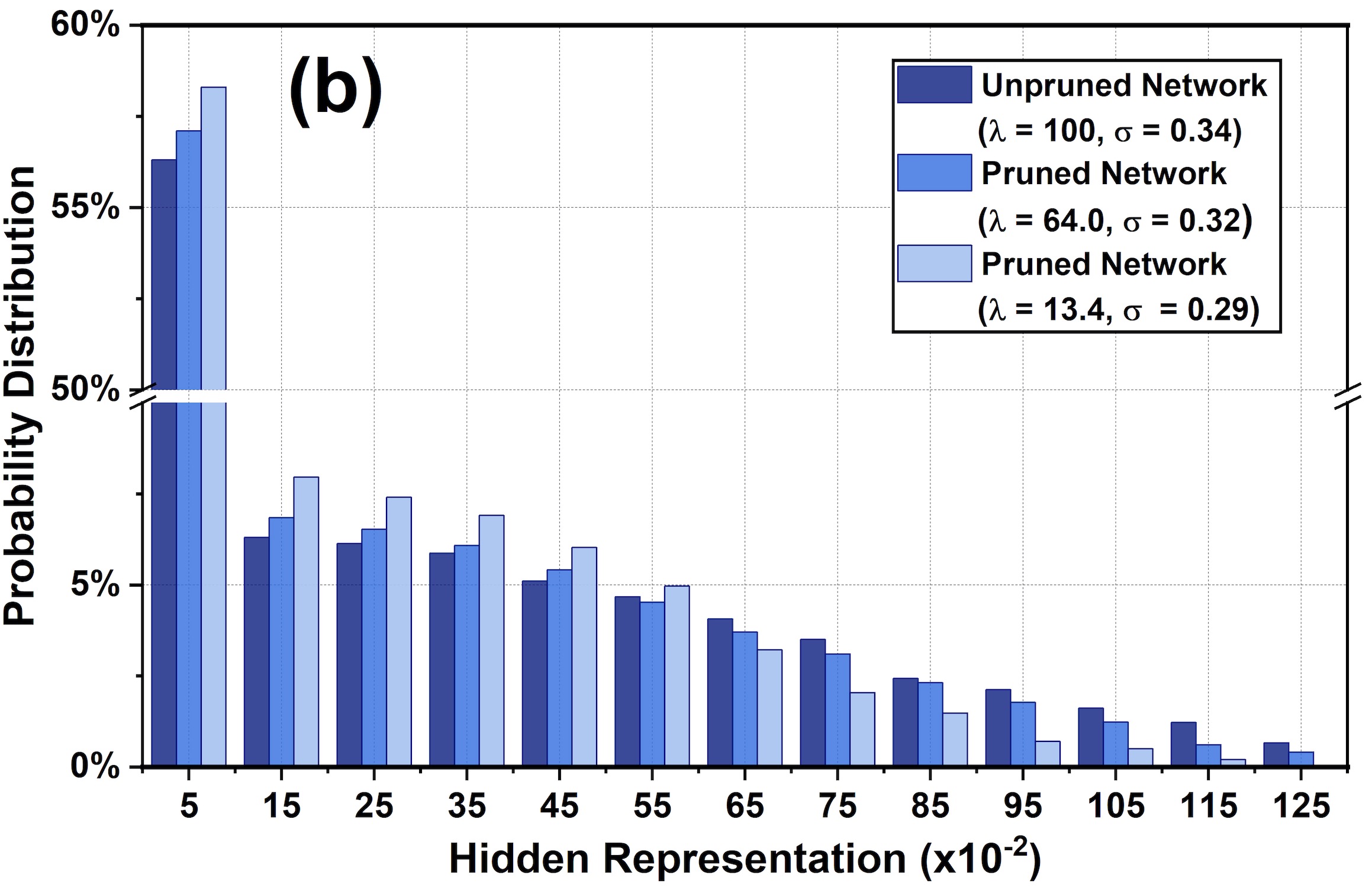}
	\end{minipage}
	\vspace{-3mm}
	\caption{(a) The distribution of weight gradients when iteratively pruning a fully connected ReLU-based network using global magnitude \cite{han2015learning}, where $\lambda$ is the percent of weights remaining and $\sigma$ is the standard deviation of the distribution. (b) The corresponding distribution of hidden representations (i.e., post-activation outputs of all hidden layers). The main difference with Fig. \ref{mlp_no_norm} is that the batch normalization is used here.} 
	\label{mlp_distribution_norm}
\end{figure}

\begin{figure}[!ht]
	\begin{minipage}{0.5\textwidth}
		\includegraphics[width=0.9\linewidth]{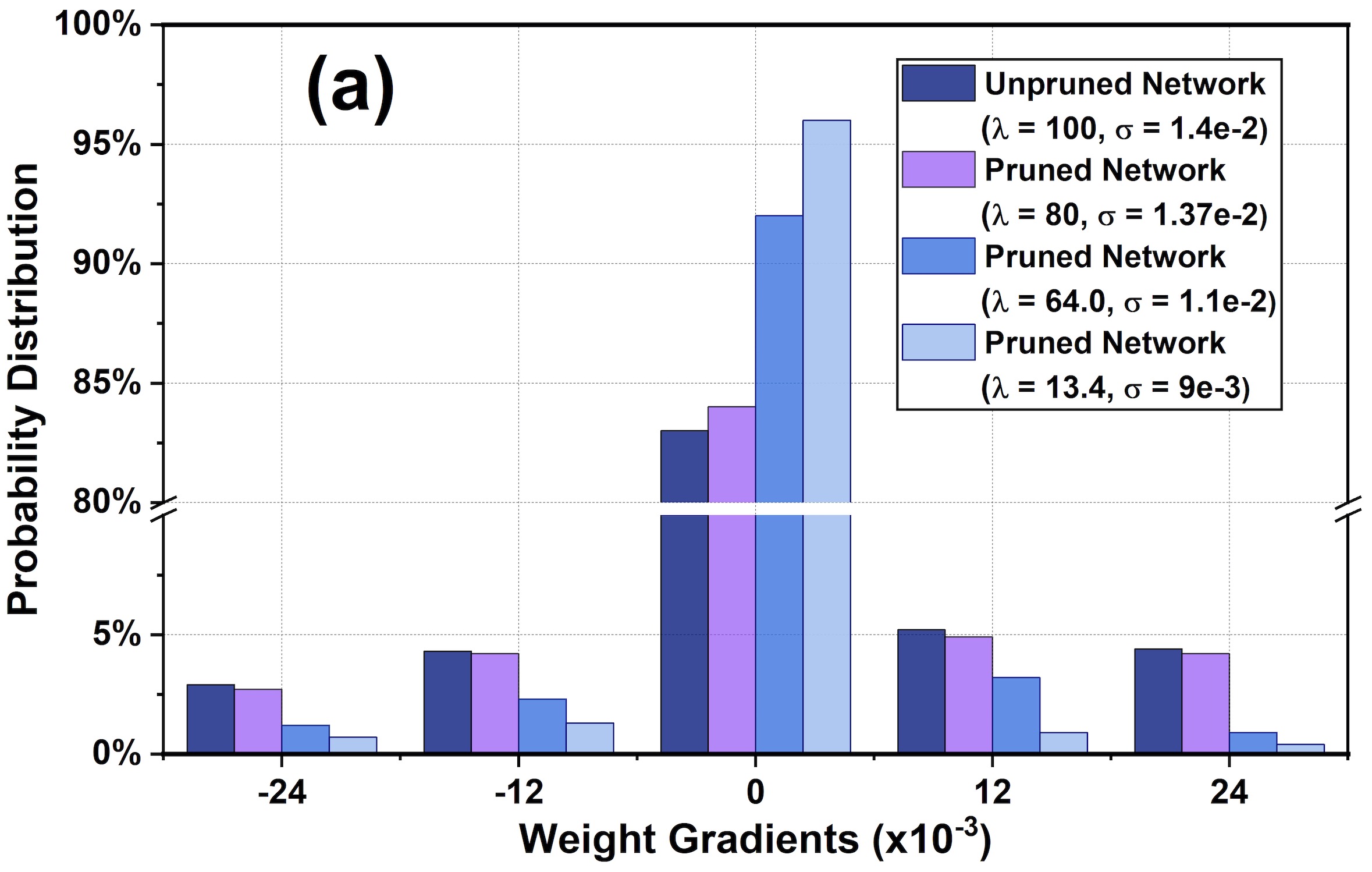}
	\end{minipage}%
	\begin{minipage}{0.5\textwidth}
		\includegraphics[width=0.9\linewidth]{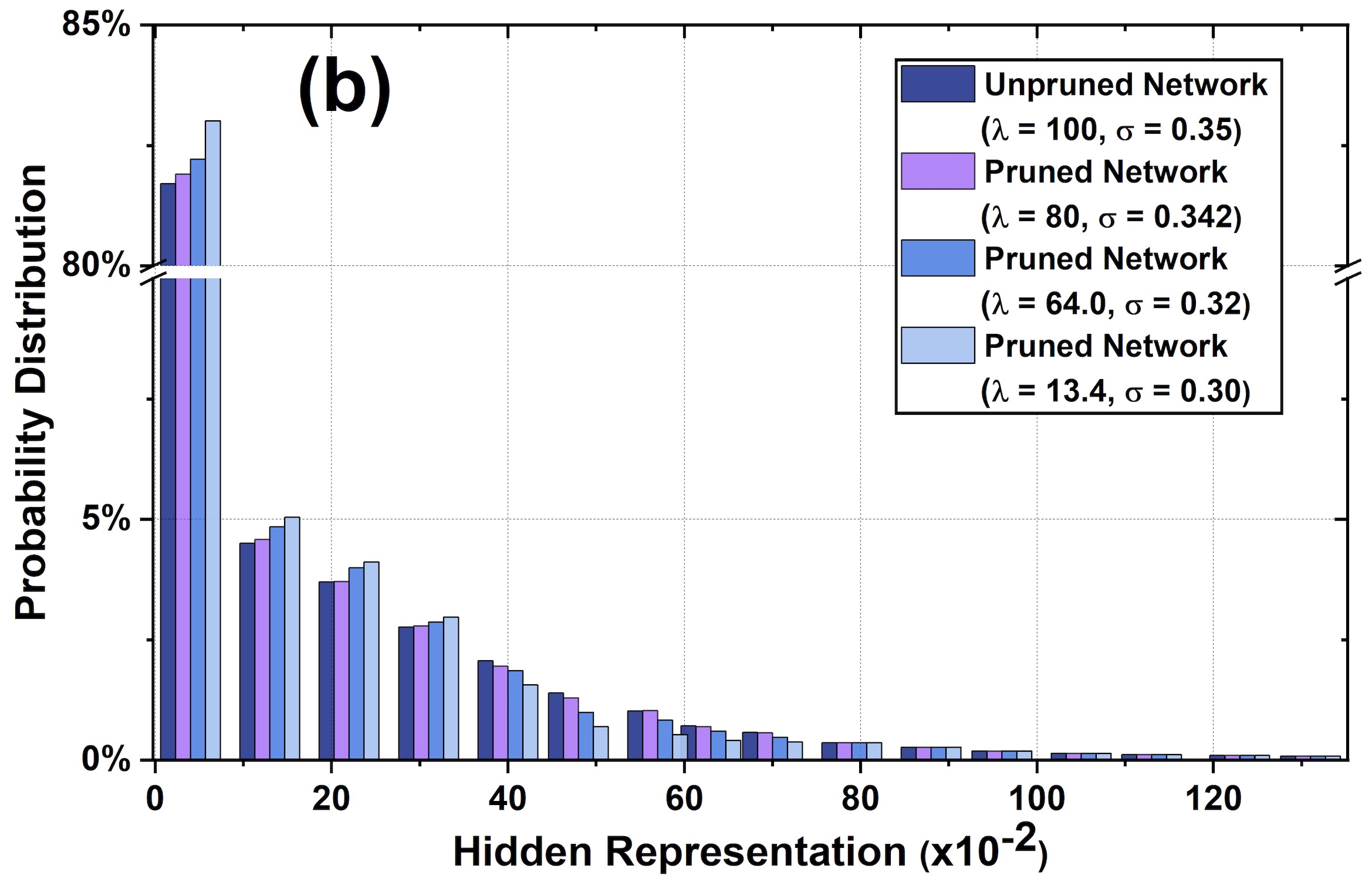}
	\end{minipage}
	\caption{(a) The distribution of weight gradients when iteratively pruning the AlexNet network using the layer magnitude pruning method, where $\lambda$ is the percent of weights remaining and $\sigma$ is the standard deviation of the distribution. (b) The corresponding distribution of hidden representations (i.e., post-activation outputs of all hidden layers). Please note that there is a line breaker in the vertical axis.} 
	\label{alex}
\end{figure}

\begin{figure}[!ht]
	\begin{minipage}{0.5\textwidth}
		\includegraphics[width=0.9\linewidth]{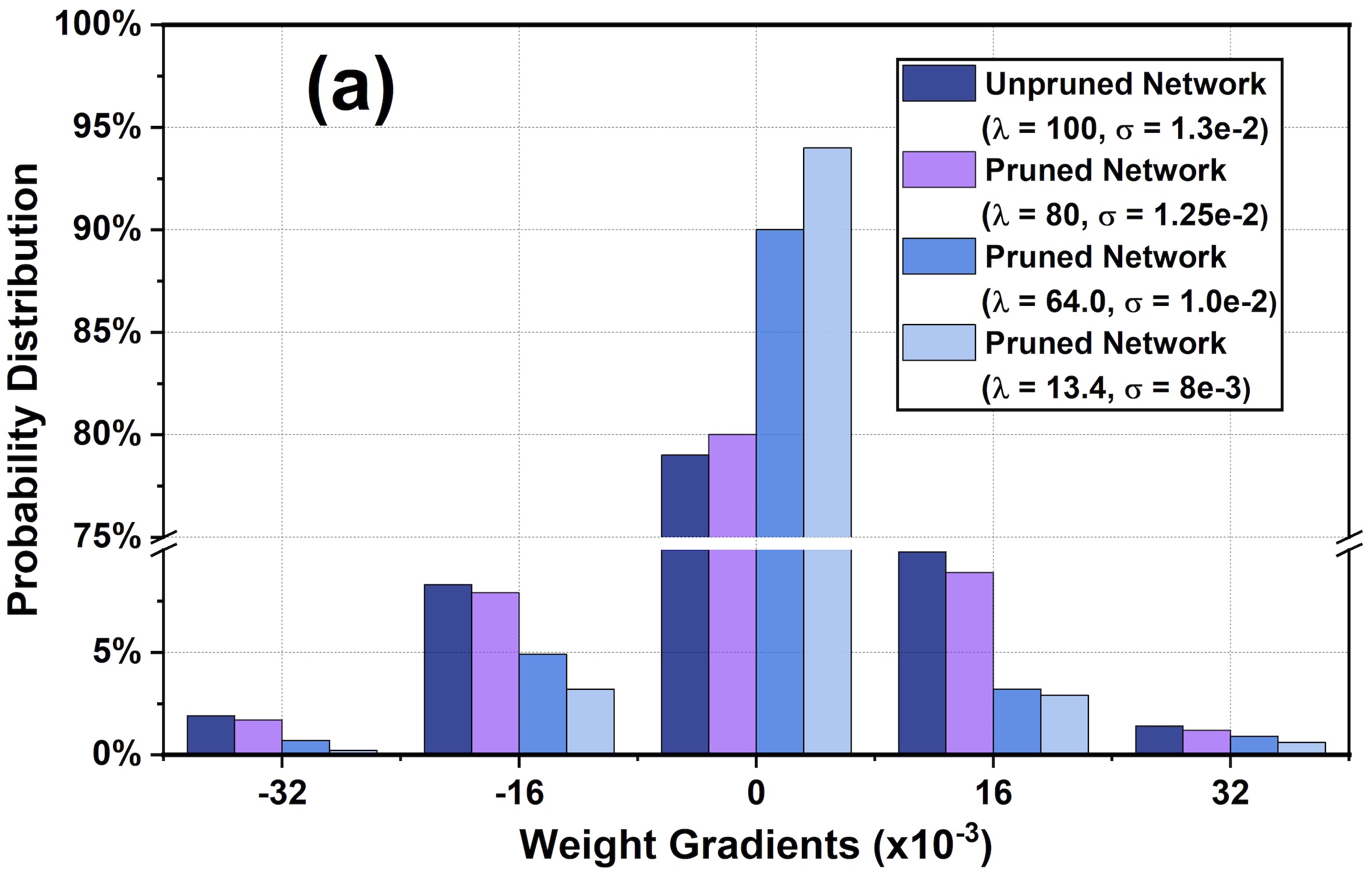}
	\end{minipage}%
	\begin{minipage}{0.5\textwidth}
		\includegraphics[width=0.9\linewidth]{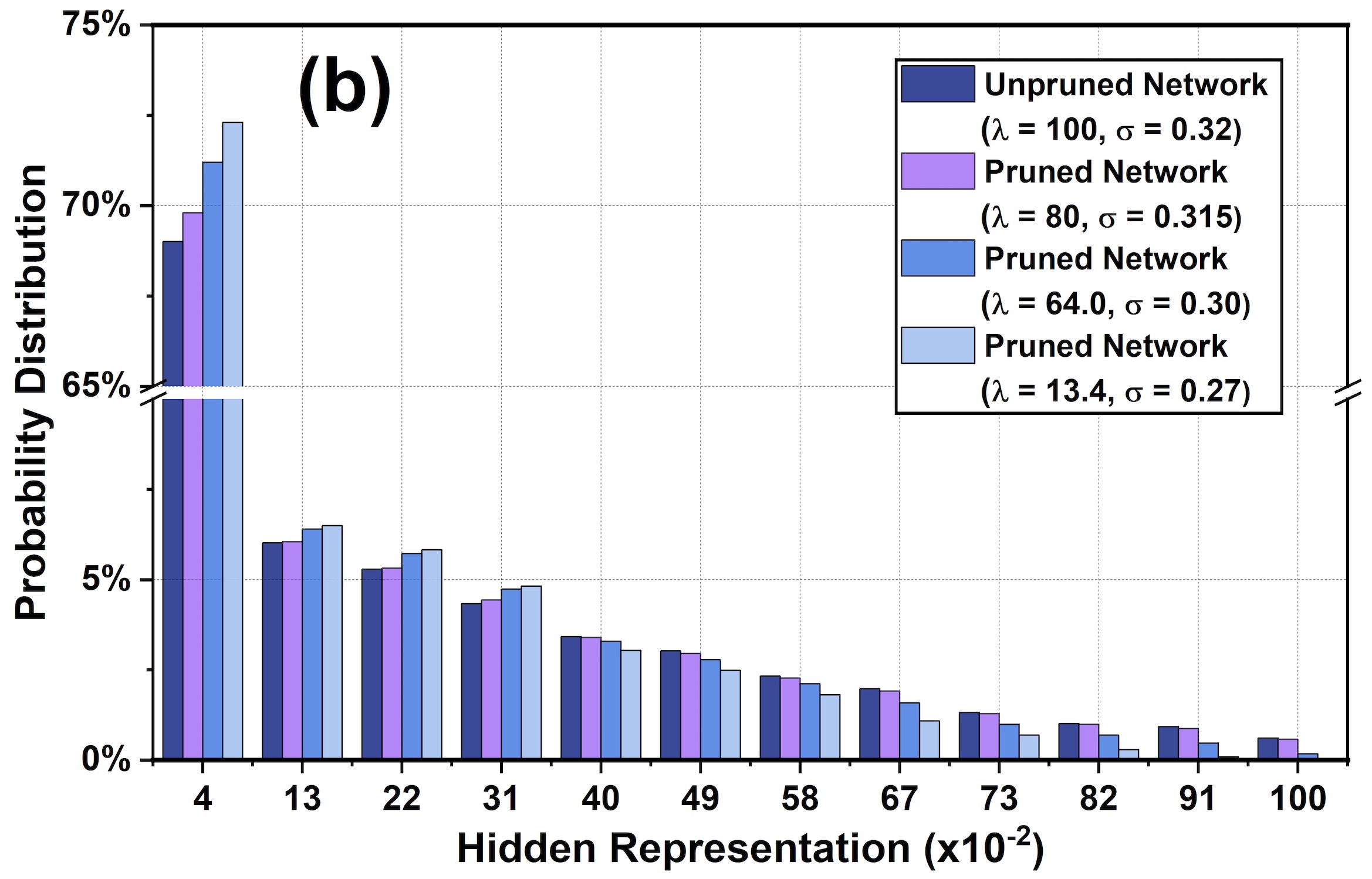}
	\end{minipage}
	\caption{(a) The distribution of weight gradients when iteratively pruning the ResNet-20 network using the global gradient pruning method, where $\lambda$ is the percent of weights remaining and $\sigma$ is the standard deviation of the distribution. (b) The corresponding distribution of hidden representations (i.e., post-activation outputs of all hidden layers). Please note that there is a line breaker in the vertical axis.} 
	\label{resnet}
\end{figure}

\begin{figure}[!ht]
	\begin{minipage}{0.5\textwidth}
		\includegraphics[width=0.9\linewidth]{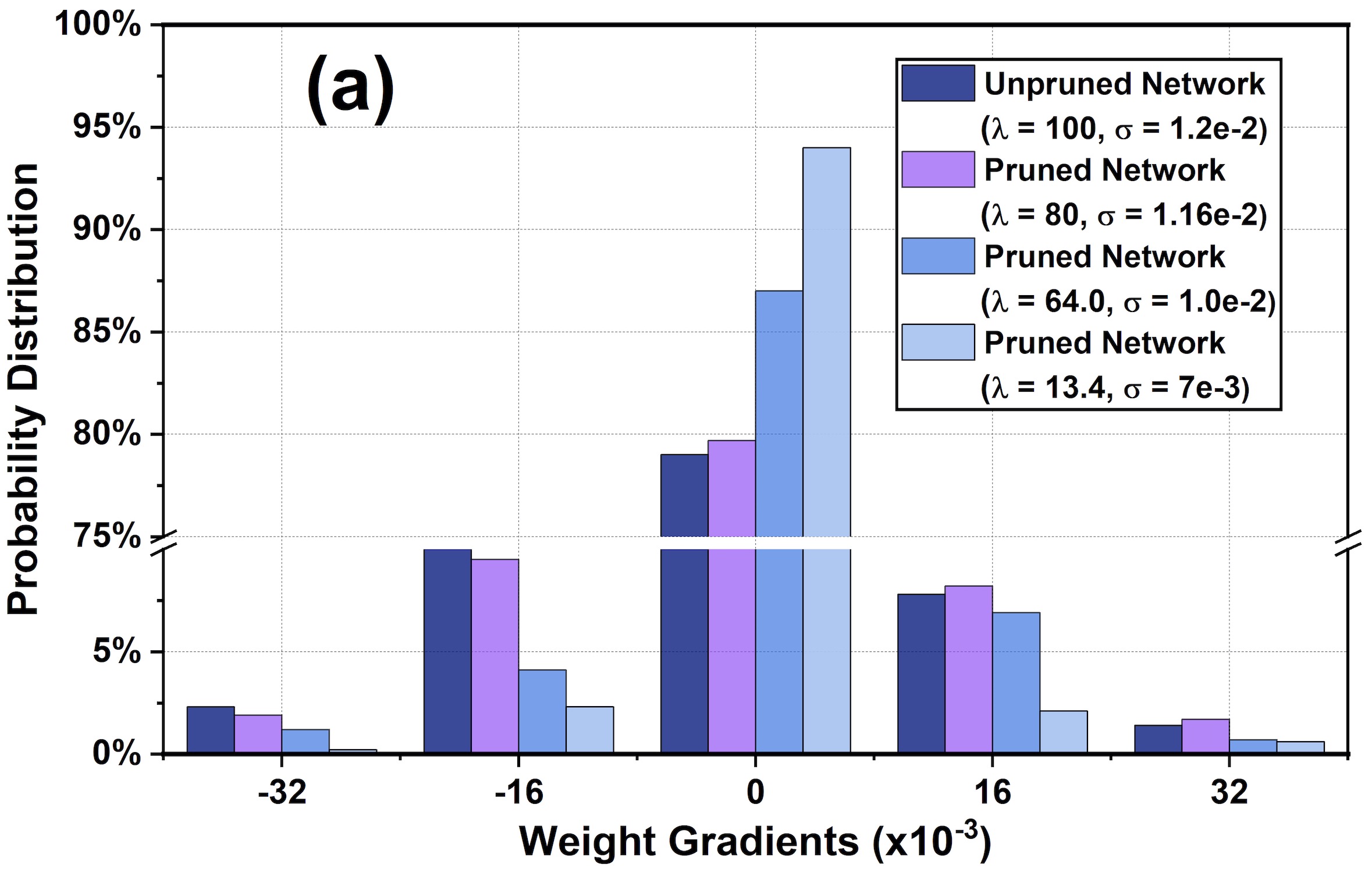}
	\end{minipage}%
	\begin{minipage}{0.5\textwidth}
		\includegraphics[width=0.9\linewidth]{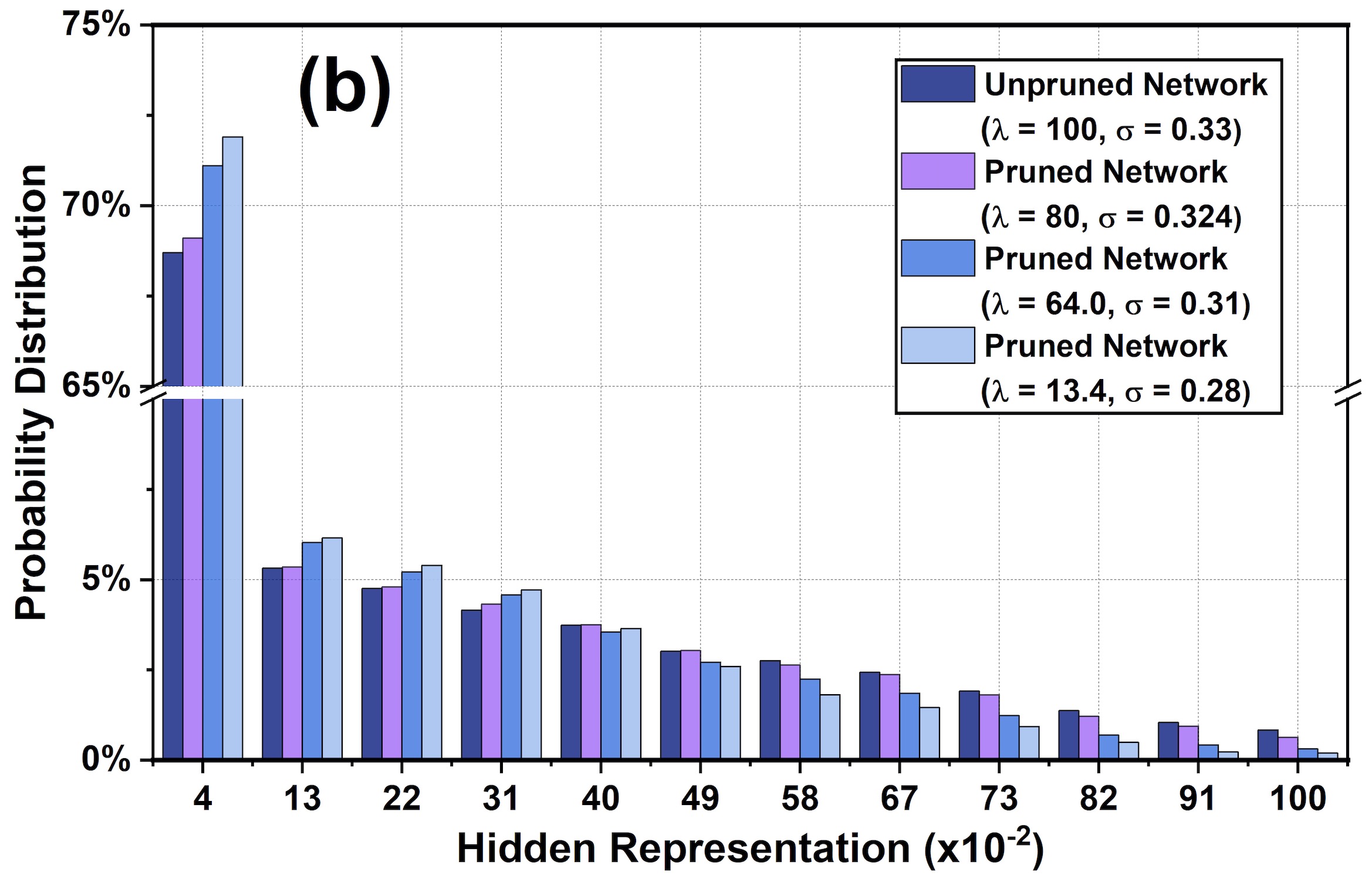}
	\end{minipage}
	\caption{(a) The distribution of weight gradients when iteratively pruning the VGG-19 network using the structured filter pruning method \cite{li2016pruning}, where $\lambda$ is the percent of weights remaining and $\sigma$ is the standard deviation of the distribution. (b) The corresponding distribution of hidden representations (i.e., post-activation outputs of all hidden layers). Please note that there is a line breaker in the vertical axis.} 
	\label{vgg}
\end{figure}

\clearpage
\newpage
\subsection{Performance Comparison using Adam and RMSProp}
\label{A2}

In this subsection, we show the performance comparison between the proposed SILO and selected LR benchmarks using Adam \cite{kingma2014adam} and RMSProp \cite{tieleman2012lecture} optimizers. The experimental results summarized in Tables \ref{per_adam} -  \ref{per_rms} largely mirror those in Table \ref{per3}. Specifically, the proposed SILO outperforms the best performing benchmark by a range of 0.8\% -2.7\% in pruned networks.

\begin{table}[!ht]
	\centering
	\setlength\tabcolsep{41.5pt}
	\begin{tabular}{l rrrrrr}
		\toprule
		\toprule
		\multicolumn{7}{c}{{\small {\bf Params}: 227K; {\small {\bf Train Steps}: 63K Iters; } {\bf Batch}: 128;  {\small {\bf Pruning Rate}: 0.2}}}\\ \toprule
		\multicolumn{7}{l}{{\small (1) constant LR} ({\footnotesize 8\texttt{e}-4}) {\small (2) LR decay} ({\footnotesize 3\texttt{e}-3, 63K})} \\
		\multicolumn{7}{l}{{\small (3) cyclical LR} ({\footnotesize 0, 3\texttt{e}-2, 8K})  (4) {\small LR-warmup} ({\footnotesize 3\texttt{e}-3, 20K, 20K, 25K, Nil})}\\
		\multicolumn{7}{l}{{\small {\small (5) SILO} ({\footnotesize 3\texttt{e}-3,4\texttt{e}-3,20K,20K,25K,Nil})}} \\
	\end{tabular}
	
	\setlength\tabcolsep{10.3pt}
	\begin{tabular}{lccccccccc}
		\toprule
		\multicolumn{1}{c}{$\lambda$} & 100 & 32.9 & 21.1 & 5.72 & 2.03 \\ \toprule
		{\small (1) constant LR} & 88.4$\pm${\scriptsize 0.4} & 84.8$\pm${\scriptsize 0.6} & 83.5$\pm${\scriptsize 0.6} & 75.5$\pm${\scriptsize 1.2} & 67.1$\pm${\scriptsize 1.7} \\
		{\small (2) LR decay} & 88.6$\pm${\scriptsize 0.3} & 87.1$\pm${\scriptsize 0.7} & 83.7$\pm${\scriptsize 0.9} & 76.1$\pm${\scriptsize 0.8} & 66.0$\pm${\scriptsize 1.3} \\
		{\small (3) cyclical LR}  & 88.9$\pm${\scriptsize 0.3} & 86.9$\pm${\scriptsize 0.5} & 84.1$\pm${\scriptsize 0.3} & 77.0$\pm${\scriptsize 0.9} & 64.4$\pm${\scriptsize 1.1} \\
		{\small (4) LR-warmup} & 89.1$\pm${\scriptsize 0.3} & 87.2$\pm${\scriptsize 0.4} & 84.5$\pm${\scriptsize 0.6} & 75.2$\pm${\scriptsize 1.1} & 65.1$\pm${\scriptsize 1.9} \\
		{\small (5) SILO} &  {\bf 89.2$\pm${\scriptsize 0.2}} & {\bf 87.9$\pm${\scriptsize 0.3}} & {\bf 86.3$\pm${\scriptsize 0.5}} & {\bf 79.5$\pm${\scriptsize 1.7}} & {\bf 71.7$\pm${\scriptsize 2.3}} \\
		\bottomrule
		\bottomrule
	\end{tabular}
	\vspace{-3mm}
	\caption{Performance comparison (averaged top-1 test accuracy $\pm$ std over 5 runs) of iteratively pruning ResNet-20 on CIFAR-10 dataset using the global magnitude pruning method \cite{han2015learning} and the Adam optimizer \cite{kingma2014adam}.}
	\label{per_adam}
\end{table}

\begin{table}[!ht]
	\centering
	\setlength\tabcolsep{40pt}
	\begin{tabular}{l rrrrrr}
		\toprule
		\toprule
		\multicolumn{7}{c}{{\small {\bf Params}: 227K; {\small {\bf Train Steps}: 63K Iters; } {\bf Batch}: 128;  {\small {\bf Pruning Rate}: 0.2}}}\\ \toprule
		\multicolumn{7}{l}{{\small (1) constant LR} ({\footnotesize 6\texttt{e}-4}) {\small (2) LR decay} ({\footnotesize 2\texttt{e}-3, 63K})} \\
		\multicolumn{7}{l}{{\small (3) cyclical LR} ({\footnotesize 0, 3\texttt{e}-2, 10K})  (4) {\small LR-warmup} ({\footnotesize 1\texttt{e}-3, 20K, 20K, 25K, Nil})}\\
		\multicolumn{7}{l}{{(5) {\small SILO} ({\footnotesize 1\texttt{e}-3,2\texttt{e}-3,20K,20K,25K,Nil})}} \\
	\end{tabular}
	
	\setlength\tabcolsep{10.3pt}
	\begin{tabular}{lccccccccc}
		\toprule
		\multicolumn{1}{c}{$\lambda$} & 100 & 32.9 & 21.1 & 5.72 & 2.03 \\ \toprule
		{\small (1) constant LR} & 87.9$\pm${\scriptsize 0.3} & 83.4$\pm${\scriptsize 0.4} & 81.5$\pm${\scriptsize 0.9} & 65.5$\pm${\scriptsize 1.9} & 55.1$\pm${\scriptsize 2.3} \\
		{\small (2) LR decay} & 88.4$\pm${\scriptsize 0.2} & 84.8$\pm${\scriptsize 0.6} & 77.8$\pm${\scriptsize 0.9} & 67.1$\pm${\scriptsize 1.4} & 58.3$\pm${\scriptsize 1.6} \\
		{\small (3) cyclical LR}  & 88.1$\pm${\scriptsize 0.3} & 84.7$\pm${\scriptsize 0.5} & 81.9$\pm${\scriptsize 0.7} & 67.5$\pm${\scriptsize 0.9} & 56.3$\pm${\scriptsize 1.7}   \\
		{\small (4) LR-warmup} & {\bf 88.9$\pm${\scriptsize 0.2}} & 85.1$\pm${\scriptsize 0.5} & 81.7$\pm${\scriptsize 0.4} & 67.3$\pm${\scriptsize 1.3} & 57.1$\pm${\scriptsize 1.4} \\
		{\small (5) SILO}   & 88.7$\pm${\scriptsize 0.3} & {\bf 86.1$\pm${\scriptsize 0.4}} & {\bf 83.1$\pm${\scriptsize 0.6}} & {\bf 72.5$\pm${\scriptsize 1.3}} & {\bf 63.5$\pm${\scriptsize 1.9}} \\
		\bottomrule
		\bottomrule
	\end{tabular}
	\vspace{-3mm}
	\caption{Performance comparison (averaged top-1 test accuracy $\pm$ std over 5 runs) of iteratively pruning ResNet-20 on CIFAR-10 dataset using the global magnitude pruning method \cite{han2015learning} and the RMSProp optimizer \cite{tieleman2012lecture}.}
	\label{per_rms}
\end{table}

\subsection{More Experimental Results on Comparing SILO to an Oracle}
\label{A3}

We show the performance between SILO's $\texttt{max\_lr}$ to that of an Oracle using ResNet-20 with global gradient on the CIFAR-10 dataset in Table \ref{lr_table_resnet_20}. It can be seen that the $\texttt{max\_lr}$ estimated by SILO falls in the Oracle optimized $\texttt{max\_lr}$ interval at each pruning cycle, meaning that the performance of SILO is competitive to the Oracle. Via this experiment on ResNet-20, we highlight the competitiveness of the proposed SILO again.


{\renewcommand{\arraystretch}{1.3}
	\begin{table}[!ht]
		\centering
		\setlength\tabcolsep{8.6pt}
		\begin{tabular}{l|c|c|c|c|c}
			\midrule
			\midrule
			{\small Percent of Weights Remaining, $\lambda$} & 100 & 51.3 & 41.1 & 32.9 & 21.1 \\ \midrule
			{\small Oracle \texttt{max\_lr} } & 3.4 & 3.8 & 4.6 & 5.6 & 6.2 \\
			{\small Oracle interval }  & {\small [2.8, 3.6]} & {\small [3.4, 4.2]} & {\small [3.8, 5.2]} & {\small [5.4, 6.6]} & {\small [5.4, 6.8]} \\
			{\small SILO \texttt{max\_lr}} & 3 & 3.2 & 4.7 & 6.4 & 6.9  \\
			\bottomrule
			\bottomrule
		\end{tabular}
		\vspace{-3mm}
		\caption{Comparison between Oracle tuned $\texttt{max\_lr}$, Oracle optimized $\texttt{max\_lr}$ interval (both obtained via grid search) and $\texttt{max\_lr}$ estimated by SILO when iteratively pruning ResNet-20 on the CIFAR-10 dataset using the global magnitude \cite{han2015learning}.}
		\label{lr_table_resnet_20}
\end{table}}

\newpage
\subsection{Experimental Results for More Values of $\lambda$}
\label{A4}

We note that, in Tables \ref{per1} - \ref{per5}, we only show the experimental results for some key values of $\lambda$. In this subsection, we show the results for more values of $\lambda$ in Tables \ref{per1_extra} - \ref{per5_extra}. The LR schedules (i.e., LR-warmup) from Table \ref{per1_extra} - \ref{per4_extra} are from \cite{frankle2018lottery}, \cite{frankle2020linear}, \cite{zhao2019variational}, \cite{chin2020towards} and \cite{renda2020comparing}, respectively.
The LR schedule (i.e., cosine decay) in Table \ref{per5_extra} is from \cite{dosovitskiy2020image}. The implementation details are provided in the top row of each table and the descriptions of each benchmark LR schedule are summarized in Table \ref{schedules}. It should be noted that, for the IMP method examined in this work, we rewind the unpruned weights to their values during training (e.g., epoch 6), in order to obtain a more stable subnetwork \cite{frankle2019stabilizing}.

Due to the width of these tables, we rotate them and present the results in the landscape style. We observe that the performance of SILO for other values of $\lambda$ still outperforms the selected LR schedule benchmarks. For example, in Table \ref{per3_extra}, we find that SILO achieves an improvement of 4.0\% at $\lambda$ = 5.72 compared to LR warmup.

{\renewcommand{\arraystretch}{0.85}
	\begin{table*}[!t]
		\centering
		\setlength\tabcolsep{15.8pt}
		\begin{tabular}{ll}
			\toprule
			\toprule
			{\bf Schedule} & {\bf Description } (Iters: Iterations) \\ \midrule
			LR decay (\texttt{a}, \texttt{b}) & linearly decay the value of LR from \texttt{a} over \texttt{b} Iters.    \\
			cyclical LR (\texttt{a}, \texttt{b}, \texttt{c}) & linearly vary between \texttt{a} and \texttt{b} with a step size of \texttt{c} Iters. \\
			LR warmup (\texttt{a}, \texttt{b}, \texttt{c}, \texttt{d}, \texttt{e}) & increase to \texttt{a} over \texttt{b} Iters, 10x drop at \texttt{c}, \texttt{d}, \texttt{e} Iters.  \\ \midrule
			SILO ($\epsilon$, $\delta$, \texttt{b}, \texttt{c}, \texttt{d}, \texttt{e}) & LR warmup ($\texttt{max\_lr}$, \texttt{b}, \texttt{c}, \texttt{d}, \texttt{e}), where $\texttt{max\_lr}$ increases  \\
			& from $\epsilon$ to $\epsilon + \delta$ during iterative pruning (see \eqref{SILO-eq}).   \\
			\bottomrule
			\bottomrule
		\end{tabular}
		\vspace{-3mm}
		\caption{Descriptions of LR schedule benchmarks and the proposed SILO.}
		\label{schedules}
		\vspace{-3mm}
\end{table*}}

\begin{landscape}
	
\vspace*{0.05mm}\begin{table}[!ht]
		\centering
		\setlength\tabcolsep{21.3pt}
		\begin{tabular}{l rrrrr}
			\toprule
			\toprule
			{\small {\bf Params}: 227K} & {\small {\bf Train Steps}: 63K Iters} & {\small {\bf Batch}: 128} & {\small {\bf Batch Norm}: Yes} & {\small {\bf Optimizer}: SGD} & {\small {\bf Rate}: 0.2}\\
			\bottomrule
		\end{tabular}
		
		\setlength\tabcolsep{10.9pt}
		\hspace*{0.2mm}\begin{tabular}{lcccccccc}
			{\small Percent of Weights Remaining, $\lambda$} & 100 & 64 & 40.9 & 32.8 & 26.2  & 13.4 & 8.59 & 5.72 \\ \midrule
			
			{\small constant LR} ({\scriptsize 1\texttt{e}-2}) &  90.4$\pm${\scriptsize 0.4} & 89.7$\pm${\scriptsize 0.5} & 
			88.9$\pm${\scriptsize 0.7} & 88.1$\pm${\scriptsize 0.9} & 87.5$\pm${\scriptsize 0.8} & 86.0$\pm${\scriptsize 0.9} & 82.8$\pm${\scriptsize 0.9}  & 79.1$\pm${\scriptsize 0.8}  \\
			
			{\small LR decay} ({\scriptsize 3\texttt{e}-2, 63K}) & 91.2$\pm${\scriptsize 0.4} & 91.0$\pm${\scriptsize 0.3} & 
			90.3$\pm${\scriptsize 0.5} & 89.8$\pm${\scriptsize 0.4} & 89.0$\pm${\scriptsize 0.7} & 87.7$\pm${\scriptsize 0.6} & 83.9$\pm${\scriptsize 0.6} & 79.8$\pm${\scriptsize 0.7}  \\
			
			{\small cyclical LR} ({\scriptsize 0, 3\texttt{e}-2, 8K}) &  90.8$\pm${\scriptsize 0.3} & 90.4$\pm${\scriptsize 0.5} &
			90.1$\pm${\scriptsize 0.4} & 89.7$\pm${\scriptsize 0.6} & 88.2$\pm${\scriptsize 0.7} & 87.6$\pm${\scriptsize 0.8} & 84.1$\pm${\scriptsize 0.6} & 80.3$\pm${\scriptsize 0.7} \\
			
			{\small LR-warmup} ({\scriptsize 3\texttt{e}-2, 20K, 20K, 25K, Nil}) & {\bf 91.7$\pm${\scriptsize 0.2}} & 91.5$\pm${\scriptsize 0.3} & 90.8$\pm${\scriptsize 0.5} & 90.3$\pm${\scriptsize 0.4} & 89.8$\pm${\scriptsize 0.6} & 88.2$\pm${\scriptsize 0.6} & 85.9$\pm${\scriptsize 0.9} & 81.2$\pm${\scriptsize 1.1} \\
			
			{\small SILO} ({\scriptsize 3\texttt{e}-2, 4\texttt{e}-2, 20K, 20K, 25K, Nil}) & 91.7$\pm${\scriptsize 0.2} & {\bf 91.9$\pm${\scriptsize 0.4}} & 
			{\bf 91.2$\pm${\scriptsize 0.6}} & {\bf 90.8$\pm${\scriptsize 0.5}} & {\bf 90.3$\pm${\scriptsize 0.4}} & {\bf 89.7$\pm${\scriptsize 0.5}} & {\bf 87.5$\pm${\scriptsize 0.8}} & {\bf 82.7$\pm${\scriptsize 1.2}} \\
			\bottomrule
			\bottomrule
		\end{tabular}
		\label{per_resnet_20}
		\vspace{-3mm}
		\caption{Performance comparison (averaged top-1 test accuracy $\pm$ std over 5 runs) of pruning ResNet-20 on CIFAR-10 dataset using the global magnitude pruning method \cite{Blalock2020}. LR-warmup is the standard implementation used in \cite{frankle2018lottery,frankle2020linear}. }
		\label{per1_extra}
	\end{table}

\vspace*{7.7mm}\begin{table}[!ht]
		\centering
		\setlength\tabcolsep{21.3pt}
		\begin{tabular}{l rrrrr}
			\toprule
			\toprule
			{\small {\bf Params}: 139M} & {\small {\bf Train Steps}: 63K Iters} & {\small {\bf Batch}: 128} & {\small {\bf Batch Norm}: Yes} & {\small {\bf Optimizer}: SGD} & {\small {\bf Rate}: 0.2}\\
			\midrule
		\end{tabular}
		
		\setlength\tabcolsep{10.9pt}
		\hspace*{0.2mm}\begin{tabular}{lcccccccc}
			{\small Percent of Weights Remaining, $\lambda$} & 100 & 64 & 40.9 & 32.8 & 26.2  & 13.4 & 8.59 & 5.72 \\ \midrule
			
			{\small constant LR} ({\scriptsize 8\texttt{e}-3}) & 91.3$\pm${\scriptsize 0.3} & 90.5$\pm${\scriptsize 0.5} & 
			89.5$\pm${\scriptsize 0.6} & 88.8$\pm${\scriptsize 0.6}  & 87.4$\pm${\scriptsize 0.7}& 85.8$\pm${\scriptsize 0.6} & 82.2$\pm${\scriptsize 1.4} & 73.7$\pm${\scriptsize 1.3}  \\
			
			{\small LR decay} ({\scriptsize 1\texttt{e}-2, 63K}) & 92.0$\pm${\scriptsize 0.5} & 90.9$\pm${\scriptsize 0.5} & 
			90.2$\pm${\scriptsize 0.5} & 89.4$\pm${\scriptsize 0.4} & 88.6$\pm${\scriptsize 0.5} & 87.4$\pm${\scriptsize 0.6} & 83.3$\pm${\scriptsize 0.8} & 75.4$\pm${\scriptsize 0.9}  \\
			
			{\small cyclical LR} ({\scriptsize 0, 3\texttt{e}-2, 15K})  & 92.3$\pm${\scriptsize 0.6} & 91.2$\pm${\scriptsize 0.6} & 
			90.4$\pm${\scriptsize 0.4} & 89.8$\pm${\scriptsize 0.5} & 89.1$\pm${\scriptsize 0.6} & 88.6$\pm${\scriptsize 0.8} & 83.7$\pm${\scriptsize 1.0} & 75.7$\pm${\scriptsize 1.2} \\
			
			{\small LR-warmup} ({\scriptsize 1\texttt{e}-1, 10K, 32K, 48K, Nil}) & 92.2$\pm${\scriptsize 0.3} & 91.3$\pm${\scriptsize 0.2} & 90.6$\pm${\scriptsize 0.4} & 90.2$\pm${\scriptsize 0.5} & 89.8$\pm${\scriptsize 0.8} & 89.2$\pm${\scriptsize 0.8} & 84.5$\pm${\scriptsize 0.9} & 76.5$\pm${\scriptsize 1.1} \\

			{\small SILO} ({\scriptsize 4\texttt{e}-2, 6\texttt{e}-2, 10K, 32K, 48K, Nil})  & {\bf 92.6$\pm${\scriptsize 0.4}} & {\bf 91.8$\pm${\scriptsize 0.6}} & {\bf 90.9$\pm${\scriptsize 0.5}} & {\bf 90.6$\pm${\scriptsize 0.6}} & {\bf 90.3$\pm${\scriptsize 0.6}} & {\bf 89.8$\pm${\scriptsize 0.9}} & {\bf 86.1$\pm${\scriptsize 0.8}} &{\bf 78.5$\pm${\scriptsize 1.0}} \\
			
			\bottomrule
			\bottomrule
		\end{tabular}
		\vspace{-3mm}
		\caption{Performance comparison (averaged top-1 test accuracy $\pm$ std over 5 runs) of pruning VGG-19 on CIFAR-10 dataset using the global gradient pruning method \cite{Blalock2020}. LR-warmup is the standard implementation used in \cite{frankle2018lottery,frankle2020linear,liu2018rethinking}.}
		\label{per2_extra}
	\end{table}

\vspace*{7.7mm}\begin{table}[!ht]
		\centering
		\setlength\tabcolsep{18.8pt}
		\begin{tabular}{l rrrrrr}
			\toprule
			\toprule
			{\small {\bf Params}: 1M} & {\small {\bf Train Steps}: 117K Iters} & {\small {\bf Batch}: 128} & {\small {\bf Batch Norm}: Yes} & {\small {\bf Optimizer}: SGD} & {\small {\bf Pruning Rate}: 0.2} \\
			\bottomrule
		\end{tabular}
		
		\setlength\tabcolsep{10.9pt}
		\hspace*{0.2mm}\begin{tabular}{lcccccccccccc}
			{\small Percent of Weights Remaining, $\lambda$} & 100 & 64 & 40.9 & 32.8 & 26.2  & 13.4 & 8.59 & 5.72 \\ \midrule
			
			{\small constant LR} ({\scriptsize 1\texttt{e}-2}) & 73.7$\pm${\scriptsize 0.4} & 72.8$\pm${\scriptsize 0.4} & 
			71.4$\pm${\scriptsize 0.5} & 70.3$\pm${\scriptsize 0.8} & 68.1$\pm${\scriptsize 0.7}& 63.5$\pm${\scriptsize 0.6} & 60.8$\pm${\scriptsize 1.1} & 59.1$\pm${\scriptsize 1.2}  \\
			
			{\small LR decay} ({\scriptsize 4\texttt{e}-2, 117K}) & 74.3$\pm${\scriptsize 0.3} & 73.5$\pm${\scriptsize 0.9} & 
			72.2$\pm${\scriptsize 0.4} & 71.2$\pm${\scriptsize 0.8} & 69.0$\pm${\scriptsize 0.6} & 64.7$\pm${\scriptsize 0.7} & 62.6$\pm${\scriptsize 1.2} & 60.3$\pm${\scriptsize 1.4} \\
			
			{\small cyclical LR} ({\scriptsize 0, 4\texttt{e}-2, 24K})  & 74.4$\pm${\scriptsize 0.4} & 73.2$\pm${\scriptsize 0.5} & 
			72.0$\pm${\scriptsize 0.3} & 70.9$\pm${\scriptsize 0.6} & 69.4$\pm${\scriptsize 0.6} & 65.1$\pm${\scriptsize 0.9} & 63.0$\pm${\scriptsize 1.1} & 60.8$\pm${\scriptsize 1.3} \\
			
			{\small LR-warmup} ({\scriptsize 12\texttt{e}-2, 58K, 58K, 92K, Nil}) & 74.6$\pm${\scriptsize 0.5} & 73.4$\pm${\scriptsize 0.6} & 72.3$\pm${\scriptsize 0.4} & 71.5$\pm${\scriptsize 0.7} & 69.6$\pm${\scriptsize 0.8} & 65.8$\pm${\scriptsize 0.9} & 63.9$\pm${\scriptsize 1.0} & 61.2$\pm${\scriptsize 0.9} \\
			
			{\small SILO} ({\scriptsize 7\texttt{e}-2, 5\texttt{e}-2, 58K, 58K, 92K, Nil})   & {\bf 75.0$\pm${\scriptsize 0.5}} & {\bf 74.1$\pm${\scriptsize 0.3}} & 
			{\bf 72.9$\pm${\scriptsize 0.6}} & {\bf 72.4$\pm${\scriptsize 0.7}} & {\bf 70.8$\pm${\scriptsize 0.8}} & {\bf 67.6$\pm${\scriptsize 1.2}} & {\bf 65.7$\pm${\scriptsize 1.1}} &{\bf 63.7$\pm${\scriptsize 1.0}} \\
			\bottomrule
			\bottomrule
		\end{tabular}
		\vspace{-3mm}
		\caption{Performance comparison (averaged top-1 test accuracy $\pm$ standard deviation over 5 runs) of pruning DenseNet-40 on CIFAR-100 dataset using LAMP. LR-warmup is adapted from the standard implementation used in \cite{zhao2019variational,huang2017densely}.}
		\label{per3_extra}
	\end{table}

\begin{table}[!ht]
		\centering
		\setlength\tabcolsep{18.8pt}
		\begin{tabular}{l rrrrrr}
			\toprule
			\toprule
			{\small {\bf Params}: 2.36M} & {\small {\bf Train Steps}: 78K Iters} & {\small {\bf Batch}: 128} & {\small {\bf Batch Norm}: Yes} & {\small {\bf Optimizer}: SGD} & {\small {\bf Pruning Rate}: 0.2} \\
			\bottomrule
		\end{tabular}
		
		\setlength\tabcolsep{11.2pt}
		\hspace*{0.2mm}\begin{tabular}{lcccccccccccc}
			{\small Percent of Weights Remaining, $\lambda$} & 100 & 64 & 40.9 & 32.8 & 26.2  & 13.4 & 8.59 & 5.72 \\ \midrule
			
			{\small constant LR} ({\scriptsize 1\texttt{e}-2}) & 72.7$\pm${\scriptsize 0.2} & 71.5$\pm${\scriptsize 0.4} & 
			70.4$\pm${\scriptsize 0.5} & 69.8$\pm${\scriptsize 1.1} & 68.2$\pm${\scriptsize 0.9}& 64.5$\pm${\scriptsize 0.6} & 63.8$\pm${\scriptsize 1.1} & 62.1$\pm${\scriptsize 1.2}  \\
			
			{\small LR decay} ({\scriptsize 15\texttt{e}-1, 78K}) & 73.3$\pm${\scriptsize 0.3} & 72.1$\pm${\scriptsize 0.3} & 
			71.8$\pm${\scriptsize 0.4} & 70.9$\pm${\scriptsize 1.0} & 69.4$\pm${\scriptsize 0.6} & 65.9$\pm${\scriptsize 0.7} & 65.1$\pm${\scriptsize 0.8} & 64.0$\pm${\scriptsize 1.1} \\
			
			{\small cyclical LR} ({\scriptsize 0, 5\texttt{e}-2, 14K})  & 73.5$\pm${\scriptsize 0.4} & 72.3$\pm${\scriptsize 0.5} & 
			72.0$\pm${\scriptsize 0.3} & 71.5$\pm${\scriptsize 0.7} & 69.6$\pm${\scriptsize 0.6} & 66.7$\pm${\scriptsize 0.9} & 65.3$\pm${\scriptsize 1.0} & 64.3$\pm${\scriptsize 1.2} \\
			
			{\small LR-warmup} ({\scriptsize 1\texttt{e}-1, 23K, 23K, 46K, 62K}) & 73.7$\pm${\scriptsize 0.4} & 72.5$\pm${\scriptsize 0.4} & 72.3$\pm${\scriptsize 0.5} & 72.1$\pm${\scriptsize 0.8} & 70.5$\pm${\scriptsize 0.9} & 67.3$\pm${\scriptsize 0.8} & 66.2$\pm${\scriptsize 1.1} & 64.8$\pm${\scriptsize 1.5} \\
			
			{\small SILO} ({\scriptsize 5\texttt{e}-2, 5\texttt{e}-2, 23K, 23K, 46K, 92K})   & {\bf 74.0$\pm${\scriptsize 0.5}} & {\bf 73.0$\pm${\scriptsize 0.3}} & {\bf 72.9$\pm${\scriptsize 0.8}} & {\bf 72.5$\pm${\scriptsize 0.6}}  & {\bf 71.0$\pm${\scriptsize 0.7}} & {\bf 68.8$\pm${\scriptsize 0.8}} & {\bf 67.6$\pm${\scriptsize 1.2}} & {\bf 66.8$\pm${\scriptsize 1.4}} \\
			\bottomrule
			\bottomrule
		\end{tabular}
		\vspace{-3mm}
		\caption{Performance comparison (averaged top-1 test accuracy $\pm$ standard deviation over 5 runs) of pruning MobileNetV2 on CIFAR-100 dataset using LAP. LR-warmup is adapted from the standard implementation used in  \cite{chin2020towards}.}
		\label{per3_extra_ex}

	\end{table}

	\begin{table}[!ht]
		\centering
		\setlength\tabcolsep{21.3pt}
		\begin{tabular}{l rrrrr}
			\toprule
			\toprule
			{\small {\bf Params}: 25.5M} & {\small {\bf Train Steps}: 70K Iters} & {\small {\bf Batch}: 128} & {\small {\bf Batch Norm}: Yes} &  {\small {\bf Optimizer}: SGD} & {\small {\bf Rate}: 0.2} \\
			\midrule
		\end{tabular}
		
		\setlength\tabcolsep{11.1pt}
		\hspace*{0.2mm}\begin{tabular}{lcccccccc}
			{\small Percent of Weights Remaining, $\lambda$} & 100 & 64 & 40.9 & 32.8 & 26.2  & 13.4 & 8.59 & 5.72 \\ \midrule
			
			{\small constant LR} ({\scriptsize 1\texttt{e}-2}) & 75.3$\pm${\scriptsize 0.2} & 75.2$\pm${\scriptsize 0.3} & 
			74.6$\pm${\scriptsize 0.7} & 74.2$\pm${\scriptsize 0.8}   & 73.9$\pm${\scriptsize 0.7} & 72.2$\pm${\scriptsize 0.9} & 70.5$\pm${\scriptsize 0.6} & 69.2$\pm${\scriptsize 0.9} \\
			
			{\small LR decay} ({\scriptsize 3\texttt{e}-2, 70K} & 76.5$\pm${\scriptsize 0.2} & 76.1$\pm${\scriptsize 0.5} & 
			75.8$\pm${\scriptsize 0.6}  & 75.6$\pm${\scriptsize 0.5}   & 75.1$\pm${\scriptsize 0.5} & 73.9$\pm${\scriptsize 0.7}  & 72.7$\pm${\scriptsize 0.8}  & 70.5$\pm${\scriptsize 0.6} \\
			
			{\small cyclical LR} ({\scriptsize 0, 5\texttt{e}-2, 20K}) & 76.8$\pm${\scriptsize 0.3} & 76.9$\pm${\scriptsize 0.5} & 77.0$\pm${\scriptsize 0.6}  & 76.5$\pm${\scriptsize 0.5}   & 75.5$\pm${\scriptsize 0.6} & 74.5$\pm${\scriptsize 0.6} & 73.4$\pm${\scriptsize 0.8} & 71.2$\pm${\scriptsize 0.7}  \\
			
			{\small LR-warmup} ({\scriptsize 1\texttt{e}-1, 4K, 23K, 46K, 62K}) & 77.0$\pm${\scriptsize 0.1} & 77.2$\pm${\scriptsize 0.2} & 76.9$\pm${\scriptsize 0.5} & 76.6$\pm${\scriptsize 0.2} &  75.8 $\pm${\scriptsize 0.3} & 75.2$\pm${\scriptsize 0.4} & 73.8$\pm${\scriptsize 0.5} & 71.5$\pm${\scriptsize 0.4} \\
			
			{\small SILO} ({\scriptsize 5\texttt{e}-2, 5\texttt{e}-2, 4K, 23K, 46K, 62K}) &  {\bf 77.2$\pm${\scriptsize 0.2}} & {\bf 77.4$\pm${\scriptsize 0.3}} & {\bf 77.0$\pm${\scriptsize 0.4}}  & {\bf 76.8$\pm${\scriptsize 0.4}}   & {\bf 76.1$\pm${\scriptsize 0.7}} & 
			{\bf 75.8$\pm${\scriptsize 0.6}} & {\bf 75.2$\pm${\scriptsize 0.8}} & {\bf 73.8$\pm${\scriptsize 0.6}} \\
			\bottomrule
			\bottomrule
		\end{tabular}
		\vspace{-3mm}
		\caption{Performance comparison (averaged top-1 test accuracy $\pm$ standard deviation over 5 runs) of pruning ResNet-50 on ImageNet dataset using the IMP pruning method \cite{tinyimagenet}. LR-warmup is adapted from the standard implementation used in \cite{frankle2020linear,renda2020comparing}.}
		\label{per4_extra}
	\end{table}

\begin{table}[!ht]
		\centering
		\setlength\tabcolsep{21.3pt}
		\begin{tabular}{l rrrrr}
			\toprule
			\toprule
			{\small {\bf Params}: 86M} & {\small {\bf Train Steps}: 2K Iters} & {\small {\bf Batch}: 1024} & {\small {\bf Batch Norm}: Yes} &  {\small {\bf Optimizer}: Adam} & {\small {\bf Rate}: 0.2} \\
			\midrule
		\end{tabular}
		
		\setlength\tabcolsep{9.8pt}
		\hspace*{0.2mm}\begin{tabular}{lcccccccc}
			{\small Percent of Weights Remaining, $\lambda$} & 100 & 64 & 40.9 & 32.8 & 26.2  & 13.4 & 8.59 & 5.72 \\ \midrule
			
			{\small constant LR} ({\scriptsize 5\texttt{e}-5}) & 97.4$\pm${\scriptsize 0.2} & 97.6 $\pm${\scriptsize 0.3} & 97.5 $\pm${\scriptsize 0.4} & 96.4 $\pm${\scriptsize 0.5} & 96.0$\pm${\scriptsize 0.7} & 86.3$\pm${\scriptsize 0.6} & 83.0$\pm${\scriptsize 0.9} & 80.1$\pm${\scriptsize 0.8} \\
			
			{\small cosine decay} ({\scriptsize 1\texttt{e}-4, 2K} & 98.0$\pm${\scriptsize 0.3} & 98.2 $\pm$ {\scriptsize 0.3} & 97.9 $\pm$ {\scriptsize 0.4} & 97.2 $\pm$ {\scriptsize 0.2} & 96.5$\pm${\scriptsize 0.6} & 87.7$\pm${\scriptsize 0.5} & 
			84.1$\pm${\scriptsize 1.0} & 81.6$\pm${\scriptsize 1.1}  \\
			
			{\small cyclical LR} ({\scriptsize 0, 1\texttt{e}-4, 2K}) & 97.8$\pm${\scriptsize 0.4} & 98.0 $\pm$ {\scriptsize 0.2} & 97.8 $\pm$ {\scriptsize 0.3} & 97.0 $\pm$ {\scriptsize 0.2} & 96.5$\pm${\scriptsize 0.6} & 87.2$\pm${\scriptsize 0.9} & 
			83.4$\pm${\scriptsize 0.6} & 81.0$\pm${\scriptsize 1.1} \\
			
			{\small LR-warmup} ({\scriptsize 4\texttt{e}-4, 0.3K, 0.5K, 0.9K, 1.3K}) & 98.0$\pm${\scriptsize 0.3} & 98.4 $\pm${\scriptsize 0.3} & 97.8 $\pm${\scriptsize 0.5} & 97.3 $\pm${\scriptsize 0.6} & 96.8$\pm${\scriptsize 0.7} & 88.1$\pm${\scriptsize 0.9} & 84.4$\pm${\scriptsize 0.8} & 82.1$\pm${\scriptsize 0.9} \\
			
			{\small SILO} ({\scriptsize 1\texttt{e}-4, 3\texttt{e}-4, 0.3K, 0.5K, 0.9K, 1.3K}) &  {\bf 98.0$\pm${\scriptsize 0.3}} & {\bf 98.5$\pm${\scriptsize 0.4}} & {\bf 98.0$\pm${\scriptsize 0.3}} & {\bf 97.7$\pm${\scriptsize 0.5}} & {\bf 97.4$\pm${\scriptsize 0.6}} & {\bf 89.2$\pm${\scriptsize 0.8}} & {\bf 85.5$\pm${\scriptsize 0.9}} & {\bf 83.4$\pm${\scriptsize 0.8}} \\
			\bottomrule
			\bottomrule
		\end{tabular}
		\vspace{-3mm}
		\caption{Performance comparison (averaged top-1 test accuracy $\pm$ standard deviation over 5 runs) of pruning pruning Vision Transformer (ViT-B-16) on CIFAR-10 using IMP \cite{frankle2018lottery}. Cosine decay is the standard implementation used in \cite{dosovitskiy2020image}.}
		\label{per5_extra}
		
	\end{table}
	
\end{landscape}

\end{document}